	\renewcommand{\cite}[1]{\citep{#1}}
	\renewcommand{\@algocf@capt@plain}{above}
	\Crefname{algocf}{Algorithm}{Algorithms}
\begin{document}

\title{Algorithms for the preordering problem and their application to the task of jointly clustering and ordering the accounts of a social network}
\author{Jannik Irmai$^a$, Maximilian Moeller$^a$, Bjoern Andres$^{a,b,1}$}
\date{\begin{minipage}{\columnwidth}\centering
	$^a$Faculty of Computer Science, TU Dresden, Germany\\
	$^b$Center for Scalable Data Analytics and AI (ScaDS.AI) Dresden/Leipzig, Germany\\
	\end{minipage}}

\footnotetext[1]{Correspondence: bjoern.andres@tu-dresden.de}
\setcounter{footnote}{1}

\maketitle

\begin{abstract}
The \np-hard maximum value preordering problem is both a joint relaxation and a hybrid of the clique partition problem (a clustering problem) and the partial ordering problem. 
Toward approximate solutions and lower bounds, we introduce a linear-time $4$-approximation algorithm that constructs a maximum dicut of a subgraph and define local search heuristics.
Toward upper bounds, we tighten a linear program relaxation by the class of odd closed walk inequalities that define facets, as we show, of the preorder polytope.
We contribute implementations\footnotemark[2] of the algorithms, apply these to the task of jointly clustering and partially ordering the accounts of published social networks, and compare the output and efficiency qualitatively and quantitatively.
\footnotetext[2]{Source code available at: \url{https://github.com/JannikIrmai/preordering-problem}}
\end{abstract}

\tableofcontents

\section{Introduction}
\label{sec:introduction}

We set out to find a binary relation on the set of accounts of a social network.
Here, the notion that one account \emph{follows, likes} or \emph{reacts to} another account need neither be symmetric nor asymmetric.
In particular, $i$ following $j$ does not need to imply that $j$ follows $i$, nor does it necessarily imply that $j$ does not follow $i$.
Clustering does not capture the asymmetric subsets of the relation on the social network because the equivalence relation that characterizes the clustering is symmetric.
Similarly, partial ordering does not capture the symmetric subset of the relation because partial orders are antisymmetric.
Like in clustering and partial ordering, we work with the assumption\footnotemark[3] that the relation on the social network is transitive, i.e., if $i$ follows $j$ and $j$ follows $k$ then $i$ follows $k$.
Unlike in clustering and partial ordering, we do not assume symmetry or antisymmetry.
\footnotetext[3]{This assumption simplifies and possibly contradicts reality, and we quantify the deviation empirically.}
To model our assumption, we apply the preordering problem:

\begin{definition}\cite{wakabayashi1998complexity}
\label{definition:preordering-problem}
Given a finite set $V$ and a (value) $c_{ij} \in \mathbb{R}$ for every pair $ij$ of distinct $i,j \in V$, the \emph{(maximum value) preordering problem} consists in finding a preorder $\lesssim$ on $V$ so as to maximize the sum of the (values) $c_{ij}$ of those pairs $ij \in V^2$ with $i \neq j$ for which $i \lesssim j$.
\end{definition}

If a preorder is symmetric, it is an equivalence relation and therefore characterizes a clustering.
If a preorder is antisymmetric, it is a partial order.
In fact, the preordering problem is a joint relaxation of the clique partition problem (a clustering problem) and the partial ordering problem.
Moreover, every preorder defines, and is characterized by, a clustering and a partial order:
A clustering is defined by the symmetric subset of the preorder (an equivalence relation).
A strict partial order \emph{on the set of clusters} is well-defined by the asymmetric subset of the preorder.
(An example is depicted in \Cref{fig:twitter-example}.)
Thus, the preordering problem is also a hybrid of a clustering problem and a partial ordering problem.

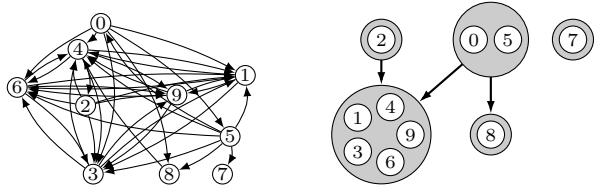
\begin{figure}
    \centering
    \begin{tikzpicture}[yscale=0.55]
    \node[vertex] (0) at (2.1, 5.00) {\scriptsize 0};
    \node[vertex] (1) at (4.00, 3.62) {\scriptsize 1};
    \node[vertex] (3) at (2.0, 1) {\scriptsize 3};
    \node[vertex] (4) at (1.8, 4.31) {\scriptsize 4};
    \node[vertex] (5) at (3.8, 2) {\scriptsize 5};
    \node[vertex] (6) at (1, 3.3) {\scriptsize 6};
    \node[vertex] (8) at (3, 1) {\scriptsize 8};
    \node[vertex] (9) at (3.1, 3.1) {\scriptsize 9};
    \node[vertex] (2) at (1.9, 2.82) {\scriptsize 2};
    \node[vertex] (7) at (3.7, 1) {\scriptsize 7};
    \draw[-latex] (0) edge[bend left=10] (1);
    \draw[-latex] (0) edge[bend left=10] (3);
    \draw[-latex] (0) edge[bend left=10] (4);
    \draw[-latex] (0) edge[bend left=10] (5);
    \draw[-latex] (0) edge[bend right=10] (6);
    \draw[-latex] (0) edge[bend left=10] (8);
    \draw[-latex] (1) edge[bend left=10] (3);
    \draw[-latex] (1) edge[bend left=10] (4);
    \draw[-latex] (1) edge[bend left=10] (6);
    \draw[-latex] (1) edge[bend left=10] (9);
    \draw[-latex] (3) edge[bend left=10] (1);
    \draw[-latex] (3) edge[bend left=10] (4);
    \draw[-latex] (3) edge[bend left=10] (6);
    \draw[-latex] (3) edge[bend left=10] (9);
    \draw[-latex] (4) edge[bend left=10] (1);
    \draw[-latex] (4) edge[bend left=10] (2);
    \draw[-latex] (4) edge[bend left=10] (3);
    \draw[-latex] (4) edge[bend left=10] (6);
    \draw[-latex] (4) edge[bend left=10] (9);
    \draw[-latex] (5) edge[bend left=15] (0);
    \draw[-latex] (5) edge[bend right=10] (1);
    \draw[-latex] (5) edge[bend left=10] (3);
    \draw[-latex] (5) edge[bend left=10] (4);
    \draw[-latex] (5) edge[bend left=20] (6);
    \draw[-latex] (5) edge[bend left=10] (7);
    \draw[-latex] (5) edge[bend left=10] (8);
    \draw[-latex] (6) edge[bend left=10] (1);
    \draw[-latex] (6) edge[bend left=10] (3);
    \draw[-latex] (6) edge[bend left=10] (4);
    \draw[-latex] (6) edge[bend left=10] (9);
    \draw[-latex] (8) edge[bend left=5] (4);
    \draw[-latex] (9) edge[bend left=10] (1);
    \draw[-latex] (9) edge[bend left=10] (3);
    \draw[-latex] (9) edge[bend left=10] (4);
    \draw[-latex] (9) edge[bend left=10] (6);
    \draw[-latex] (2) edge[bend left=10] (1);
    \draw[-latex] (2) edge[bend left=5] (3);
    \draw[-latex] (2) edge[bend left=10] (4);
    \draw[-latex] (2) edge[bend left=10] (6);
    \draw[-latex] (2) edge[bend left=10] (9);
    \end{tikzpicture}%
    \hspace{5ex}
    \begin{tikzpicture}
    \def\r{0.18}
    \node[cluster, minimum size=5.500*\r cm] (C0) at (11.000*\r, 9.000*\r) {};
    \node[cluster, minimum size=7.253*\r cm] (C1) at (3.000*\r, 2.000*\r) {};
    \node[cluster, minimum size=3.000*\r cm] (C2) at (3.000*\r, 9.000*\r) {};
    \node[cluster, minimum size=3.000*\r cm] (C3) at (17.000*\r, 9.000*\r) {};
    \node[cluster, minimum size=3.000*\r cm] (C4) at (11.000*\r, 2.000*\r) {};
    \node[vertex, fill=white, shift=(0.000:1.250*\r cm), minimum size=2.0*\r cm] (5) at (C0) {\scriptsize 5};
    \node[vertex, fill=white, shift=(180.000:1.250*\r cm), minimum size=2.0*\r cm] (0) at (C0) {\scriptsize 0};
    \node[vertex, fill=white, shift=(0.000:2.127*\r cm), minimum size=2.0*\r cm] (9) at (C1) {\scriptsize 9};
    \node[vertex, fill=white, shift=(72.000:2.127*\r cm), minimum size=2.0*\r cm] (4) at (C1) {\scriptsize 4};
    \node[vertex, fill=white, shift=(144.000:2.127*\r cm), minimum size=2.0*\r cm] (1) at (C1) {\scriptsize 1};
    \node[vertex, fill=white, shift=(216.000:2.127*\r cm), minimum size=2.0*\r cm] (3) at (C1) {\scriptsize 3};
    \node[vertex, fill=white, shift=(288.000:2.127*\r cm), minimum size=2.0*\r cm] (6) at (C1) {\scriptsize 6};
    \node[vertex, fill=white, shift=(0.000:0.000*\r cm), minimum size=2.0*\r cm] (2) at (C2) {\scriptsize 2};
    \node[vertex, fill=white, shift=(0.000:0.000*\r cm), minimum size=2.0*\r cm] (7) at (C3) {\scriptsize 7};
    \node[vertex, fill=white, shift=(0.000:0.000*\r cm), minimum size=2.0*\r cm] (8) at (C4) {\scriptsize 8};
    \draw[cluster_arc] (C0) -- (C1);
    \draw[cluster_arc] (C0) -- (C4);
    \draw[cluster_arc] (C2) -- (C1);
\end{tikzpicture}
    \caption{Depicted on the left is a Twitter ego-network from \citet{leskovec2012learning}. 
    Depicted on the right is a preorder on the accounts of this network consisting of a clustering (gray circles) and a partial order on the set of clusters (black edges).
	This preorder is an optimal solution to the instance of the preordering problem (\Cref{definition:preordering-problem}) defined such that $c_{ij} = 1$ if the pair $ij$ exists in the network, and $c_{ij} = -1$ otherwise.}
    \label{fig:twitter-example}
\end{figure}

In this article, we define algorithms for the preordering problem.
To obtain approximate solutions and lower bounds, we introduce a linear-time $4$-approximation algorithm and local search heuristics.
To obtain upper bounds, we consider a linear program (LP) relaxation that we tighten by the class of odd closed walk inequalities that define facets, as we show, of the preorder polytope.
We contribute implementations\footnotemark[2] of the algorithms we discuss, apply these to social networks from \citet{fink2023centrality} and \citet{leskovec2012learning} with up to $10^7$ edges and compare the output and efficiency qualitatively and quantitatively.

\section{Related work}
\label{sec:related-work}

The preordering problem is formulated by \citet{wakabayashi1998complexity} who establishes \np-hardness even for values in $\order(\lvert V \rvert^6)$ and discusses the complexity of many specializations.
Her hardness result for the general problem is strengthened by \citet{weller2012making} who prove \np-hardness even for values in $\{-1,1\}$.
A local search algorithm that greedily decides whether or not to relate pairs is introduced by \citet{bocker2009optimal}.
We reproduce this algorithm in \Cref{sec:gaf}.
The geometry of the preorder polytope is discussed briefly by \citet{gurgel1992poliedros}.

Closely related to preordering but studied independently in the literature is the transitivity editing problem \cite{jacob2008detecting,bocker2009optimal,weller2012making}.
Given a digraph, the objective is to find a minimum set of arcs to be added or removed so as to make the digraph transitive.
This problem is equivalent to the preordering problem with values $c_{ij} = 1$ if $ij$ is an arc in the digraph, and $c_{ij} = -1$ otherwise.
Even this restriction to values in $\{-1, 1\}$ is \np-hard \cite{weller2012making}.
The transitivity editing problem is studied predominantly from the perspective of fixed-parameter tractability; efficient algorithms for the case where the total number of arcs to be added or removed is bounded are developed by \citet{bocker2009optimal,weller2012making}.

Preordering is related to correlation clustering \cite{emanuel2003correlation,bansal2004correlation,demaine2006correlation}, more specifically, to maximum value weighted correlation clustering for complete graphs, which is also known as clique partitioning \cite{grotschel1989cutting}, where values $c_{ij}$ are associated with unordered pairs $\{i,j\}$, and the task is to find an equivalence relation $\sim$ on $V$ so as to maximize the sum of the values of those pairs $\{i,j\}$ for which $i \sim j$.
This problem is more specific than preordering in that equivalence relations are precisely the symmetric preorders. 
The complexity and hardness of approximation of correlation clustering have been studied extensively \citep{swamy2004correlation,charikar2005clustering,chawla2015near,veldt2022correlation,cohen2022correlation,klein2023correlation}.
In particular, it is known that clique partitioning cannot be approximated to within $n^{1-\epsilon}$ unless \textsc{P}$=$\np{} for $n$ the number of elements and any $\epsilon > 0$ \citep{bachrach2013optimal,zuckerman2006linear}.
Much work has been devoted also to the study of the clique partitioning polytope
\cite{grotschel1990facets,grotschel1990composition,deza1991complete,deza1992clique,chopra1995facets,bandelt1999lifting,oosten2001clique,sorensen2002note,letchford2025new}.
Correlation clustering has many applications, including community detection in social networks \citep{brandes2007modularity,veldt2018correlation} and image analysis \citep{yarkony2012fast,beier2015fusion,aflalo2023deepcut,abbas2023clusterfug}.
Variants that emphasize local errors are studied by \citet{puleo2016correlation,kalhan2019correlation,ahmadian2020fair}.

Preordering is related also to partial ordering \citep{muller1996partial} where the task is to find a partial order $\leq$ on $V$ so as to maximize the sum of the values of those pairs $ij$ for which $i \leq j$.
Partial ordering is more specific than preordering in that partial orders are precisely the antisymmetric preorders.
The polyhedral geometry of partial orders is studied by \citet{muller1996partial}.
Also this problem has received less attention so far than even more constrained problems, like the linear ordering problem \citep{grotschel1984cutting,marti2011linear,ceberio2015linear}, the closely related rank aggregation problem \citep{ailon2008aggregating,schalekamp2009rank} and the feedback arc set problem on tournament graphs \citep{karpinski2010faster}.
The geometry of several order polytopes is studied by \citet{doignon2001facets,doignon2016primary}.
The understanding of the computational complexity of finding a maximum value order for various types of orders is improved by \citet{hudry2008np,hudry2012computation} who shows that some problems remain \np-hard for restricted value functions.

A joint relaxation of correlation clustering and \emph{linear} ordering is the bucket ordering problem, sometimes called the weak ordering problem, that asks for both a clustering and a \emph{linear} order on the clusters \citep{gurgel1997adjacency,fiorini2003combinatorial,fiorini2004weak,fiorini20060}.
This problem has been applied to the tasks of dating archeological sites \citep{gionis2006algorithms}, finding consensus among voters \cite{aledo2018approaching,aledo2021highly} and constructing user recommendations \citep{jurewicz2023catalog}.

\begin{figure}
    \centering
    \begin{tikzpicture}
    
    \node[vertex] (0) at (1, 2.4) {\scriptsize 0};
    \node[vertex] (1) at (-0.2, 1.5) {\scriptsize 1};
    \node[vertex] (2) at (2.2, 1.5) {\scriptsize 2};
    \node[vertex] (3) at (1, 0.8) {\scriptsize 3};
    \node[vertex] (4) at (1, 0) {\scriptsize 4};

    \def\sep{0.1pt}

    \draw[-latex] (0) edge[bend right=5]  node[auto, inner sep=\sep] {\scriptsize 3}(1);
    \draw[-latex] (0) edge[bend left=15]  node[auto, inner sep=\sep] {\scriptsize 3}(3);
    \draw[-latex] (1) edge[bend left=25]  node[auto, inner sep=\sep] {\scriptsize 4}(0);
    \draw[-latex] (1) edge[bend left=20]  node[auto, inner sep=3*\sep] {\scriptsize -1}(2);
    \draw[-latex] (1) edge[bend left=5]  node[auto, inner sep=\sep] {\scriptsize -1}(3);
    \draw[-latex] (1) edge[bend right=15]  node[auto, inner sep=\sep] {\scriptsize 1}(4);
    \draw[-latex] (2) edge[bend right=20]  node[auto, swap, inner sep=\sep] {\scriptsize -1}(0);
    \draw[-latex] (2) edge[bend left=10]  node[auto, inner sep=\sep] {\scriptsize 2}(3);
    \draw[-latex] (2) edge[bend left=30]  node[auto, inner sep=\sep] {\scriptsize 1}(4);
    \draw[-latex] (3) edge[bend left=15]  node[auto, inner sep=\sep] {\scriptsize -1}(0);
    \draw[-latex] (3) edge[bend left=15]  node[auto, inner sep=\sep] {\scriptsize 1}(4);
    \draw[-latex] (4) edge[bend right=35]  node[auto, inner sep=\sep] {\scriptsize -1}(0);
    \draw[-latex] (4) edge[bend left=35]  node[auto, inner sep=\sep] {\scriptsize -1}(1);
    \draw[-latex] (4) edge[bend right=10]  node[auto, inner sep=\sep] {\scriptsize 2}(2);
    \draw[-latex] (4) edge[bend left=20]  node[auto, inner sep=\sep] {\scriptsize -1}(3);

\end{tikzpicture}%
    \hspace{5ex}
    \begin{tikzpicture}
    
    \node[vertex] (0) at (1, 2.4) {\scriptsize 0};
    \node[vertex] (1) at (-0.2, 1.5) {\scriptsize 1};
    \node[vertex] (2) at (2.2, 1.5) {\scriptsize 2};
    \node[vertex] (3) at (1, 0.8) {\scriptsize 3};
    \node[vertex] (4) at (1, 0) {\scriptsize 4};

    \draw[-latex] (0) edge[bend left=20] (1);
    \draw[-latex] (0) edge (3);
    \draw[-latex] (0) edge[bend left=20] (4);
    \draw[-latex] (1) edge[bend left=20] (0);
    \draw[-latex] (1) edge (3);
    \draw[-latex] (1) edge (4);
    \draw[-latex] (2) edge (3);
    \draw[-latex] (2) edge[bend left=15] (4);
    \draw[-latex] (3) edge (4);
\end{tikzpicture}%
    \hspace{5ex}
    \begin{tikzpicture}
    
    \node[cluster, minimum size=0.8cm] (C01) at (0.4, 2) {};
    \node[cluster, minimum size=0.4cm] (C2) at (1.8, 2) {};
    \node[cluster, minimum size=0.4cm] (C3) at (1, 1) {};
    \node[cluster, minimum size=0.4cm] (C4) at (1, 0) {};
    \node[vertex, fill=white, xshift=-0.2cm] (0) at (C01) {\scriptsize 0};
    \node[vertex, fill=white, xshift=0.2cm] (1) at (C01) {\scriptsize 1};
    \node[vertex, fill=white] (2) at (C2) {\scriptsize 2};
    \node[vertex, fill=white] (3) at (C3) {\scriptsize 3};
    \node[vertex, fill=white] (4) at (C4) {\scriptsize 4};

    \draw[cluster_arc] (C01) edge (C3);
    \draw[cluster_arc] (C2) edge (C3);
    \draw[cluster_arc] (C3) edge (C4);

\end{tikzpicture}
    \caption{Depicted on the left is an instance of the preordering problem with $\lvert V \rvert = 5$. 
    The value $c_{ij}$ is written next to the arc from $i$ to $j$, and arcs that are not depicted have a value of $0$.
    Depicted in the middle is an optimal solution to the preordering problem.
    The value of this solution is $\maxpo(c) = 14$.
    The upper bound is $B(c) = 17$ and thus, the transitivity index is $T(c) = \frac{14}{17}$.
    Depicted on the right is the same preorder as a partial order on clusters (equivalence classes).
    Here and in all subsequent figures, partial orders are depicted by their unique transitive reduction.
    E.g., the arc $(2, 4)$ is not shown because it is implied by the arcs $(2, 3)$ and $(3, 4)$.}
    \label{fig:example}
\end{figure}
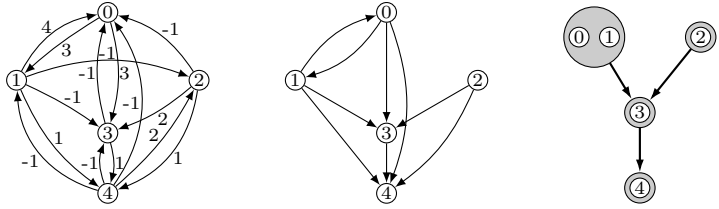

\section{Preordering problem}
\label{sec:problem-statement}

Throughout this article, consider some finite, non-empty set $V$, e.g.~the accounts of a social network.
For simplicity and conciseness, let $V_n$ denote the set of all $n$-tuples of pairwise distinct elements of $V$.
Refer to the elements of $V$ and $V_2$ as \emph{nodes} and \emph{arcs}, respectively.

We cast the preordering problem (\Cref{definition:preordering-problem}) with respect to $V$ and values $c\colon V_{2}\to \R$ as an integer linear program (ILP):
\begin{align}
	\max \quad & \sum_{ij \in V_2} c_{ij} x_{ij} \label{eq:objective} \\
	\textrm{subject~to} \quad & x_{ij} \in \{0,1\} && \forall\; ij \in V_2 \label{eq:binary} \\
	& x_{ij} + x_{jk} - x_{ik} \leq 1 && \forall\; ijk \in V_3  \label{eq:triangle}
\end{align}

An example is depicted in \Cref{fig:example}.
Let $\maxpo(c)$ denote the objective value of an optimal solution.
For convenience, define $x_{ii} = 1$ and $c_{ii} = 0$ for all $i \in V$.
Note that the map from $x$ to $\mathord{\lesssim_{x}} \coloneqq \{ ij \in V^2 \mid x_{ij} = 1 \}$ is a bijection from the feasible solutions of the ILP \eqref{eq:objective}--\eqref{eq:triangle} to the set of preorders on $V$.
Indeed, the objective value of a feasible solution $x$ to the ILP \eqref{eq:objective}--\eqref{eq:triangle}  is equal to the value of $\lesssim_{x}$ according to \Cref{definition:preordering-problem}.
See \Cref{sec:discussion-reflexivity} for further details.

\begin{theorem}[\citet{weller2012making}]\label{thm:np-hard}
	The preordering problem is \np-hard even for $c\colon V_2 \to \{-1, 1\}$.
\end{theorem}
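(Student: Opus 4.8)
The plan is to reduce from the \textsc{NP}-hard minimum feedback arc set problem. Let $D = (V, A)$ be a digraph, which we may assume is simple (has no two antiparallel arcs), since minimum feedback arc set remains \textsc{NP}-hard in this case; write $m = |A|$ and let $\mathrm{fas}(D)$ be the minimum number of arcs whose deletion makes $D$ acyclic, so that $m - \mathrm{fas}(D)$ is the maximum, over linear orders of $V$, of the number of \emph{forward} arcs. I would define the preordering instance on the node set $V$ by $c_{ij} = 1$ if $(i,j) \in A$, $c_{ij} = -1$ if $(j,i) \in A$, and $c_{ij} = 0$ otherwise; thus $c$ takes values in $\{-1,0,1\}$ and satisfies $c_{ij} + c_{ji} = 0$ for every $ij \in V_2$. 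The claim to establish is
\[
    \maxpo(c) = m - 2\,\mathrm{fas}(D) .
\]
Since computing $\mathrm{fas}(D)$ is \textsc{NP}-hard, this proves the theorem: a polynomial-time algorithm for the preordering problem would decide whether $\mathrm{fas}(D) \le k$ for every $k$. (With a little more care the construction can be made to use only the values $\{-1,1\}$, by reducing instead from minimum feedback arc set on tournaments.)

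The inequality $\maxpo(c) \ge m - 2\,\mathrm{fas}(D)$ is easy: a linear order $\sigma$ of $V$ is in particular a preorder, and its objective value equals (number of forward arcs) $-$ (number of backward arcs) $= (m-b) - b = m - 2b$, where $b$ is the number of arcs backward in $\sigma$; choosing $\sigma$ with $b = \mathrm{fas}(D)$ gives the bound.

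For the converse I would first dispose of the clustering. Given a preorder $\lesssim$ with set of equivalence classes $\mathcal{C}$ and induced strict partial order $\prec$ on $\mathcal{C}$, the pairs of nodes in a common class $C$ contribute $\sum_{\{i,j\} \subseteq C} (c_{ij} + c_{ji}) = 0$, so $\lesssim$ has the same objective value as the strict partial order $\prec'$ on $V$ defined by $i \prec' j \Leftrightarrow [i] \prec [j]$. Hence it suffices to show that no strict partial order on $V$ beats the best linear order. Writing $R$, $C$, $N$ for the numbers of arcs of $D$ that a given strict partial order respects (orients in agreement), contradicts, and leaves between incomparable endpoints, we have $R + C + N = m$ and objective value $R - C = m - N - 2C$, so the required bound is exactly $\mathrm{fas}(D) \le C + N/2$.

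Proving $\mathrm{fas}(D) \le C + N/2$ for every strict partial order is the main obstacle. The obvious attempt -- extend the partial order to a linear order, in which every contradicted arc is backward -- only yields $\mathrm{fas}(D) \le C + N$; and one cannot patch this by greedily adding a missing comparison, since transitively closing $\prec \cup \{(u,v)\}$ can turn many incomparable pairs into badly oriented ones and strictly decrease the objective (so the optimum may be a non-linear local maximum, and a purely exchange-based argument fails). What one really needs is a linear order that agrees with $D$ on at least half of the $N$ arcs whose endpoints $\prec'$ leaves incomparable; together with the $R$ respected arcs this leaves at most $C + N/2$ arcs backward. I expect this to require a probabilistic or averaging argument over a carefully chosen distribution of linear orders -- note that a uniformly random linear extension of $\prec'$ need not attain $N/2$ such agreements, so the distribution must be chosen with care, exploiting the antisymmetry $c_{ij} + c_{ji} = 0$ used above. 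Once this inequality is in place, the two bounds give $\maxpo(c) = m - 2\,\mathrm{fas}(D)$ and the reduction is complete.
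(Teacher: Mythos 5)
Your reduction stands or falls with the identity $\maxpo(c) = m - 2\,\mathrm{fas}(D)$, equivalently with the inequality $\mathrm{fas}(D) \le C + N/2$ that you yourself single out as the main obstacle. That inequality is false, so no choice of distribution over linear extensions can save the argument. Concretely, let $D$ have the six vertices $u_1,u_2,u_3,w_1,w_2,w_3$ and the nine arcs $u_iw_i$ ($i=1,2,3$) and $w_iu_j$ ($i\neq j$). The relation $P=\{(w_i,u_j) : i\neq j\}$ is a strict partial order (it is a dicut, so no two of its pairs are consecutive); it respects $R=6$ arcs, contradicts $C=0$, and leaves the $N=3$ arcs $u_iw_i$ between incomparable endpoints, so the corresponding preorder has value $6$ and your inequality would give $\mathrm{fas}(D)\le 3/2$. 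But $\mathrm{fas}(D)=2$: each of the three $4$-cycles $u_i\to w_i\to u_j\to w_j\to u_i$ (for the index pairs $\{1,2\},\{1,3\},\{2,3\}$) must be hit, no single arc lies in all three, and deleting $u_1w_1$ and $u_2w_2$ makes $D$ acyclic. Hence the best linear order attains only $m-2\,\mathrm{fas}(D)=5$, while $\maxpo(c)\ge 6$. So for antisymmetric values a partial order can strictly beat every linear order, and solving the preordering instance does not reveal $\mathrm{fas}(D)$; the reduction breaks. (Your construction agrees with the paper's Figure of the $3$-cycle, where the identity happens to hold, which may be why it looks plausible.)

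The paper takes an entirely different route: it reduces from the maximum transitive subdigraph problem, which is \textsc{NP}-hard by Yannakakis's edge-deletion results. Each original vertex $v$ is replaced by a ``positive clique'' of $2n+1$ nodes, each edge $uv\in E$ contributes a single $+1$ arc between the corresponding cliques with $0$ elsewhere, and every non-adjacent pair $uv\notin E$ gets value $-1$ on all $(2n+1)^2$ arcs between the two cliques. The clique size forces every optimal preorder to contract each clique and to make no negative mistakes, so the non-edges of $D$ act as hard constraints and the optimal value counts exactly the size of a maximum transitive subdigraph plus a fixed offset. If you want to salvage a reduction of your flavor, you would need gadgets of this kind to prevent the optimum from being a non-linear partial order; the bare antisymmetric instance does not suffice.
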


In the context of clustering, it is well-known that clique partitions are precisely the complements of unions of cuts.
The following lemma states analogously that preorders are precisely the complements of unions of directed cuts (dicuts):

\begin{lemma}\label{lem:multi-dicut}
    An arc subset $A \subseteq V^2$ is a preorder if and only if $V^2 \setminus A$ is the union of dicuts,
	i.e., $V^2 \setminus A = \bigcup_{S \in \mathcal{S}} \delta(S)$ for some $\mathcal{S} \subseteq 2^{V}$
	where $\delta(S) = \{ij \in V_2 \mid i \in S, j \notin S\}$ denotes the dicut in $V_2$ defined by $S$.
\end{lemma}

\begin{proof}
    \textit{If}. Let $V^2 \setminus A = \bigcup_{S \in \mathcal{S}} \delta(S)$ for some $\mathcal{S} \subseteq 2^{V}$.
    Suppose $A$ is not a preorder, i.e.,~there exist $i,j,k \in V$ such that $ij, jk \in A$ but $ik \notin A$.
    By definition of $A$, there exists $S \in \mathcal{S}$ such that $ij, jk \notin \delta(S)$ and $ik \in \delta(S)$.
    By definition of directed cuts, $ik \in \delta(S)$ implies $i \in S$ and $k \notin S$. 
    With this, $ij \notin \delta(S)$ implies $j \in S$ while $jk \notin \delta(S)$ implies $j \notin S$, a contradiction.

    \textit{Only if}. Let $A \subseteq V^2$ be a preorder.
    For $i \in V$, define $S_i := \{j \in V \mid ij \in A\}$ and let $\mathcal{S} = \{S_i \subseteq V \mid i \in V\}$.
    We show that $V^2 \setminus A = \bigcup_{S \in \mathcal{S}} \delta(S)$ holds.
    Firstly, suppose $ij \in \bigcup_{S \in \mathcal{S}} \delta(S)$, i.e.,~there exists $k \in V$ such that $ij \in \delta(S_k)$.
    By definition this implies $ki \in A$ and $kj \notin A$.
    As $A$ is a preorder it follows that $ij \notin A$.
    Secondly, suppose $ij \in A$.
	For any $k \in V$ it holds that $i \in S_{k}$ iff $ki \in A$ iff $kj \in A$ iff $j \in S_{k}$.
	Therefore $ij \notin \delta(S_{k})$ for all $k \in V$ and in particular $ij \notin \bigcup_{S \in \mathcal{S}} \delta(S)$, which proves the claim.
\end{proof}

Next, we introduce a notion to quantify the transitivity of a value function $c$.
Clearly, the sum of all positive values is an upper bound to the preordering problem. 
Let
\begin{align}\label{eq:transitivity-measure}
    T(c) = \frac{\maxpo(c)}{B(c)} 
    \quad \text{with} \quad
    B(c) = \sum_{ij \in V_2\colon c_{ij} > 0} \hspace{-3ex} c_{ij}
\end{align}
denote the proportion of the optimal value to the upper bound.
We call it the \emph{transitivity index}.
It quantifies how transitive the values $c$ are.
If there exists a transitive relation that contains all pairs with positive value and no pair with negative value, then $\maxpo(c) = B(c)$, hence $T(c) = 1$.
If no such relation exists, then $T(c) < 1$.
As the empty relation is transitive, $\maxpo(c) \geq 0$, and thus, $T(c) \in [0, 1]$ for all $c\colon V_2 \to \R$.

\section{Algorithms}
\label{sec:algorithms}

In this section, we describe algorithms for solving the preordering problem and for efficiently computing upper bounds.
The input size of an instance of the preordering problem is $|V_2| \in \order(|V|^2)$. 
Nevertheless, we may state the time complexity of an algorithm as a function of $|V|$.

\subsection{Linear-time 4-approximation via max-dicut}
\label{sec:di-cut-approx}

We exploit a connection between directed cuts and preorders to obtain a $4$-approximation algorithm for the preordering problem.
For a directed graph (digraph) $D = (V, E)$ and a node subset $S \subseteq V$, the \emph{directed cut (dicut)} defined by $S$ is the edge subset $\delta(S) = \{ij \in E \mid i \in S, j \notin S\}$.
Notice that any dicut is transitive because it does not contain any consecutive edges.
Given edge weights $w\colon E \to \mathbb{R}_{\geq 0}$, the \emph{maximum directed cut problem (max-dicut)} consists in finding a dicut of maximum weight.
Max-dicut admits a simple randomized $4$-approximation: 
By assigning each node with probability $\frac{1}{2}$ to $S$, the probability that any given edge $ij \in E$ is contained in the cut is $\frac{1}{4}$ (namely if $i \in S$ and $j \in S$, each with probability $\frac{1}{2}$).
Therefore, the expected value of such a randomly chosen dicut is $\frac{1}{4} \sum_{ij \in E} w_{ij}$ where $\sum_{ij \in E} w_{ij}$ is a trivial upper bound to the max-dicut problem.
This algorithm can be de-randomized \citep{halperin2001combinatorial,bar2012online} as detailed in \Cref{sec:de-randomization} to obtain \Cref{alg:max-di-cut} that runs in time $\order(|V|^2)$.

\begin{theorem}\label{thm:4-approximation}
    There exists a $4$-approximation algorithm for the preordering problem with time complexity $\order(|V|^2)$.
\end{theorem}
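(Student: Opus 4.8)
The plan is to reduce the preordering problem to max-dicut on an auxiliary digraph and then invoke the de-randomized greedy algorithm (\Cref{alg:max-di-cut}). Given an instance $c\colon V_2 \to \R$, I would build the digraph $D = (V,E)$ with $E = \{ij \in V_2 \mid c_{ij} > 0\}$ and nonnegative weights $w_{ij} = c_{ij}$ for $ij \in E$, run \Cref{alg:max-di-cut} on $(D,w)$ to obtain a dicut $\delta(S)$, and return the feasible ILP solution $x$ defined by $x_{ij} = 1 \iff ij \in \delta(S)$, equivalently the preorder $\mathord{\lesssim} = \delta(S) \cup \{ii \mid i \in V\}$.

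The first step is to verify feasibility. As already observed, a dicut contains no two consecutive arcs, since $ij, jk \in \delta(S)$ would force both $j \notin S$ and $j \in S$; hence the triangle inequalities \eqref{eq:triangle} are satisfied vacuously by $x$, so $x$ is feasible and $\lesssim$ is indeed a preorder. The second step is to compute the value: since every arc outside $E$ has $c_{ij} \le 0$ and contributes $0$ to $x$, while every arc of $\delta(S)$ has $w_{ij} = c_{ij}$, the objective value $\sum_{ij \in V_2} c_{ij} x_{ij}$ of $x$ equals $\sum_{ij \in \delta(S)} w_{ij} = w(\delta(S))$.

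For the approximation guarantee the crucial observation is that the quantity $\sum_{ij \in E} w_{ij} = B(c)$ from \eqref{eq:transitivity-measure} plays a double role: it is the trivial upper bound for max-dicut on $(D,w)$, and it is simultaneously an upper bound on $\maxpo(c)$. The de-randomization in \Cref{sec:de-randomization} certifies that \Cref{alg:max-di-cut} returns a dicut with $w(\delta(S)) \geq \tfrac14 \sum_{ij \in E} w_{ij}$: the uniformly random assignment of nodes to $S$ places each arc into the cut with probability $\tfrac14$, and the method of conditional expectations converts this expectation into a deterministic lower bound. Chaining these inequalities yields $\sum_{ij \in V_2} c_{ij} x_{ij} = w(\delta(S)) \geq \tfrac14 B(c) \geq \tfrac14 \maxpo(c)$, which is the desired bound; in the degenerate case $\maxpo(c) = 0$ it holds trivially because $w(\delta(S)) \geq 0$.

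Finally, for the running time: constructing $(D,w)$ from $c$, executing \Cref{alg:max-di-cut} (which runs in $\order(|V|^2)$ as stated), and reading off $x$ from $\delta(S)$ are each $\order(|V|^2)$, so the overall algorithm is $\order(|V|^2)$. I do not expect a genuine obstacle; the only point requiring care is that the comparison is made against $B(c)$ rather than against $\maxpo(c)$ directly — this is exactly what allows the max-dicut ratio to transfer to preordering — together with the fact that \Cref{alg:max-di-cut} attains the deterministic ``$\tfrac14$ of the total weight'' guarantee, which is established in \Cref{sec:de-randomization}.
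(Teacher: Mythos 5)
Your proposal is correct and follows essentially the same route as the paper: the same reduction to max-dicut on the positive-value subgraph, the same feasibility argument via the absence of consecutive arcs in a dicut, and the same chain of inequalities through $B(c)$. The only (welcome) addition is that you make explicit the step $B(c) \geq \maxpo(c)$, which the paper leaves implicit by having noted earlier that the sum of positive values upper-bounds the optimum.
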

\begin{proof}
    For any instance of the preordering problem given by $V$ and $c\colon V_2 \to \mathbb{R}$, let $D=(V, E)$ with $E=\{ij \in V_2 \mid c_{ij} > 0\}$ and edge weights $w_{ij} = c_{ij}$ for all $ij \in E$ be the digraph that consists of all arcs with positive value.
    \Cref{alg:max-di-cut} finds a dicut $\delta(S)$ with value $\sum_{ij \in \delta(S)} w_{ij} \geq \frac{1}{4} \sum_{ij \in E} w_{ij}$.
    As $\delta(S)$ does not contain any two consecutive edges, $x \in \{0,1\}^{V_2}$ with $x_{ij} = 1$ if and only if $ij \in \delta(S)$ is feasible for the instance of the preordering problem.
    Also,
    \[
        \sum_{ij \in V_2} c_{ij} x_{ij} = \sum_{ij \in \delta(S)} w_{ij} \geq \frac{1}{4} \sum_{ij \in E} w_{ij} = \frac{1}{4} B(c) \enspace ,
    \]
    i.e., $x$ is a $4$-approximate solution.
    The computations described above and \Cref{alg:max-di-cut} are executed in time $\order(|V|^2)$.
\end{proof}

\begin{algorithm}[b]
    \caption{Greedy max-dicut approximation}
    \label{alg:max-di-cut}
    \begin{algorithmic}
        \STATE {\bfseries Input:} Digraph $D=(V, E)$, weights $w \colon E \to \mathbb{R}_{\geq 0}$
        \STATE $S \coloneqq \emptyset$, $\bar{S} \coloneqq \emptyset$
        \STATE $g_i \coloneqq \sum_{j \in V: ij \in E} w_{ij} -  \sum_{j \in V: ji \in E} w_{ji} \quad \forall i \in V$ 
        \WHILE{$S \cup \bar{S} \neq V$}
            \STATE $i \in \argmax\limits_{j \in V \setminus (S \cup \bar{S})} | g_i |$
            \IF {$g_i \geq 0$}
                \STATE $S \coloneqq S \cup \{i\}$
                \STATE $g_j \coloneqq g_j - w_{ij} \quad \forall\; j \in V: ij \in E$
                \STATE $g_j \coloneqq g_j - w_{ji} \quad \forall\; j \in V: ji \in E$
            \ELSE
                \STATE $\bar{S} \coloneqq \bar{S} \cup \{i\}$
                \STATE $g_j \coloneqq g_j + w_{ij} \quad \forall\; j \in V: ij \in E$
                \STATE $g_j \coloneqq g_j + w_{ji} \quad \forall\; j \in V: ji \in E$
            \ENDIF
        \ENDWHILE
        \STATE \bfseries{return} $\delta(S)$
    \end{algorithmic}
\end{algorithm}

\begin{remark}
An immediate consequence of the proof of \Cref{thm:4-approximation} is that $T(c) \geq \frac{1}{4}$ holds for all $c \colon V_2 \to \mathbb{R}$.
By the example in \Cref{fig:chorded-three-cycle}, there exists a $c \colon V_2 \to \mathbb{R}$ with $T(c) = \frac{1}{3}$. 
We do not know whether there exist $c \colon V_2 \to \mathbb{R}$ with $T(c) < \frac{1}{3}$.
\end{remark}

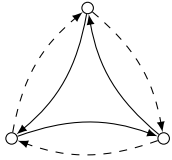
\begin{figure}
    \center
    \begin{tikzpicture}
        \node[vertex] (0) at (0, 0) {};
        \node[vertex] (1) at (2, 0) {};
        \node[vertex] (2) at (1, 1.73) {};

        \draw[-latex] (0) edge[bend left=20] (1);
        \draw[-latex] (1) edge[bend left=20] (2);
        \draw[-latex] (2) edge[bend left=20] (0);
        \draw[-latex, dashed] (0) edge[bend left=20] (2);
        \draw[-latex, dashed] (1) edge[bend left=20] (0);
        \draw[-latex, dashed] (2) edge[bend left=20] (1);
    \end{tikzpicture}
    \caption{Depicted is an instance of the preordering problem with $|V|=3$. 
    Solid and dashed arcs indicate values of $+1$ and $-1$, respectively.
    Here, $\maxpo(c) = 1$, $B(c) = 3$ and $T(c) = \frac{1}{3}$.}
    \label{fig:chorded-three-cycle}
\end{figure}

\begin{remark}
Every dicut is also a partial order and, thus, the partial ordering problem also admits a $4$-approximation.
\end{remark}

\subsection{Greedy arc fixation \texorpdfstring{\citep{bocker2009optimal}}{(Böcker et al., 2009)}}
\label{sec:gaf}

\citet{bocker2009optimal} propose an algorithm that fixes arc labels greedily, one arc at a time.
The decision which arc to fix and to what label is based on the induced cost of these decisions:
Excluding an arc $ij$ from the preorder (i.e., setting $x_{ij} = 0$) implies that for every $k \in V \setminus \{i, j\}$ at most one of the arcs $ik$, $kj$ can be included in the preorder. 
If both these arcs have a positive value, at least the smaller of the two values is ``lost'' by excluding $ij$ from the preorder.
In total, the induced cost for excluding $ij$ from the preorder is computed as
\begin{align*}
    \text{ice}(ij) = \max(0, c_{ij}) + \sum_{k \in V \setminus \{i, j\}} \max\{0, \min\{c_{ik}, c_{ki}\}\} \enspace .
\end{align*}
Similarly, the induced cost of inserting an arc $ij$ into the preorder (i.e., setting $x_{ij} = 1$) is given by
\begin{align*}
    \text{ici}(ij) = \max(0, -c_{ij}) + \sum_{k \in V \setminus \{i, j\}} \left(\max\{0, \min\{c_{ki}, -c_{kj}\}\} + \max\{0, \min\{c_{jk}, -c_{ki}\}\}\right) \enspace .
\end{align*}
The greedy algorithm by \citet{bocker2009optimal} starts with all arcs being unlabeled and iteratively selects an unlabeled arc $ij$ for which $\max\{\text{ice}(ij), \text{ici}(ij)\}$ is maximal. 
It labels $x_{ij} = 0$ if $\text{ice}(ij) \geq \text{ici}(ij)$, and $x_{ij} = 1$ otherwise.
After labeling a arc $ij$, its value $c_{ij}$ is set to $+\infty$ if $x_{ij} = 1$ and $-\infty$ otherwise.
This ensures convergence to a feasible solution.
We term this algorithm \emph{greedy arc fixation}.
We note that \citet{bocker2009optimal} suggest to include a lower bound on the disagreement on the instance excluding the nodes $i$ and $j$ in the computation of the induced costs $\text{ice}$ and $\text{ici}$.
We do not use such additional bounds as these have led to worse solutions and longer computation times in our experiments.

In total, the greedy arc fixation algorithm consists of $|V_2|$ iterations.
By maintaining a priority queue sorted by the induced costs, an arc with maximal induced cost can be queried in time $\mathcal{O}(\log|V|)$.
After fixing an arc $ij$, only the induced costs of the arcs incident to $i$ and $j$ change.
For a single arc, this change can be computed in constant time. 
And the changed induced cost can be updated in the priority queue in time $\mathcal{O}(\log(|V|))$.
This results in a time of $\mathcal{O}(|V|\log|V|)$ per iteration and in a total runtime of $\mathcal{O}(|V|^3\log|V|)$.

While the greedy arc fixation algorithm performs well in practice (see \Cref{sec:experiments}), it does not guarantee any approximation factor.
This is shown by the example in \Cref{fig:bbk-no-approximation} that illustrates that the solution computed by the greedy arc fixation algorithm can be off by an arbitrary factor.

\begin{figure}
    \centering
    \begin{tikzpicture}[scale=1.8]
    
    \node[vertex] (0) at (0, 0) {\scriptsize 0};
    \node[vertex] (1) at (90:1) {\scriptsize 1};
    \node[vertex] (2) at (210:1) {\scriptsize 2};
    \node[vertex] (3) at (330:1) {\scriptsize 3};

    \def\sep{0.5pt}

    \draw[-latex] (0) edge[bend right=15]  node[auto, swap, inner sep=\sep] {\scriptsize -2} (1);
    \draw[-latex] (1) edge[bend right=15]  node[auto, swap, inner sep=\sep] {\scriptsize -2} (0);

    \draw[-latex] (0) edge[bend right=15]  node[auto, swap, inner sep=\sep] {\scriptsize 1} (2);
    \draw[-latex] (2) edge[bend right=15]  node[auto, swap, inner sep=\sep] {\scriptsize 1} (0);

    \draw[-latex] (0) edge[bend right=15]  node[auto, swap, inner sep=\sep] {\scriptsize 1} (3);
    \draw[-latex] (3) edge[bend right=15]  node[auto, swap, inner sep=\sep] {\scriptsize 1} (0);

    \draw[-latex] (1) edge[bend right=20]  node[auto, swap, inner sep=\sep] {\scriptsize 1} (2);
    \draw[-latex] (2) edge[bend left=5]  node[auto, swap, inner sep=\sep] {\scriptsize 1} (1);

    \draw[-latex] (1) edge[bend left=5]  node[auto, swap, inner sep=\sep] {\scriptsize 1} (3);
    \draw[-latex] (3) edge[bend right=20]  node[auto, swap, inner sep=\sep] {\scriptsize 1} (1);

    \draw[-latex] (2) edge[bend right=20]  node[auto, swap, inner sep=\sep] {\scriptsize -2} (3);
    \draw[-latex] (3) edge[bend left=5]  node[auto, swap, inner sep=\sep] {\scriptsize -2} (2);
\end{tikzpicture}
    \caption{For the instance of the preordering problem depicted above the optimal objective value is $4$.
    The greedy arc fixation algorithm may, in this order, fix $01$, $10$, $12$, $02$, $03$, $13$ to $1$ and then fix the remaining arcs to $0$ yielding a solution with objective value $0$.
    We note that this solution is obtained if the algorithm breaks ties unfavorably.
    However, there exist more complex instances where the algorithm still fails to come within a factor of $4$ of the optimal objective value (cf.~\Cref{sec:gaf-no-approximation-no-ties}).}
    \label{fig:bbk-no-approximation}
\end{figure}

\subsection{Greedy arc insertion}\label{sec:gai}

Next, we define a greedy algorithm for the preordering problem that starts from any feasible solution (e.g.~the identity relation) and greedily inserts pairs into the relation without violating transitivity.
While conceptually similar to greedy additive edge contraction \citep{keuper2015efficient} and agglomerative clustering \citep{bailoni2022gasp}, specific combinatorial challenges arise from the preordering problem:
Firstly, there are up to $\order(|V|^2)$ edge insertions in preordering, in contrast to at most $\order(|V|)$ edge contractions in clustering (because the number of clusters decreases by one in every iteration).
Secondly, in each iteration, $\order(|V|^2)$ arcs need to be considered to potentially be added to the relation.
Adding any one arc can result in $\order(|V|^2)$ additional arcs that need to be added to satisfy transitivity.
A na\"ive implementation of this algorithm would require $\order(|V|^4)$ operations per iteration.
Below, we describe a more efficient implementation:
Suppose $x \colon V_2 \to \{0,1\}$ is a feasible solution to the preordering problem, and let $ij \in V_2$.
By setting $x_{ij} = 1$, all arcs $k\ell$ with $x_{ki} = x_{j\ell} = 1$ need to be set to $1$ as well in order to satisfy transitivity.
The amount by which the total value changes can thus be calculated as 
\begin{align}
    g_{ij} = \sum_{k,\ell \in V} c_{k\ell} \, (1 - x_{k\ell}) \, x_{ki} \, x_{j\ell} 
\end{align}
where $c_{k\ell} \, (1 - x_{k\ell})$ ensures that only values of arcs that are currently labeled $0$ are considered.
We call this value the \emph{gain} from setting $ij$ to $1$.
By representing $x$, $c$, and $g$ as matrices, the gain matrix $g$ can be calculated by two matrix multiplications
\begin{align}\label{eq:gain-matrix}
    g = x^\top (c \odot (1 - x)) x^\top
\end{align}
where $\odot$ denotes element-wise multiplication.
With this, each iteration can be performed with only $\order(|V|^3)$ operations.
Furthermore, the matrix multiplication can be carried out in parallel on a GPU.
In our implementation of this algorithm, we do not utilize GPU parallelization but instead use a sequential method that exploits the sparsity of the $01$-matrix $x$.


\subsection{Greedy moving}\label{sec:gm}

The greedy arc insertion algorithm never removes an arc from a feasible solution.
Removing an arc from a preorder can result in a relation that is not a preorder and thus not a feasible solution.
Removing a dicut from a preorder results in another preorder (by \Cref{lem:multi-dicut}) and thus maintains feasibility. 
However, finding a dicut of minimum cost (in order to maximize the objective of the solution obtained by removing the cut) is well-known to be \np-hard \citep{karp1972reducibility}.
Below, we define $O(|V|^2)$ operations that may remove arcs from the preorder such that the difference in value resulting from each operation can be calculated efficiency.
\begin{itemize}
    \item \emph{Move up (down)}: Remove a node $i \in V$ from its current equivalence class and place it immediate above (below) the equivalence class in the partial order.
    That is, set $x_{ji} = 0$ ($x_{ij} = 0$) for all $j \in V \setminus \{i\}$ with $x_{ij} = x_{ji} = 1$.
    \item \emph{Move equivalence class}: Move a node $i \in V$ from its current equivalence class to the equivalence class of node $j \in V \setminus \{i\}$.
    That is, set $x_{ik} = x_{jk}$, $x_{ki} = x_{ki}$ for all $k \in V \setminus \{i\}$.
    \item \emph{Remove arcs from transitive reduction}: Remove an arc from the transitive reduction of the partial order of equivalence classes.
    That is, for an arc $ij \in V_2$ such that $x_{ik} = 0 \lor x_{kj} = 0$ for all $k \in V \setminus \{i,j\}$, set $x_{i'j'} = 0$ for all $i' \in V$ with $x_{ii'} = x_{i'i} = 0$ and all $j' \in V$ with $x_{jj'} = x_{j'j} = 1$.
\end{itemize}
We implement an algorithm, termed \emph{greedy moving}, that in each iteration performs an optimal one of the moves described above, or an arc insertion according to \cref{sec:gai}.
The operation that increases the objective value maximally can be found in $\mathcal{O}(|V|^3)$, similarly to \cref{sec:gai}.

\subsection{Linear programming}\label{sec:lp-algorithms}

In order to obtain an upper bound on the optimal value of the preordering problem and thus an upper bound on the transitivity index $T$ from \eqref{eq:transitivity-measure}, we consider the LP relaxation fo the ILP \eqref{eq:objective}--\eqref{eq:triangle} obtained by replacing \eqref{eq:binary} with
\begin{align}\label{eq:continuous}
    x_{ij} \in [0,1] && \forall \; ij \in V_2 \enspace .
\end{align}
This bound can be strengthened by adding linear inequalities to the formulation that are satisfied by all binary solutions and cut off fractional solutions.
Particularly powerful are inequalities that induce facets of the \emph{preorder polytope}, i.e., the convex hull of the feasible solutions to the preordering problem.
Polytopes of several closely related problems have been studied extensively, as discussed in \Cref{sec:related-work}.
\citet{muller1996partial} introduce the class of \emph{odd closed walk inequalities} for the partial order polytope.
We adapt these to the preorder polytope as follows.

\begin{definition}
    Let $k \geq 3$ odd and $v\colon \{0,\dots,k-1\} \to V$.
    The \emph{odd closed walk inequality} with respect to $v$ is defined below
    where the additions $i+1$ and $i+2$ are modulo $k$.    
    \begin{align}\label{eq:odd-closed-walk}
        \sum_{i = 0}^{k-1} \left( x_{v_iv_{i+1}} - x_{v_iv_{i+2}} \right) \leq \frac{k-1}{2}
    \end{align}
\end{definition}

\citet{muller1996partial} proves validity of the odd closed walk inequalities for the partial order polytope by showing that they are obtained by adding certain triangle inequalities \eqref{eq:triangle} and box inequalities $0 \leq x_{ij}$ for several $ij \in V_2$ and rounding down the right-hand side.
As triangle and box inequalities are valid also for the preorder polytope, the odd closed walk inequalities are valid for the preorder polytope as well.
Moreover:

\begin{lemma}\label{lem:inherited-facets}
    Every inequality that induces a facet of the partial order polytope and is valid for the preorder polytope also induces a facet of the preorder polytope.
\end{lemma}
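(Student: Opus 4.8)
The plan is to relate the dimensions and affine hulls of the two polytopes. Let $P_{\mathrm{pre}}$ denote the preorder polytope and $P_{\mathrm{po}}$ the partial order polytope, both living in $\R^{V_2}$. The first key observation is that $P_{\mathrm{po}} \subseteq P_{\mathrm{pre}}$, since every partial order is a preorder, and moreover both polytopes are full-dimensional in $\R^{V_2}$. Full-dimensionality of $P_{\mathrm{po}}$ is established by \citet{muller1996partial}; full-dimensionality of $P_{\mathrm{pre}}$ follows because $P_{\mathrm{pre}} \supseteq P_{\mathrm{po}}$. Hence $\dim P_{\mathrm{pre}} = \dim P_{\mathrm{po}} = |V_2|$, and neither polytope is contained in a proper affine subspace.

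Now let $a^\top x \leq b$ be an inequality that induces a facet of $P_{\mathrm{po}}$ and is valid for $P_{\mathrm{pre}}$. I would argue as follows. Since the inequality induces a facet of $P_{\mathrm{po}}$, the face $F_{\mathrm{po}} = P_{\mathrm{po}} \cap \{a^\top x = b\}$ has dimension $|V_2| - 1$, so it contains $|V_2|$ affinely independent vertices of $P_{\mathrm{po}}$. Each such vertex is an (incidence vector of an) integral point, hence also a vertex of $P_{\mathrm{pre}}$, and it lies on the face $F_{\mathrm{pre}} = P_{\mathrm{pre}} \cap \{a^\top x = b\}$ of $P_{\mathrm{pre}}$ (this face is nonempty and proper because the inequality is valid for $P_{\mathrm{pre}}$ and is not satisfied with equality by all of $P_{\mathrm{pre}}$ — indeed, it is violated-at-equality by at most $|V_2|-1$ dimensions' worth of points even within $P_{\mathrm{po}}$, and $P_{\mathrm{pre}}$ is full-dimensional). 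Therefore $F_{\mathrm{pre}}$ contains $|V_2|$ affinely independent points, so $\dim F_{\mathrm{pre}} \geq |V_2| - 1$. Since $F_{\mathrm{pre}}$ is a proper face of the full-dimensional polytope $P_{\mathrm{pre}}$, we have $\dim F_{\mathrm{pre}} \leq |V_2| - 1$, and equality follows; thus $a^\top x \leq b$ induces a facet of $P_{\mathrm{pre}}$.

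The one point requiring care — and the main obstacle — is verifying that the inequality is not valid \emph{with equality} on all of $P_{\mathrm{pre}}$, i.e., that $F_{\mathrm{pre}}$ is a \emph{proper} face. This is where full-dimensionality of $P_{\mathrm{pre}}$ does the work: if $a^\top x = b$ held for every point of $P_{\mathrm{pre}}$, then $P_{\mathrm{pre}}$ would lie in a hyperplane, contradicting $\dim P_{\mathrm{pre}} = |V_2|$. (We should also note $a \neq 0$, which is immediate since a facet-inducing inequality of $P_{\mathrm{po}}$ has nonzero normal.) A secondary bookkeeping point is confirming that the affine independence of vertices is preserved when we pass from viewing them inside $P_{\mathrm{po}}$ to viewing them inside $P_{\mathrm{pre}}$ — but affine independence of a finite set of points is an intrinsic property of the points in $\R^{V_2}$ and does not depend on which polytope we regard them as belonging to, so this is automatic. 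I would keep the write-up to these few lines, citing \citet{muller1996partial} for full-dimensionality of the partial order polytope and for the structure of its vertices as $0/1$ points.
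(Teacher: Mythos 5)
Your proof is correct and follows essentially the same route as the paper's: containment of the partial order polytope in the preorder polytope, full-dimensionality of both, and the observation that the induced face of the larger polytope has dimension at least that of the facet of the smaller one, hence is itself a facet. You merely spell out via affinely independent vertices what the paper states as a one-line dimension comparison.
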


\begin{proof}
    As every partial order is a preorder, the partial order polytope is contained in the preorder polytope.
    Thus, for any inequality that is valid for both polytopes, the dimension of the induced face of the preorder polytope is greater than or equal to the dimension of the induced face of the partial order polytope.
    Since the partial order polytope is full-dimensional \citep[Theorem 3.1]{muller1996partial}, so is the preorder polytope, and, in particular, they have the same dimension.
    Together, any inequality that induces a facet of the partial order polytope, i.e., a face of co-dimension 1, and is valid for the preorder polytope, induces a face of the preorder polytope of co-dimension at most 1, and, therefore, defines a facet.
\end{proof}

\begin{theorem}
    Let $k \geq 3$ odd and let $v\colon \{0,\dots,k-1\} \to V$ injective (i.e., $v$ defines a simple cycle).
    Then the odd closed walk inequality \eqref{eq:odd-closed-walk} defines a facet of the preorder polytope.
\end{theorem}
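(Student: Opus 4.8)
The plan is to invoke \Cref{lem:inherited-facets} and reduce the claim to showing that the odd closed walk inequality \eqref{eq:odd-closed-walk} defines a facet of the \emph{partial order} polytope when $v$ is injective. If that is established, then since the inequality is valid for the preorder polytope (as noted just before \Cref{lem:inherited-facets}), \Cref{lem:inherited-facets} immediately upgrades it to a facet of the preorder polytope. So the real work is a facet proof for the partial order polytope. I would first check whether \citet{muller1996partial} already proves exactly this; if their facet result is stated only for closed \emph{walks} rather than simple cycles, or with slightly different hypotheses, I would need to supply the argument, so let me sketch it directly.

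To prove that \eqref{eq:odd-closed-walk} induces a facet, I would use the standard technique: let $\langle a, x\rangle \le b$ be a valid inequality for the partial order polytope that is satisfied with equality by every vertex (partial order) satisfying the odd closed walk inequality with equality, and show that $(a,b)$ must be a positive multiple of the odd closed walk inequality. Concretely, writing $n = |V|$ and letting $W = \{v_0,\dots,v_{k-1}\}$ be the node set of the cycle, I would exhibit an explicit family of partial orders that lie on the face, and use the differences of their incidence vectors to pin down the coefficients $a_{pq}$. The key building blocks: (i) for the arcs $v_iv_{i+1}$ on the cycle, the tight partial orders are the $(k-1)/2$ "matchings along the cycle" — and toggling which consecutive pairs are chosen, combined with adding/removing arcs outside $W$, forces $a_{pq}=0$ for all $pq$ with $\{p,q\}\not\subseteq W$ or with $pq$ not of the form $v_iv_{i+1}$ or $v_iv_{i+2}$; (ii) symmetry of the cycle, exploited via partial orders obtained by "rotating" the chosen pattern, forces $a_{v_iv_{i+1}}$ to be a common constant $\alpha$ and $a_{v_iv_{i+2}}$ to be a common constant $-\alpha$; (iii) a single tight partial order then fixes $b = \alpha(k-1)/2$. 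The injectivity of $v$ is what guarantees that the arcs $v_iv_{i+1}$ and $v_jv_{j+2}$ appearing in the sum are all distinct and that the "toggle one arc on/off outside the relevant positions" moves keep us inside the polytope; it is also what gives enough affinely independent tight vertices to reach full codimension $1$.

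The main obstacle I anticipate is the careful bookkeeping in step (i)–(ii): constructing, for each coordinate $pq$, an explicit pair of tight partial orders whose incidence vectors differ only in coordinate $pq$ (or in a controlled small set of coordinates), while verifying transitivity and antisymmetry of each. For coordinates outside $W$ this is easy — start from a tight configuration on the cycle and a topological structure on $V\setminus W$ and flip a single non-cycle arc, checking no transitivity violation is created, which can be arranged by placing the rest of $V$ as an antichain "incomparable" to $W$. The delicate part is the chord coordinates $v_iv_{i+2}$ and the cycle coordinates $v_iv_{i+1}$: here one has to use the rotational family of the $(k-1)/2$ tight patterns and argue that consecutive rotations differ in exactly the coordinates that appear in \eqref{eq:odd-closed-walk}, then solve the resulting linear system for $(a,b)$. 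I would organize this as: (1) reduce to the partial order polytope via \Cref{lem:inherited-facets}; (2) set up the tight-vertex / valid-inequality framework and fix notation for the rotational family; (3) kill all irrelevant coordinates; (4) establish the common-value constraints by rotation; (5) compute $b$ and conclude. Steps (3) and (4) are where essentially all the content lies, and I would present them with one representative coordinate worked out and the rest by the stated symmetry, rather than grinding every case.
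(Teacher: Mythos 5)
Your primary route is exactly the paper's proof: \citet[Theorem 4.2]{muller1996partial} already establishes that the odd closed walk inequality for a simple cycle is a facet of the partial order polytope, so the paper simply cites that result and applies \Cref{lem:inherited-facets}. Your fallback sketch of a direct facet proof is therefore unnecessary (and would need more care, e.g.\ the tight partial orders on the face are not only the rotational ``matching'' patterns), but the approach you lead with is the correct and intended one.
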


\begin{proof}
    \citet{muller1996partial} shows that if the odd closed walk is a cycle (i.e., every node is visited at most once) then the corresponding odd closed walk inequality defines a facet of the partial order polytope \citep[Theorem 4.2]{muller1996partial}.
    Together with \Cref{lem:inherited-facets} the theorem follows.
\end{proof}

While the number of odd closed walk inequalities is exponential in $|V|$, the separation problem can be solved in polynomial time \citep{muller1996partial}.
This implies that also the LP \eqref{eq:objective}, \eqref{eq:triangle}, \eqref{eq:odd-closed-walk}, \eqref{eq:continuous} can be solved in polynomial time \citep{grotschel1981ellipsoid}.

\section{Experiments}
\label{sec:experiments}

We apply the algorithms defined in \Cref{sec:algorithms} to instances of the preordering problem from social networks.
We solve LPs and ILPs with \citet{gurobi} in such a way that triangle inequalities \eqref{eq:triangle} and odd closed walk inequalities \eqref{eq:odd-closed-walk} are not enumerated in advance but are only added to the system if they become violated throughout the execution of the solver.
All experiments are performed on an Intel Core i9-12900KF Gen12 $\times$ 24 and an NVIDIA GeForce RTX 4080 Super.
We abbreviate the algorithms as follows:
\begin{itemize}[nosep]
    \item \algacronym{ILP}: Gurobi applied to the system \eqref{eq:objective}--\eqref{eq:triangle}
    \item \algacronym{LP}: Gurobi applied to the system \eqref{eq:objective}, \eqref{eq:triangle}, \eqref{eq:continuous}
    \item \algacronym{OCW}: Gurobi applied to the system \eqref{eq:objective}, \eqref{eq:triangle}, \eqref{eq:continuous}, \eqref{eq:odd-closed-walk}
    \item \algacronym{GDC}: Greedy max-dicut approximation (\Cref{sec:di-cut-approx})
    \item \algacronym{GAF}: Greedy arc fixation (\Cref{sec:gaf})
    \item \algacronym{GAI}: Greedy arc insertion (\Cref{sec:gai})
    \item \algacronym{GM}: Greedy moving (\Cref{sec:gm})
    \item \algacronym{X+Y}: Algorithm \algacronym{Y} initialized with the output of \algacronym{X}
\end{itemize}

\subsection{Ego networks}

In this section we consider ego networks of Twitter and Google+ \citep{leskovec2012learning}.
The ego network corresponding to a given account consists of one node for each account that the given account follows. 
Directed edges between accounts $i$ and $j$ indicate that $i$ follows $j$.
The Twitter dataset consists of $973$ ego networks with up to $247$ nodes and $17930$ edges.
The Google+ dataset consists of $132$ ego networks with up to $4938$ nodes and $1614977$ edges.
We define instances of the preordering problem for each social network with values $c_{ij} = 1$ if the edge $ij$ is in the network and $c_{ij} = -1$ otherwise, for all $ij \in V_2$.
With these values, $B(c) = |E|$.

\subsubsection{Comparison of preordering, clustering and partial ordering}

\begin{figure}
	\centering
	\small
    \raisebox{3ex}{%
\begin{tikzpicture}
\small
\begin{axis}[
	ylabel={Objective / $|E|$},
	xtick={1, 2, 3, 4},
	xticklabels={$\lesssim$, $\leq$, $\sim$, {$(\sim,\leq)$}},
	typeset ticklabels with strut,
	boxplot/draw direction=y,
	width=0.4\columnwidth,
	mark size=0.15ex,
]
\pgfplotstableread[col sep=comma]{data/twitter-clustering-vs-ordering.csv}\mydata;
\addplot[boxplot, color=blue] table[y expr=\thisrow{Preordering ILP}/\thisrow{|E|}] {\mydata};
\addplot[boxplot, color=red] table[y expr=\thisrow{Partial Ordering ILP}/\thisrow{|E|}] {\mydata};
\addplot[boxplot, color=green] table[y expr=\thisrow{Clustering ILP}/\thisrow{|E|}] {\mydata};
\addplot[boxplot, color=orange] table[y expr=\thisrow{Successive ILPs}/\thisrow{|E|}] {\mydata};
\end{axis}
\end{tikzpicture}
}
\hspace{5ex}
\begin{tikzpicture}
\small
\begin{axis}[
	xlabel=$|V|$,
	ylabel={Runtime [s]},
	ymode=log,
	width=0.4\columnwidth,
	legend pos=outer north east,
	legend cell align=left,
	mark size=0.15ex,
]
\pgfplotstableread[col sep=comma]{data/twitter-clustering-vs-ordering.csv}\mydata;
\addplot [color=blue, only marks, mark=*]
	table [x=|V|, y=Preordering ILP T] {\mydata};
\addplot [color=red, only marks, mark=*]
	table [x=|V|, y=Partial Ordering ILP T] {\mydata};
\addplot [color=green, only marks, mark=*]
	table [x=|V|, y=Clustering ILP T] {\mydata};
\addplot [color=orange, only marks, mark=*]
	table [x=|V|, y=Successive ILPs T] {\mydata};
\legend{$\lesssim$, $\leq$, $\mathstrut{\sim}$, {$(\sim,\leq)$}}
\end{axis}
\end{tikzpicture}
    \caption{Shown above are objective values normalized by the number of edges (left) and runtimes (right) for those $435$ Twitter instances for which the preorder ($\lesssim$), partial order ($\leq$), and clustering ($\sim$) problems are solved to optimality within a time limit of $500\,$s. Here, $(\sim,\leq)$ denotes successive clustering and partial ordering.}
    \label{fig:clustering-vs-ordering-quantitative}
\end{figure}
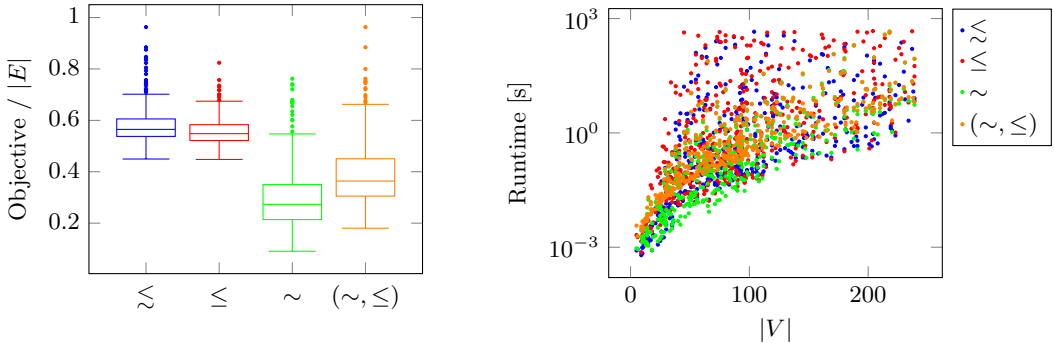

We compare solutions to the preordering problem with those obtained by solving a clustering and a partial ordering problem.
Additionally, and for a comparison, we compute preorders by first computing a clustering and then computing a partial order on the obtained clusters.
This successive approach is in contrast to the joint clustering and partial ordering using the preordering problem.
In order to compute solutions to the clustering and partial ordering problem, we add additional inequalities to the system \eqref{eq:objective}--\eqref{eq:triangle}:
For clustering, we add symmetry constraints $x_{ij} = x_{ji}$ for all $ij \in V_2$. For partial ordering we add antisymmetry constraints $x_{ij} + x_{ji} \leq 1$ for all $ij \in V_2$.

Results for six ego networks are shown in \Cref{fig:clustering-vs-ordering} and compared quantitatively in \Cref{tab:clustering-vs-ordering}.
The first network, for instance, has $|E|=54$ edges and admits a preorder of value $52$. 
This preorder disagrees with two of the edges.
Its transitivity index is $T(c) = \frac{52}{54} \approx 0.963$.
The optimal clustering and partial ordering disagree with $14$ and $20$ edges, respectively.
Not every social network has such a high transitivity index. 
The fourth network, for example, has a transitivity index of $T(c) = \frac{27}{51} \approx 0.529$,
and the objective values of the optimal solutions to the clustering (20) and partial ordering (26) problems are not much smaller than the value of the optimal preorder (27).

\begin{figure}
    \centering
    \small
    \setlength{\tabcolsep}{0pt}
    \begin{tabular}{@{}cccc@{}}    
        \begin{tikzpicture}[yscale=0.4]
\node[vertex] (0) at (2.5, 5.00) {\scriptsize 0};
\node[vertex] (1) at (1.5, 0) {\scriptsize 1};
\node[vertex] (2) at (1, 4) {\scriptsize 2};
\node[vertex] (3) at (2.5, 0.8) {\scriptsize 3};
\node[vertex] (4) at (4, 1.55) {\scriptsize 4};
\node[vertex] (5) at (3.5, 0) {\scriptsize 5};
\node[vertex] (6) at (1, 2.24) {\scriptsize 6};
\node[vertex] (7) at (2.5, 2.5) {\scriptsize 7};
\node[vertex] (8) at (4, 4) {\scriptsize 8};
\draw[-latex] (0) edge[bend left=10] (1);
\draw[-latex] (0) edge[bend left=10] (2);
\draw[-latex] (0) edge[bend left=10] (3);
\draw[-latex] (0) edge[bend left=10] (4);
\draw[-latex] (0) edge[bend left=10] (5);
\draw[-latex] (0) edge[bend left=10] (6);
\draw[-latex] (0) edge[bend left=10] (7);
\draw[-latex] (0) edge[bend left=10] (8);
\draw[-latex] (2) edge[bend left=10] (0);
\draw[-latex] (2) edge[bend left=10] (1);
\draw[-latex] (2) edge[bend left=10] (3);
\draw[-latex] (2) edge[bend left=10] (4);
\draw[-latex] (2) edge[bend left=10] (5);
\draw[-latex] (2) edge[bend left=10] (6);
\draw[-latex] (2) edge[bend left=10] (7);
\draw[-latex] (2) edge[bend left=10] (8);
\draw[-latex] (3) edge[bend left=10] (0);
\draw[-latex] (3) edge[bend left=10] (1);
\draw[-latex] (3) edge[bend left=10] (2);
\draw[-latex] (3) edge[bend left=10] (5);
\draw[-latex] (3) edge[bend left=10] (6);
\draw[-latex] (3) edge[bend left=10] (7);
\draw[-latex] (3) edge[bend left=10] (8);
\draw[-latex] (4) edge[bend left=10] (0);
\draw[-latex] (4) edge[bend left=10] (1);
\draw[-latex] (4) edge[bend left=25] (2);
\draw[-latex] (4) edge[bend left=10] (3);
\draw[-latex] (4) edge[bend left=10] (5);
\draw[-latex] (4) edge[bend left=10] (6);
\draw[-latex] (4) edge[bend left=10] (7);
\draw[-latex] (4) edge[bend left=10] (8);
\draw[-latex] (6) edge[bend left=10] (0);
\draw[-latex] (6) edge[bend left=10] (1);
\draw[-latex] (6) edge[bend left=10] (2);
\draw[-latex] (6) edge[bend left=10] (3);
\draw[-latex] (6) edge[bend left=10] (4);
\draw[-latex] (6) edge[bend left=10] (5);
\draw[-latex] (6) edge[bend left=10] (7);
\draw[-latex] (6) edge[bend left=10] (8);
\draw[-latex] (7) edge[bend left=10] (0);
\draw[-latex] (7) edge[bend left=10] (1);
\draw[-latex] (7) edge[bend left=10] (2);
\draw[-latex] (7) edge[bend left=10] (3);
\draw[-latex] (7) edge[bend left=10] (4);
\draw[-latex] (7) edge[bend left=10] (5);
\draw[-latex] (7) edge[bend left=10] (6);
\draw[-latex] (7) edge[bend left=10] (8);
\draw[-latex] (8) edge[bend left=10] (0);
\draw[-latex] (8) edge[bend left=10] (1);
\draw[-latex] (8) edge[bend left=10] (2);
\draw[-latex] (8) edge[bend left=10] (3);
\draw[-latex] (8) edge[bend left=10] (4);
\draw[-latex] (8) edge[bend left=10] (6);
\draw[-latex] (8) edge[bend left=10] (7);
\end{tikzpicture} &
        \begin{tikzpicture}
\def\r{0.15}
\node[cluster, minimum size=8.762*\r cm] (C0) at (0*\r, 0*\r) {};
\node[cluster, minimum size=3.000*\r cm] (C1) at (-6*\r, -5*\r) {};
\node[cluster, minimum size=3.000*\r cm] (C2) at (6*\r, -5*\r) {};
\node[vertex, fill=white, shift=(0.000:2.881*\r cm), minimum size=2.0*\r cm] (0) at (C0) {\scriptsize 0};
\node[vertex, fill=white, shift=(51.429:2.881*\r cm), minimum size=2.0*\r cm] (2) at (C0) {\scriptsize 2};
\node[vertex, fill=white, shift=(102.857:2.881*\r cm), minimum size=2.0*\r cm] (3) at (C0) {\scriptsize 3};
\node[vertex, fill=white, shift=(154.286:2.881*\r cm), minimum size=2.0*\r cm] (6) at (C0) {\scriptsize 6};
\node[vertex, fill=white, shift=(205.714:2.881*\r cm), minimum size=2.0*\r cm] (4) at (C0) {\scriptsize 4};
\node[vertex, fill=white, shift=(257.143:2.881*\r cm), minimum size=2.0*\r cm] (8) at (C0) {\scriptsize 8};
\node[vertex, fill=white, shift=(308.571:2.881*\r cm), minimum size=2.0*\r cm] (7) at (C0) {\scriptsize 7};
\node[vertex, fill=white, shift=(0.000:0.000*\r cm), minimum size=2.0*\r cm] (1) at (C1) {\scriptsize 1};
\node[vertex, fill=white, shift=(0.000:0.000*\r cm), minimum size=2.0*\r cm] (5) at (C2) {\scriptsize 5};
\draw[cluster_arc] (C0) -- (C1);
\draw[cluster_arc] (C0) -- (C2);
\draw[disagreement_arc] (3) edge[bend left=20] (4);
\draw[disagreement_arc] (8) edge[bend right=20] (5);
\end{tikzpicture} &
        \begin{tikzpicture}
\def\r{0.15}
\node[cluster, minimum size=8.762*\r cm] (C0) at (0*\r, 0*\r) {};
\node[cluster, minimum size=3.000*\r cm] (C1) at (-6*\r, -5*\r) {};
\node[cluster, minimum size=3.000*\r cm] (C2) at (6*\r, -5*\r) {};
\node[vertex, fill=white, shift=(0.000:2.881*\r cm), minimum size=2.0*\r cm] (0) at (C0) {\scriptsize 0};
\node[vertex, fill=white, shift=(51.429:2.881*\r cm), minimum size=2.0*\r cm] (2) at (C0) {\scriptsize 2};
\node[vertex, fill=white, shift=(102.857:2.881*\r cm), minimum size=2.0*\r cm] (3) at (C0) {\scriptsize 3};
\node[vertex, fill=white, shift=(154.286:2.881*\r cm), minimum size=2.0*\r cm] (6) at (C0) {\scriptsize 6};
\node[vertex, fill=white, shift=(205.714:2.881*\r cm), minimum size=2.0*\r cm] (4) at (C0) {\scriptsize 4};
\node[vertex, fill=white, shift=(257.143:2.881*\r cm), minimum size=2.0*\r cm] (8) at (C0) {\scriptsize 8};
\node[vertex, fill=white, shift=(308.571:2.881*\r cm), minimum size=2.0*\r cm] (7) at (C0) {\scriptsize 7};
\node[vertex, fill=white, shift=(0.000:0.000*\r cm), minimum size=2.0*\r cm] (1) at (C1) {\scriptsize 1};
\node[vertex, fill=white, shift=(0.000:0.000*\r cm), minimum size=2.0*\r cm] (5) at (C2) {\scriptsize 5};
\draw[disagreement_arc] (0) -- (1);
\draw[disagreement_arc] (0) -- (5);
\draw[disagreement_arc] (2) -- (1);
\draw[disagreement_arc] (2) -- (5);
\draw[disagreement_arc] (3) edge[bend left=20] (4);
\draw[disagreement_arc] (3) -- (1);
\draw[disagreement_arc] (3) -- (5);
\draw[disagreement_arc] (4) -- (1);
\draw[disagreement_arc] (4) -- (5);
\draw[disagreement_arc] (6) -- (1);
\draw[disagreement_arc] (6) -- (5);
\draw[disagreement_arc] (7) -- (1);
\draw[disagreement_arc] (7) -- (5);
\draw[disagreement_arc] (8) -- (1);
\end{tikzpicture} &
        \begin{tikzpicture}
\def\r{0.15}
\node[cluster, minimum size=3.000*\r cm] (C0) at (0*\r, 11.9*\r) {};
\node[cluster, minimum size=3.000*\r cm] (C1) at (15*\r, 0*\r) {};
\node[cluster, minimum size=3.000*\r cm] (C2) at (5*\r, 10.2*\r) {};
\node[cluster, minimum size=3.000*\r cm] (C3) at (15*\r, 6.8*\r) {};
\node[cluster, minimum size=3.000*\r cm] (C4) at (10*\r, 8.5*\r) {};
\node[cluster, minimum size=3.000*\r cm] (C5) at (0*\r, 1.7*\r) {};
\node[cluster, minimum size=3.000*\r cm] (C6) at (10*\r, 5.1*\r) {};
\node[cluster, minimum size=3.000*\r cm] (C7) at (5*\r, 3.4*\r) {};
\node[cluster, minimum size=3.000*\r cm] (C8) at (10*\r, 1.7*\r) {};
\node[vertex, fill=white, shift=(0.000:0.000*\r cm), minimum size=2.0*\r cm] (0) at (C0) {\scriptsize 0};
\node[vertex, fill=white, shift=(0.000:0.000*\r cm), minimum size=2.0*\r cm] (1) at (C1) {\scriptsize 1};
\node[vertex, fill=white, shift=(0.000:0.000*\r cm), minimum size=2.0*\r cm] (2) at (C2) {\scriptsize 2};
\node[vertex, fill=white, shift=(0.000:0.000*\r cm), minimum size=2.0*\r cm] (3) at (C3) {\scriptsize 3};
\node[vertex, fill=white, shift=(0.000:0.000*\r cm), minimum size=2.0*\r cm] (4) at (C4) {\scriptsize 4};
\node[vertex, fill=white, shift=(0.000:0.000*\r cm), minimum size=2.0*\r cm] (5) at (C5) {\scriptsize 5};
\node[vertex, fill=white, shift=(0.000:0.000*\r cm), minimum size=2.0*\r cm] (6) at (C6) {\scriptsize 6};
\node[vertex, fill=white, shift=(0.000:0.000*\r cm), minimum size=2.0*\r cm] (7) at (C7) {\scriptsize 7};
\node[vertex, fill=white, shift=(0.000:0.000*\r cm), minimum size=2.0*\r cm] (8) at (C8) {\scriptsize 8};
\draw[cluster_arc] (C0) -- (C2);
\draw[cluster_arc] (C2) -- (C4);
\draw[cluster_arc] (C3) -- (C6);
\draw[cluster_arc] (C4) -- (C3);
\draw[cluster_arc] (C6) -- (C7);
\draw[cluster_arc] (C7) -- (C8);
\draw[cluster_arc] (C7) -- (C5);
\draw[cluster_arc] (C8) -- (C1);
\draw[disagreement_arc] (2) edge[bend left=20] (0);
\draw[disagreement_arc] (3) edge[bend right=35] (0);
\draw[disagreement_arc] (3) edge[bend right=30] (2);
\draw[disagreement_arc] (4) edge[bend right=30] (0);
\draw[disagreement_arc] (4) edge[bend left=20] (2);
\draw[disagreement_arc] (6) edge[bend left=10] (0);
\draw[disagreement_arc] (6) -- (2);
\draw[disagreement_arc] (6) edge[bend left=20] (3);
\draw[disagreement_arc] (6) -- (4);
\draw[disagreement_arc] (7) edge[bend left=20] (0);
\draw[disagreement_arc] (7) -- (2);
\draw[disagreement_arc] (7) edge[bend right=28] (3);
\draw[disagreement_arc] (7) -- (4);
\draw[disagreement_arc] (7) edge[bend left=20] (6);
\draw[disagreement_arc] (8) -- (0);
\draw[disagreement_arc] (8) -- (2);
\draw[disagreement_arc] (8) edge[bend right=20] (3);
\draw[disagreement_arc] (8) edge[bend right=40] (4);
\draw[disagreement_arc] (8) -- (6);
\draw[disagreement_arc] (8) edge[bend left=20] (7);
\end{tikzpicture} 
        \\
        \begin{tikzpicture}[scale=0.5]
\node[vertex] (0) at (1.1, 4.5) {\scriptsize 0};
\node[vertex] (1) at (0.3, 3.9) {\scriptsize 1};
\node[vertex] (11) at (4.06, 0.24) {\scriptsize 11};
\node[vertex] (12) at (2.6, 4.5) {\scriptsize 12};
\node[vertex] (2) at (0.35, 3.07) {\scriptsize 2};
\node[vertex] (3) at (1.08, 2.13) {\scriptsize 3};
\node[vertex] (8) at (2.8, 0.24) {\scriptsize 8};
\node[vertex] (15) at (3.8, 1.9) {\scriptsize 15};
\node[vertex] (13) at (4, 4.5) {\scriptsize 13};
\node[vertex] (4) at (4, 3) {\scriptsize 4};
\node[vertex] (5) at (5.00, 2.13) {\scriptsize 5};
\node[vertex] (14) at (5.00, 1.18) {\scriptsize 14};
\node[vertex] (6) at (0.35, 1.24) {\scriptsize 6};
\node[vertex] (7) at (1.8, 0.24) {\scriptsize 7};
\node[vertex] (9) at (4.06, 1.18) {\scriptsize 9};
\node[vertex] (10) at (5.00, 0.24) {\scriptsize 10};
\node[vertex] (16) at (5, 4.5) {\scriptsize 16};
\node[vertex] (17) at (5, 3) {\scriptsize 17};
\draw[-latex] (0) edge[bend left=10] (1);
\draw[-latex] (0) edge[bend left=10] (11);
\draw[-latex] (0) edge[bend left=10] (12);
\draw[-latex] (1) edge[bend left=10] (0);
\draw[-latex] (1) edge[bend left=10] (11);
\draw[-latex] (1) edge[bend right=10] (12);
\draw[-latex] (11) edge[bend left=10] (0);
\draw[-latex] (11) edge[bend left=10] (1);
\draw[-latex] (11) edge[bend left=10] (12);
\draw[-latex] (2) edge[bend left=10] (3);
\draw[-latex] (2) edge[bend left=10] (8);
\draw[-latex] (2) edge[bend left=10] (12);
\draw[-latex] (2) edge[bend left=10] (15);
\draw[-latex] (3) edge[bend left=10] (13);
\draw[-latex] (8) edge[bend left=10] (2);
\draw[-latex] (8) edge[bend left=10] (3);
\draw[-latex] (8) edge[bend left=10] (15);
\draw[-latex] (4) edge[bend left=10] (5);
\draw[-latex] (4) edge[bend left=10] (13);
\draw[-latex] (4) edge[bend left=10] (14);
\draw[-latex] (6) edge[bend left=10] (7);
\draw[-latex] (7) edge[bend left=10] (6);
\draw[-latex] (9) edge[bend left=10] (10);
\draw[-latex] (10) edge[bend left=10] (3);
\draw[-latex] (10) edge[bend left=10] (9);
\draw[-latex] (16) edge[bend left=10] (17);
\end{tikzpicture} &
        \begin{tikzpicture}
\def\r{0.15}
\node[cluster, minimum size=5.887*\r cm] (C0) at (0*\r, 10*\r) {};
\node[cluster, minimum size=3.000*\r cm] (C1) at (5*\r, 10*\r) {};
\node[cluster, minimum size=3.000*\r cm] (C2) at (8*\r, 0*\r) {};
\node[cluster, minimum size=3.000*\r cm] (C3) at (16*\r, 5*\r) {};
\node[cluster, minimum size=3.000*\r cm] (C4) at (16*\r, 0*\r) {};
\node[cluster, minimum size=5.500*\r cm] (C5) at (16*\r, 10*\r) {};
\node[cluster, minimum size=3.000*\r cm] (C6) at (5*\r, 5*\r) {};
\node[cluster, minimum size=5.500*\r cm] (C7) at (10*\r, 10*\r) {};
\node[cluster, minimum size=3.000*\r cm] (C8) at (0*\r, 3*\r) {};
\node[cluster, minimum size=3.000*\r cm] (C9) at (12*\r, 0*\r) {};
\node[cluster, minimum size=3.000*\r cm] (C10) at (20*\r, 0*\r) {};
\node[cluster, minimum size=3.000*\r cm] (C11) at (2*\r, 0*\r) {};
\node[cluster, minimum size=3.000*\r cm] (C12) at (21*\r, 10*\r) {};
\node[cluster, minimum size=3.000*\r cm] (C13) at (21*\r, 5*\r) {};
\node[vertex, fill=white, shift=(0.000:1.443*\r cm), minimum size=2.0*\r cm] (0) at (C0) {\scriptsize 0};
\node[vertex, fill=white, shift=(120.000:1.443*\r cm), minimum size=2.0*\r cm] (1) at (C0) {\scriptsize 1};
\node[vertex, fill=white, shift=(240.000:1.443*\r cm), minimum size=2.0*\r cm] (11) at (C0) {\scriptsize 11};
\node[vertex, fill=white, shift=(0.000:0.000*\r cm), minimum size=2.0*\r cm] (2) at (C1) {\scriptsize 2};
\node[vertex, fill=white, shift=(0.000:0.000*\r cm), minimum size=2.0*\r cm] (3) at (C2) {\scriptsize 3};
\node[vertex, fill=white, shift=(0.000:0.000*\r cm), minimum size=2.0*\r cm] (4) at (C3) {\scriptsize 4};
\node[vertex, fill=white, shift=(0.000:0.000*\r cm), minimum size=2.0*\r cm] (5) at (C4) {\scriptsize 5};
\node[vertex, fill=white, shift=(0.000:1.250*\r cm), minimum size=2.0*\r cm] (6) at (C5) {\scriptsize 6};
\node[vertex, fill=white, shift=(180.000:1.250*\r cm), minimum size=2.0*\r cm] (7) at (C5) {\scriptsize 7};
\node[vertex, fill=white, shift=(0.000:0.000*\r cm), minimum size=2.0*\r cm] (8) at (C6) {\scriptsize 8};
\node[vertex, fill=white, shift=(0.000:1.250*\r cm), minimum size=2.0*\r cm] (9) at (C7) {\scriptsize 9};
\node[vertex, fill=white, shift=(180.000:1.250*\r cm), minimum size=2.0*\r cm] (10) at (C7) {\scriptsize 10};
\node[vertex, fill=white, shift=(0.000:0.000*\r cm), minimum size=2.0*\r cm] (12) at (C8) {\scriptsize 12};
\node[vertex, fill=white, shift=(0.000:0.000*\r cm), minimum size=2.0*\r cm] (13) at (C9) {\scriptsize 13};
\node[vertex, fill=white, shift=(0.000:0.000*\r cm), minimum size=2.0*\r cm] (14) at (C10) {\scriptsize 14};
\node[vertex, fill=white, shift=(0.000:0.000*\r cm), minimum size=2.0*\r cm] (15) at (C11) {\scriptsize 15};
\node[vertex, fill=white, shift=(0.000:0.000*\r cm), minimum size=2.0*\r cm] (16) at (C12) {\scriptsize 16};
\node[vertex, fill=white, shift=(0.000:0.000*\r cm), minimum size=2.0*\r cm] (17) at (C13) {\scriptsize 17};
\draw[cluster_arc] (C0) -- (C8);
\draw[cluster_arc] (C1) -- (C6);
\draw[cluster_arc] (C1) -- (C8);
\draw[cluster_arc] (C3) -- (C9);
\draw[cluster_arc] (C3) -- (C10);
\draw[cluster_arc] (C3) -- (C4);
\draw[cluster_arc] (C6) -- (C2);
\draw[cluster_arc] (C6) -- (C11);
\draw[cluster_arc] (C12) -- (C13);
\draw[disagreement_arc] (3) -- (13);
\draw[disagreement_arc] (8) edge[bend right=20] (2);
\draw[disagreement_arc] (10) -- (3);
\end{tikzpicture} &
        \begin{tikzpicture}
\def\r{0.15}
\node[cluster, minimum size=5.887*\r cm] (C0) at (51*\r, -5*\r) {};
\node[cluster, minimum size=5.500*\r cm] (C1) at (57*\r, 0*\r) {};
\node[cluster, minimum size=3.000*\r cm] (C2) at (61*\r, 6*\r) {};
\node[cluster, minimum size=3.000*\r cm] (C3) at (62*\r, -2*\r) {};
\node[cluster, minimum size=3.000*\r cm] (C4) at (64*\r, -6*\r) {};
\node[cluster, minimum size=5.500*\r cm] (C5) at (51*\r, 2*\r) {};
\node[cluster, minimum size=5.500*\r cm] (C6) at (56*\r, 6*\r) {};
\node[cluster, minimum size=3.000*\r cm] (C7) at (56.535*\r, -6*\r) {};
\node[cluster, minimum size=3.000*\r cm] (C8) at (62*\r, 2*\r) {};
\node[cluster, minimum size=3.000*\r cm] (C9) at (66*\r, -2*\r) {};
\node[cluster, minimum size=3.000*\r cm] (C10) at (60*\r, -6*\r) {};
\node[cluster, minimum size=3.000*\r cm] (C11) at (66*\r, 6*\r) {};
\node[cluster, minimum size=3.000*\r cm] (C12) at (66*\r, 2*\r) {};
\node[vertex, fill=white, shift=(0.000:1.443*\r cm), minimum size=2.0*\r cm] (0) at (C0) {\scriptsize 0};
\node[vertex, fill=white, shift=(120.000:1.443*\r cm), minimum size=2.0*\r cm] (1) at (C0) {\scriptsize 1};
\node[vertex, fill=white, shift=(240.000:1.443*\r cm), minimum size=2.0*\r cm] (11) at (C0) {\scriptsize 11};
\node[vertex, fill=white, shift=(180.000:1.250*\r cm), minimum size=2.0*\r cm] (2) at (C1) {\scriptsize 2};
\node[vertex, fill=white, shift=(0.000:1.250*\r cm), minimum size=2.0*\r cm] (8) at (C1) {\scriptsize 8};
\node[vertex, fill=white, shift=(0.000:0.000*\r cm), minimum size=2.0*\r cm] (3) at (C2) {\scriptsize 3};
\node[vertex, fill=white, shift=(0.000:0.000*\r cm), minimum size=2.0*\r cm] (4) at (C3) {\scriptsize 4};
\node[vertex, fill=white, shift=(0.000:0.000*\r cm), minimum size=2.0*\r cm] (5) at (C4) {\scriptsize 5};
\node[vertex, fill=white, shift=(0.000:1.250*\r cm), minimum size=2.0*\r cm] (6) at (C5) {\scriptsize 6};
\node[vertex, fill=white, shift=(180.000:1.250*\r cm), minimum size=2.0*\r cm] (7) at (C5) {\scriptsize 7};
\node[vertex, fill=white, shift=(0.000:1.250*\r cm), minimum size=2.0*\r cm] (10) at (C6) {\scriptsize 10};
\node[vertex, fill=white, shift=(180.000:1.250*\r cm), minimum size=2.0*\r cm] (9) at (C6) {\scriptsize 9};
\node[vertex, fill=white, shift=(0.000:0.000*\r cm), minimum size=2.0*\r cm] (12) at (C7) {\scriptsize 12};
\node[vertex, fill=white, shift=(0.000:0.000*\r cm), minimum size=2.0*\r cm] (13) at (C8) {\scriptsize 13};
\node[vertex, fill=white, shift=(0.000:0.000*\r cm), minimum size=2.0*\r cm] (14) at (C9) {\scriptsize 14};
\node[vertex, fill=white, shift=(0.000:0.000*\r cm), minimum size=2.0*\r cm] (15) at (C10) {\scriptsize 15};
\node[vertex, fill=white, shift=(0.000:0.000*\r cm), minimum size=2.0*\r cm] (16) at (C11) {\scriptsize 16};
\node[vertex, fill=white, shift=(0.000:0.000*\r cm), minimum size=2.0*\r cm] (17) at (C12) {\scriptsize 17};
\draw[disagreement_arc] (0) -- (12);
\draw[disagreement_arc] (1) edge[bend left=20] (12);
\draw[disagreement_arc] (2) -- (3);
\draw[disagreement_arc] (2) -- (12);
\draw[disagreement_arc] (2) -- (15);
\draw[disagreement_arc] (3) -- (13);
\draw[disagreement_arc] (4) -- (5);
\draw[disagreement_arc] (4) -- (13);
\draw[disagreement_arc] (4) -- (14);
\draw[disagreement_arc] (8) -- (3);
\draw[disagreement_arc] (8) -- (15);
\draw[disagreement_arc] (10) -- (3);
\draw[disagreement_arc] (11) -- (12);
\draw[disagreement_arc] (16) -- (17);
\end{tikzpicture} &
        \begin{tikzpicture}
\def\r{0.15}
\node[cluster, minimum size=3.000*\r cm] (C0) at (2*\r, 36*\r) {};
\node[cluster, minimum size=3.000*\r cm] (C1) at (6*\r, 33.5*\r) {};
\node[cluster, minimum size=3.000*\r cm] (C2) at (3.388*\r, 31*\r) {};
\node[cluster, minimum size=3.000*\r cm] (C3) at (-2*\r, 23*\r) {};
\node[cluster, minimum size=3.000*\r cm] (C4) at (-5*\r, 36*\r) {};
\node[cluster, minimum size=3.000*\r cm] (C5) at (-11*\r, 31*\r) {};
\node[cluster, minimum size=3.000*\r cm] (C6) at (10*\r, 27.5*\r) {};
\node[cluster, minimum size=3.000*\r cm] (C7) at (10*\r, 23*\r) {};
\node[cluster, minimum size=3.000*\r cm] (C8) at (0.5*\r, 27.5*\r) {};
\node[cluster, minimum size=3.000*\r cm] (C9) at (-7*\r, 23*\r) {};
\node[cluster, minimum size=3.000*\r cm] (C10) at (-4.5*\r, 27.5*\r) {};
\node[cluster, minimum size=3.000*\r cm] (C11) at (10*\r, 31*\r) {};
\node[cluster, minimum size=3.000*\r cm] (C12) at (6.5*\r, 27.5*\r) {};
\node[cluster, minimum size=3.000*\r cm] (C13) at (-2*\r, 31*\r) {};
\node[cluster, minimum size=3.000*\r cm] (C14) at (-7*\r, 31*\r) {};
\node[cluster, minimum size=3.000*\r cm] (C15) at (3*\r, 23*\r) {};
\node[cluster, minimum size=3.000*\r cm] (C16) at (-11*\r, 27.5*\r) {};
\node[cluster, minimum size=3.000*\r cm] (C17) at (-11*\r, 23*\r) {};
\node[vertex, fill=white, shift=(0.000:0.000*\r cm), minimum size=2.0*\r cm] (0) at (C0) {\scriptsize 0};
\node[vertex, fill=white, shift=(0.000:0.000*\r cm), minimum size=2.0*\r cm] (1) at (C1) {\scriptsize 1};
\node[vertex, fill=white, shift=(0.000:0.000*\r cm), minimum size=2.0*\r cm] (2) at (C2) {\scriptsize 2};
\node[vertex, fill=white, shift=(0.000:0.000*\r cm), minimum size=2.0*\r cm] (3) at (C3) {\scriptsize 3};
\node[vertex, fill=white, shift=(0.000:0.000*\r cm), minimum size=2.0*\r cm] (4) at (C4) {\scriptsize 4};
\node[vertex, fill=white, shift=(0.000:0.000*\r cm), minimum size=2.0*\r cm] (5) at (C5) {\scriptsize 5};
\node[vertex, fill=white, shift=(0.000:0.000*\r cm), minimum size=2.0*\r cm] (6) at (C6) {\scriptsize 6};
\node[vertex, fill=white, shift=(0.000:0.000*\r cm), minimum size=2.0*\r cm] (7) at (C7) {\scriptsize 7};
\node[vertex, fill=white, shift=(0.000:0.000*\r cm), minimum size=2.0*\r cm] (8) at (C8) {\scriptsize 8};
\node[vertex, fill=white, shift=(0.000:0.000*\r cm), minimum size=2.0*\r cm] (9) at (C9) {\scriptsize 9};
\node[vertex, fill=white, shift=(0.000:0.000*\r cm), minimum size=2.0*\r cm] (10) at (C10) {\scriptsize 10};
\node[vertex, fill=white, shift=(0.000:0.000*\r cm), minimum size=2.0*\r cm] (11) at (C11) {\scriptsize 11};
\node[vertex, fill=white, shift=(0.000:0.000*\r cm), minimum size=2.0*\r cm] (12) at (C12) {\scriptsize 12};
\node[vertex, fill=white, shift=(0.000:0.000*\r cm), minimum size=2.0*\r cm] (13) at (C13) {\scriptsize 13};
\node[vertex, fill=white, shift=(0.000:0.000*\r cm), minimum size=2.0*\r cm] (14) at (C14) {\scriptsize 14};
\node[vertex, fill=white, shift=(0.000:0.000*\r cm), minimum size=2.0*\r cm] (15) at (C15) {\scriptsize 15};
\node[vertex, fill=white, shift=(0.000:0.000*\r cm), minimum size=2.0*\r cm] (16) at (C16) {\scriptsize 16};
\node[vertex, fill=white, shift=(0.000:0.000*\r cm), minimum size=2.0*\r cm] (17) at (C17) {\scriptsize 17};
\draw[cluster_arc] (C0) -- (C1);
\draw[cluster_arc] (C1) -- (C11);
\draw[cluster_arc] (C2) -- (C8);
\draw[cluster_arc] (C2) -- (C12);
\draw[cluster_arc] (C4) -- (C13);
\draw[cluster_arc] (C4) -- (C5);
\draw[cluster_arc] (C4) -- (C14);
\draw[cluster_arc] (C6) -- (C7);
\draw[cluster_arc] (C8) -- (C3);
\draw[cluster_arc] (C8) -- (C15);
\draw[cluster_arc] (C10) -- (C9);
\draw[cluster_arc] (C10) -- (C3);
\draw[cluster_arc] (C11) -- (C12);
\draw[cluster_arc] (C16) -- (C17);
\draw[disagreement_arc] (1) edge[bend left=20] (0);
\draw[disagreement_arc] (3) -- (13);
\draw[disagreement_arc] (7) edge[bend right=20] (6);
\draw[disagreement_arc] (8) edge[bend right=20] (2);
\draw[disagreement_arc] (9) edge[bend left=20] (10);
\draw[disagreement_arc] (11) edge[bend right=30] (0);
\draw[disagreement_arc] (11) edge[bend left=20] (1);
\end{tikzpicture} 
        \\
        \begin{tikzpicture}[scale=0.6]
\pgfdeclarelayer{bg}
\pgfsetlayers{bg,main}
\node[vertex, fill=white, fill opacity=0.8, text opacity=1] (0) at (2.5, 4.8) {\scriptsize 0};
\node[vertex, fill=white, fill opacity=0.8, text opacity=1] (1) at (4, 4.5) {\scriptsize 1};
\node[vertex, fill=white, fill opacity=0.8, text opacity=1] (3) at (0.99, 0.90) {\scriptsize 3};
\node[vertex, fill=white, fill opacity=0.8, text opacity=1] (5) at (5.00, 2) {\scriptsize 5};
\node[vertex, fill=white, fill opacity=0.8, text opacity=1] (6) at (5.00, 0.90) {\scriptsize 6};
\node[vertex, fill=white, fill opacity=0.8, text opacity=1] (7) at (1, 4.4) {\scriptsize 7};
\node[vertex, fill=white, fill opacity=0.8, text opacity=1] (8) at (2.6, 2.6) {\scriptsize 8};
\node[vertex, fill=white, fill opacity=0.8, text opacity=1] (11) at (4.9, 3.5) {\scriptsize 11};
\node[vertex, fill=white, fill opacity=0.8, text opacity=1] (2) at (2.45, 1.1) {\scriptsize 2};
\node[vertex, fill=white, fill opacity=0.8, text opacity=1] (10) at (3.9, 1.1) {\scriptsize 10};
\node[vertex, fill=white, fill opacity=0.8, text opacity=1] (9) at (4.08, 1.9) {\scriptsize 9};
\node[vertex, fill=white, fill opacity=0.8, text opacity=1] (4) at (1.1, 3.31) {\scriptsize 4};
\node[vertex, fill=white, fill opacity=0.8, text opacity=1] (12) at (3.5, 3.3) {\scriptsize 12};
\begin{pgfonlayer}{bg}
\draw[-latex] (0) edge[bend left=10] (1);
\draw[-latex] (0) edge[bend left=10] (3);
\draw[-latex] (0) edge[bend left=10] (5);
\draw[-latex] (0) edge[bend left=10] (6);
\draw[-latex] (0) edge[bend left=10] (7);
\draw[-latex] (0) edge[bend left=10] (8);
\draw[-latex] (0) edge[bend left=10] (11);
\draw[-latex] (1) edge[bend left=10] (0);
\draw[-latex] (1) edge[bend left=10] (3);
\draw[-latex] (1) edge[bend left=10] (5);
\draw[-latex] (1) edge[bend left=10] (11);
\draw[-latex] (3) edge[bend left=10] (0);
\draw[-latex] (3) edge[bend left=10] (1);
\draw[-latex] (3) edge[bend left=10] (5);
\draw[-latex] (3) edge[bend left=10] (6);
\draw[-latex] (3) edge[bend left=10] (7);
\draw[-latex] (3) edge[bend left=10] (8);
\draw[-latex] (3) edge[bend left=10] (9);
\draw[-latex] (3) edge[bend left=10] (11);
\draw[-latex] (5) edge[bend left=10] (0);
\draw[-latex] (5) edge[bend left=10] (1);
\draw[-latex] (5) edge[bend left=10] (3);
\draw[-latex] (5) edge[bend left=10] (6);
\draw[-latex] (5) edge[bend left=10] (11);
\draw[-latex] (6) edge[bend left=10] (0);
\draw[-latex] (6) edge[bend left=10] (3);
\draw[-latex] (6) edge[bend left=10] (4);
\draw[-latex] (6) edge[bend left=10] (5);
\draw[-latex] (6) edge[bend left=10] (11);
\draw[-latex] (7) edge[bend left=10] (0);
\draw[-latex] (7) edge[bend left=10] (8);
\draw[-latex] (7) edge[bend left=10] (9);
\draw[-latex] (8) edge[bend left=10] (0);
\draw[-latex] (8) edge[bend left=10] (7);
\draw[-latex] (8) edge[bend left=10] (9);
\draw[-latex] (11) edge[bend left=10] (0);
\draw[-latex] (11) edge[bend left=10] (1);
\draw[-latex] (11) edge[bend left=10] (3);
\draw[-latex] (11) edge[bend left=10] (5);
\draw[-latex] (11) edge[bend left=10] (6);
\draw[-latex] (2) edge[bend left=10] (0);
\draw[-latex] (2) edge[bend left=10] (1);
\draw[-latex] (2) edge[bend left=10] (3);
\draw[-latex] (2) edge[bend left=10] (5);
\draw[-latex] (2) edge[bend left=10] (6);
\draw[-latex] (2) edge[bend left=10] (10);
\draw[-latex] (2) edge[bend left=10] (11);
\draw[-latex] (10) edge[bend left=10] (1);
\draw[-latex] (10) edge[bend left=10] (3);
\draw[-latex] (10) edge[bend left=10] (5);
\draw[-latex] (10) edge[bend left=10] (6);
\draw[-latex] (10) edge[bend left=10] (7);
\draw[-latex] (10) edge[bend left=10] (8);
\draw[-latex] (10) edge[bend left=10] (9);
\draw[-latex] (10) edge[bend left=10] (11);
\draw[-latex] (9) edge[bend left=10] (7);
\draw[-latex] (9) edge[bend left=10] (8);
\draw[-latex] (4) edge[bend left=10] (0);
\draw[-latex] (4) edge[bend left=10] (1);
\draw[-latex] (4) edge[bend left=10] (3);
\draw[-latex] (4) edge[bend left=10] (5);
\draw[-latex] (4) edge[bend left=10] (6);
\draw[-latex] (4) edge[bend left=10] (7);
\draw[-latex] (4) edge[bend left=10] (8);
\draw[-latex] (4) edge[bend left=10] (9);
\draw[-latex] (4) edge[bend left=10] (10);
\draw[-latex] (4) edge[bend left=10] (11);
\draw[-latex] (4) edge[bend left=10] (12);
\draw[-latex] (12) edge[bend left=10] (1);
\draw[-latex] (12) edge[bend left=10] (2);
\draw[-latex] (12) edge[bend left=10] (3);
\draw[-latex] (12) edge[bend left=10] (5);
\draw[-latex] (12) edge[bend left=10] (7);
\draw[-latex] (12) edge[bend left=10] (8);
\draw[-latex] (12) edge[bend left=10] (9);
\draw[-latex] (12) edge[bend left=10] (10);
\draw[-latex] (12) edge[bend left=10] (11);
\end{pgfonlayer}
\end{tikzpicture} &
        \begin{tikzpicture}
\def\r{0.15}
\node[cluster, minimum size=8.000*\r cm] (C0) at (0*\r, 0*\r) {};
\node[cluster, minimum size=3.000*\r cm] (C1) at (0*\r, 7*\r) {};
\node[cluster, minimum size=3.000*\r cm] (C2) at (-1*\r, 11*\r) {};
\node[cluster, minimum size=5.887*\r cm] (C3) at (8*\r, 0*\r) {};
\node[cluster, minimum size=3.000*\r cm] (C4) at (8*\r, 7*\r) {};
\node[cluster, minimum size=3.000*\r cm] (C5) at (3.5*\r, 9*\r) {};
\node[vertex, fill=white, shift=(0.000:2.500*\r cm), minimum size=2.0*\r cm] (0) at (C0) {\scriptsize 0};
\node[vertex, fill=white, shift=(240:2.500*\r cm), minimum size=2.0*\r cm] (1) at (C0) {\scriptsize 1};
\node[vertex, fill=white, shift=(300:2.500*\r cm), minimum size=2.0*\r cm] (3) at (C0) {\scriptsize 3};
\node[vertex, fill=white, shift=(180.000:2.500*\r cm), minimum size=2.0*\r cm] (5) at (C0) {\scriptsize 5};
\node[vertex, fill=white, shift=(120:2.500*\r cm), minimum size=2.0*\r cm] (6) at (C0) {\scriptsize 6};
\node[vertex, fill=white, shift=(60:2.500*\r cm), minimum size=2.0*\r cm] (11) at (C0) {\scriptsize 11};
\node[vertex, fill=white, shift=(0.000:0.000*\r cm), minimum size=2.0*\r cm] (2) at (C1) {\scriptsize 2};
\node[vertex, fill=white, shift=(0.000:0.000*\r cm), minimum size=2.0*\r cm] (4) at (C2) {\scriptsize 4};
\node[vertex, fill=white, shift=(240:1.443*\r cm), minimum size=2.0*\r cm] (7) at (C3) {\scriptsize 7};
\node[vertex, fill=white, shift=(120.000:1.443*\r cm), minimum size=2.0*\r cm] (8) at (C3) {\scriptsize 8};
\node[vertex, fill=white, shift=(0:1.443*\r cm), minimum size=2.0*\r cm] (9) at (C3) {\scriptsize 9};
\node[vertex, fill=white, shift=(0.000:0.000*\r cm), minimum size=2.0*\r cm] (10) at (C4) {\scriptsize 10};
\node[vertex, fill=white, shift=(0.000:0.000*\r cm), minimum size=2.0*\r cm] (12) at (C5) {\scriptsize 12};
\draw[cluster_arc] (C1) -- (C0);
\draw[cluster_arc] (C2) -- (C5);
\draw[cluster_arc] (C4) -- (C0);
\draw[cluster_arc] (C4) -- (C3);
\draw[cluster_arc] (C5) -- (C4);
\draw[disagreement_arc] (0) edge[bend left=20] (7);
\draw[disagreement_arc] (0) edge[bend left=20] (8);
\draw[disagreement_arc] (1) edge[bend right=20] (6);
\draw[disagreement_arc] (2) -- (10);
\draw[disagreement_arc] (3) -- (7);
\draw[disagreement_arc] (3) -- (8);
\draw[disagreement_arc] (3) edge[out=-10, in=-120] (9);
\draw[disagreement_arc] (6) -- (1);
\draw[disagreement_arc] (6) edge[bend left=20] (4);
\draw[disagreement_arc] (7) -- (0);
\draw[disagreement_arc] (8) -- (0);
\draw[disagreement_arc] (10) -- (0);
\draw[disagreement_arc] (12) -- (0);
\draw[disagreement_arc] (12) -- (6);
\draw[disagreement_arc] (12) -- (2);
\end{tikzpicture} &
        \begin{tikzpicture}
\def\r{0.15}
\node[cluster, minimum size=8.762*\r cm] (C0) at (0*\r, 0*\r) {};
\node[cluster, minimum size=3.000*\r cm] (C1) at (0*\r, 7*\r) {};
\node[cluster, minimum size=5.887*\r cm] (C2) at (10*\r, 0*\r) {};
\node[cluster, minimum size=3.000*\r cm] (C3) at (10*\r, 6*\r) {};
\node[cluster, minimum size=3.000*\r cm] (C4) at (5*\r, 8*\r) {};
\node[vertex, fill=white, shift=(0.000:2.881*\r cm), minimum size=2.0*\r cm] (0) at (C0) {\scriptsize 0};
\node[vertex, fill=white, shift=(51.429:2.881*\r cm), minimum size=2.0*\r cm] (1) at (C0) {\scriptsize 1};
\node[vertex, fill=white, shift=(102.857:2.881*\r cm), minimum size=2.0*\r cm] (3) at (C0) {\scriptsize 3};
\node[vertex, fill=white, shift=(154.286:2.881*\r cm), minimum size=2.0*\r cm] (4) at (C0) {\scriptsize 4};
\node[vertex, fill=white, shift=(257.143:2.881*\r cm), minimum size=2.0*\r cm] (5) at (C0) {\scriptsize 5};
\node[vertex, fill=white, shift=(205.714:2.881*\r cm), minimum size=2.0*\r cm] (6) at (C0) {\scriptsize 6};
\node[vertex, fill=white, shift=(308.571:2.881*\r cm), minimum size=2.0*\r cm] (11) at (C0) {\scriptsize 11};
\node[vertex, fill=white, shift=(0.000:0.000*\r cm), minimum size=2.0*\r cm] (2) at (C1) {\scriptsize 2};
\node[vertex, fill=white, shift=(0.000:1.443*\r cm), minimum size=2.0*\r cm] (7) at (C2) {\scriptsize 7};
\node[vertex, fill=white, shift=(120.000:1.443*\r cm), minimum size=2.0*\r cm] (8) at (C2) {\scriptsize 8};
\node[vertex, fill=white, shift=(240.000:1.443*\r cm), minimum size=2.0*\r cm] (9) at (C2) {\scriptsize 9};
\node[vertex, fill=white, shift=(0.000:0.000*\r cm), minimum size=2.0*\r cm] (10) at (C3) {\scriptsize 10};
\node[vertex, fill=white, shift=(0.000:0.000*\r cm), minimum size=2.0*\r cm] (12) at (C4) {\scriptsize 12};
\draw[disagreement_arc] (0) -- (4);
\draw[disagreement_arc] (0) -- (7);
\draw[disagreement_arc] (0) -- (8);
\draw[disagreement_arc] (1) -- (4);
\draw[disagreement_arc] (1) -- (6);
\draw[disagreement_arc] (2) -- (0);
\draw[disagreement_arc] (2) -- (1);
\draw[disagreement_arc] (2) -- (3);
\draw[disagreement_arc] (2) -- (5);
\draw[disagreement_arc] (2) -- (6);
\draw[disagreement_arc] (2) -- (11);
\draw[disagreement_arc] (2) -- (10);
\draw[disagreement_arc] (3) -- (4);
\draw[disagreement_arc] (3) -- (7);
\draw[disagreement_arc] (3) -- (8);
\draw[disagreement_arc] (3) -- (9);
\draw[disagreement_arc] (4) -- (7);
\draw[disagreement_arc] (4) -- (8);
\draw[disagreement_arc] (4) -- (9);
\draw[disagreement_arc] (4) -- (10);
\draw[disagreement_arc] (4) -- (12);
\draw[disagreement_arc] (5) -- (4);
\draw[disagreement_arc] (6) -- (1);
\draw[disagreement_arc] (7) -- (0);
\draw[disagreement_arc] (8) -- (0);
\draw[disagreement_arc] (10) -- (1);
\draw[disagreement_arc] (10) -- (3);
\draw[disagreement_arc] (10) -- (5);
\draw[disagreement_arc] (10) -- (6);
\draw[disagreement_arc] (10) -- (11);
\draw[disagreement_arc] (10) -- (7);
\draw[disagreement_arc] (10) -- (8);
\draw[disagreement_arc] (10) -- (9);
\draw[disagreement_arc] (11) -- (4);
\draw[disagreement_arc] (12) -- (1);
\draw[disagreement_arc] (12) -- (3);
\draw[disagreement_arc] (12) -- (5);
\draw[disagreement_arc] (12) -- (11);
\draw[disagreement_arc] (12) -- (2);
\draw[disagreement_arc] (12) -- (7);
\draw[disagreement_arc] (12) -- (8);
\draw[disagreement_arc] (12) -- (9);
\draw[disagreement_arc] (12) -- (10);
\end{tikzpicture} &
        \begin{tikzpicture}
\def\r{0.15}
\node[cluster, minimum size=3.000*\r cm] (C0) at (5*\r, 12.5*\r) {};
\node[cluster, minimum size=3.000*\r cm] (C1) at (10*\r, 5*\r) {};
\node[cluster, minimum size=3.000*\r cm] (C2) at (0*\r, 12.5*\r) {};
\node[cluster, minimum size=3.000*\r cm] (C3) at (15*\r, 7.5*\r) {};
\node[cluster, minimum size=3.000*\r cm] (C4) at (10*\r, 15*\r) {};
\node[cluster, minimum size=3.000*\r cm] (C5) at (5*\r, 2.5*\r) {};
\node[cluster, minimum size=3.000*\r cm] (C6) at (0*\r, 5*\r) {};
\node[cluster, minimum size=3.000*\r cm] (C7) at (15*\r, 2.5*\r) {};
\node[cluster, minimum size=3.000*\r cm] (C8) at (10*\r, 0*\r) {};
\node[cluster, minimum size=3.000*\r cm] (C9) at (20*\r, 5*\r) {};
\node[cluster, minimum size=3.000*\r cm] (C10) at (20*\r, 10*\r) {};
\node[cluster, minimum size=3.000*\r cm] (C11) at (0*\r, 0*\r) {};
\node[cluster, minimum size=3.000*\r cm] (C12) at (15*\r, 12.5*\r) {};
\node[vertex, fill=white, shift=(0.000:0.000*\r cm), minimum size=2.0*\r cm] (0) at (C0) {\scriptsize 0};
\node[vertex, fill=white, shift=(0.000:0.000*\r cm), minimum size=2.0*\r cm] (1) at (C1) {\scriptsize 1};
\node[vertex, fill=white, shift=(0.000:0.000*\r cm), minimum size=2.0*\r cm] (2) at (C2) {\scriptsize 2};
\node[vertex, fill=white, shift=(0.000:0.000*\r cm), minimum size=2.0*\r cm] (3) at (C3) {\scriptsize 3};
\node[vertex, fill=white, shift=(0.000:0.000*\r cm), minimum size=2.0*\r cm] (4) at (C4) {\scriptsize 4};
\node[vertex, fill=white, shift=(0.000:0.000*\r cm), minimum size=2.0*\r cm] (5) at (C5) {\scriptsize 5};
\node[vertex, fill=white, shift=(0.000:0.000*\r cm), minimum size=2.0*\r cm] (6) at (C6) {\scriptsize 6};
\node[vertex, fill=white, shift=(0.000:0.000*\r cm), minimum size=2.0*\r cm] (7) at (C7) {\scriptsize 7};
\node[vertex, fill=white, shift=(0.000:0.000*\r cm), minimum size=2.0*\r cm] (8) at (C8) {\scriptsize 8};
\node[vertex, fill=white, shift=(0.000:0.000*\r cm), minimum size=2.0*\r cm] (9) at (C9) {\scriptsize 9};
\node[vertex, fill=white, shift=(0.000:0.000*\r cm), minimum size=2.0*\r cm] (10) at (C10) {\scriptsize 10};
\node[vertex, fill=white, shift=(0.000:0.000*\r cm), minimum size=2.0*\r cm] (11) at (C11) {\scriptsize 11};
\node[vertex, fill=white, shift=(0.000:0.000*\r cm), minimum size=2.0*\r cm] (12) at (C12) {\scriptsize 12};
\draw[cluster_arc] (C0) -- (C1);
\draw[cluster_arc] (C0) -- (C6);
\draw[cluster_arc] (C0) -- (C7);
\draw[cluster_arc] (C1) -- (C5);
\draw[cluster_arc] (C2) -- (C1);
\draw[cluster_arc] (C2) -- (C6);
\draw[cluster_arc] (C3) -- (C1);
\draw[cluster_arc] (C3) -- (C6);
\draw[cluster_arc] (C3) -- (C9);
\draw[cluster_arc] (C4) -- (C0);
\draw[cluster_arc] (C4) -- (C12);
\draw[cluster_arc] (C5) -- (C11);
\draw[cluster_arc] (C6) -- (C5);
\draw[cluster_arc] (C7) -- (C8);
\draw[cluster_arc] (C9) -- (C7);
\draw[cluster_arc] (C10) -- (C3);
\draw[cluster_arc] (C12) -- (C10);
\draw[disagreement_arc] (0) -- (3);
\draw[disagreement_arc] (1) edge[bend right=15] (0);
\draw[disagreement_arc] (1) edge[bend left=20] (3);
\draw[disagreement_arc] (2) edge[bend left=20] (0);
\draw[disagreement_arc] (2) -- (3);
\draw[disagreement_arc] (2) -- (10);
\draw[disagreement_arc] (3) -- (0);
\draw[disagreement_arc] (5) -- (0);
\draw[disagreement_arc] (5) edge[bend left=20] (1);
\draw[disagreement_arc] (5) edge[bend right=25] (3);
\draw[disagreement_arc] (5) edge[bend left=20] (6);
\draw[disagreement_arc] (6) edge[bend left=15] (0);
\draw[disagreement_arc] (6) edge[bend left=10] (3);
\draw[disagreement_arc] (6) -- (4);
\draw[disagreement_arc] (7) edge[bend right=10] (0);
\draw[disagreement_arc] (7) edge[bend left=20] (9);
\draw[disagreement_arc] (8) -- (0);
\draw[disagreement_arc] (8) edge[bend left=20] (7);
\draw[disagreement_arc] (8) edge[bend right=25] (9);
\draw[disagreement_arc] (11) -- (0);
\draw[disagreement_arc] (11) edge[bend right=25] (1);
\draw[disagreement_arc] (11) edge[bend right=30] (3);
\draw[disagreement_arc] (11) edge[bend left=20] (5);
\draw[disagreement_arc] (11) -- (6);
\draw[disagreement_arc] (12) edge[bend left=23] (2);
\draw[disagreement_arc] (12) -- (6);
\end{tikzpicture} 
        \\
        \begin{tikzpicture}[scale=0.6]
\pgfdeclarelayer{bg}
\pgfsetlayers{bg,main}
\node[vertex, fill=white, fill opacity=0.8, text opacity=1] (0) at (3.56, 4.1) {\scriptsize 0};
\node[vertex, fill=white, fill opacity=0.8, text opacity=1] (1) at (1.5, 3.91) {\scriptsize 1};
\node[vertex, fill=white, fill opacity=0.8, text opacity=1] (2) at (0.5, 3.37) {\scriptsize 2};
\node[vertex, fill=white, fill opacity=0.8, text opacity=1] (4) at (2.8, 1.21) {\scriptsize 4};
\node[vertex, fill=white, fill opacity=0.8, text opacity=1] (6) at (3.56, 2.29) {\scriptsize 6};
\node[vertex, fill=white, fill opacity=0.8, text opacity=1] (7) at (4.5, 0.2) {\scriptsize 7};
\node[vertex, fill=white, fill opacity=0.8, text opacity=1] (9) at (0.67, 0.2) {\scriptsize 9};
\node[vertex, fill=white, fill opacity=0.8, text opacity=1] (10) at (2.6, 3.5) {\scriptsize 10};
\node[vertex, fill=white, fill opacity=0.8, text opacity=1] (11) at (2.2, 2.6) {\scriptsize 11};
\node[vertex, fill=white, fill opacity=0.8, text opacity=1] (12) at (1.7, 1.3) {\scriptsize 12};
\node[vertex, fill=white, fill opacity=0.8, text opacity=1] (13) at (4.04, 2.83) {\scriptsize 13};
\node[vertex, fill=white, fill opacity=0.8, text opacity=1] (3) at (4.04, 1.75) {\scriptsize 3};
\node[vertex, fill=white, fill opacity=0.8, text opacity=1] (5) at (1.34, 0.67) {\scriptsize 5};
\node[vertex, fill=white, fill opacity=0.8, text opacity=1] (8) at (5.00, 1.21) {\scriptsize 8};
\begin{pgfonlayer}{bg}
\draw[-latex] (0) edge[bend left=10] (1);
\draw[-latex] (0) edge[bend left=10] (2);
\draw[-latex] (0) edge[bend left=10] (4);
\draw[-latex] (0) edge[bend left=10] (6);
\draw[-latex] (0) edge[bend left=10] (7);
\draw[-latex] (0) edge[bend left=10] (9);
\draw[-latex] (0) edge[bend left=10] (10);
\draw[-latex] (0) edge[bend left=10] (11);
\draw[-latex] (0) edge[bend left=10] (12);
\draw[-latex] (1) edge[bend left=10] (2);
\draw[-latex] (1) edge[bend left=10] (4);
\draw[-latex] (1) edge[bend left=10] (13);
\draw[-latex] (2) edge[bend left=10] (0);
\draw[-latex] (2) edge[bend left=10] (1);
\draw[-latex] (2) edge[bend left=10] (3);
\draw[-latex] (2) edge[bend left=10] (4);
\draw[-latex] (2) edge[bend left=10] (5);
\draw[-latex] (2) edge[bend left=10] (6);
\draw[-latex] (2) edge[bend left=10] (7);
\draw[-latex] (2) edge[bend left=10] (8);
\draw[-latex] (2) edge[bend left=10] (9);
\draw[-latex] (4) edge[bend left=10] (0);
\draw[-latex] (4) edge[bend left=10] (1);
\draw[-latex] (4) edge[bend left=10] (2);
\draw[-latex] (4) edge[bend left=10] (5);
\draw[-latex] (4) edge[bend left=10] (6);
\draw[-latex] (4) edge[bend left=10] (8);
\draw[-latex] (4) edge[bend left=10] (9);
\draw[-latex] (4) edge[bend left=10] (10);
\draw[-latex] (4) edge[bend left=10] (11);
\draw[-latex] (6) edge[bend left=10] (2);
\draw[-latex] (6) edge[bend left=10] (4);
\draw[-latex] (6) edge[bend left=10] (7);
\draw[-latex] (7) edge[bend left=10] (2);
\draw[-latex] (7) edge[bend left=10] (12);
\draw[-latex] (9) edge[bend left=10] (0);
\draw[-latex] (9) edge[bend left=10] (2);
\draw[-latex] (9) edge[bend left=10] (4);
\draw[-latex] (9) edge[bend left=10] (8);
\draw[-latex] (10) edge[bend left=10] (0);
\draw[-latex] (10) edge[bend left=10] (4);
\draw[-latex] (10) edge[bend left=10] (11);
\draw[-latex] (11) edge[bend left=10] (10);
\draw[-latex] (3) edge[bend left=10] (2);
\draw[-latex] (5) edge[bend left=10] (2);
\draw[-latex] (5) edge[bend left=10] (4);
\draw[-latex] (5) edge[bend left=10] (7);
\draw[-latex] (8) edge[bend right=10] (0);
\draw[-latex] (8) edge[bend left=10] (2);
\draw[-latex] (8) edge[bend left=10] (4);
\draw[-latex] (8) edge[bend left=10] (9);
\end{pgfonlayer}
\end{tikzpicture} &
        \begin{tikzpicture}
\def\r{0.15}
\node[cluster, minimum size=3.000*\r cm] (C0) at (21*\r, 20*\r) {};
\node[cluster, minimum size=3.000*\r cm] (C1) at (23.5*\r, 10*\r) {};
\node[cluster, minimum size=3.000*\r cm] (C2) at (12*\r, 5*\r) {};
\node[cluster, minimum size=3.000*\r cm] (C3) at (8*\r, 10*\r) {};
\node[cluster, minimum size=3.000*\r cm] (C4) at (16*\r, 10*\r) {};
\node[cluster, minimum size=3.000*\r cm] (C5) at (13.25*\r, 15*\r) {};
\node[cluster, minimum size=3.000*\r cm] (C6) at (8*\r, 15*\r) {};
\node[cluster, minimum size=3.000*\r cm] (C7) at (12*\r, 10*\r) {};
\node[cluster, minimum size=5.500*\r cm] (C8) at (18*\r, 15.2*\r) {};
\node[cluster, minimum size=5.500*\r cm] (C9) at (28*\r, 15*\r) {};
\node[cluster, minimum size=3.000*\r cm] (C10) at (23*\r, 15*\r) {};
\node[cluster, minimum size=3.000*\r cm] (C11) at (23.5*\r, 5*\r) {};
\node[vertex, fill=white, shift=(0.000:0.000*\r cm), minimum size=2.0*\r cm] (0) at (C0) {\scriptsize 0};
\node[vertex, fill=white, shift=(0.000:0.000*\r cm), minimum size=2.0*\r cm] (1) at (C1) {\scriptsize 1};
\node[vertex, fill=white, shift=(0.000:0.000*\r cm), minimum size=2.0*\r cm] (2) at (C2) {\scriptsize 2};
\node[vertex, fill=white, shift=(0.000:0.000*\r cm), minimum size=2.0*\r cm] (3) at (C3) {\scriptsize 3};
\node[vertex, fill=white, shift=(0.000:0.000*\r cm), minimum size=2.0*\r cm] (4) at (C4) {\scriptsize 4};
\node[vertex, fill=white, shift=(0.000:0.000*\r cm), minimum size=2.0*\r cm] (5) at (C5) {\scriptsize 5};
\node[vertex, fill=white, shift=(0.000:0.000*\r cm), minimum size=2.0*\r cm] (6) at (C6) {\scriptsize 6};
\node[vertex, fill=white, shift=(0.000:0.000*\r cm), minimum size=2.0*\r cm] (7) at (C7) {\scriptsize 7};
\node[vertex, fill=white, shift=(0.000:1.250*\r cm), minimum size=2.0*\r cm] (8) at (C8) {\scriptsize 8};
\node[vertex, fill=white, shift=(180.000:1.250*\r cm), minimum size=2.0*\r cm] (9) at (C8) {\scriptsize 9};
\node[vertex, fill=white, shift=(0.000:1.250*\r cm), minimum size=2.0*\r cm] (10) at (C9) {\scriptsize 10};
\node[vertex, fill=white, shift=(180.000:1.250*\r cm), minimum size=2.0*\r cm] (11) at (C9) {\scriptsize 11};
\node[vertex, fill=white, shift=(0.000:0.000*\r cm), minimum size=2.0*\r cm] (12) at (C10) {\scriptsize 12};
\node[vertex, fill=white, shift=(0.000:0.000*\r cm), minimum size=2.0*\r cm] (13) at (C11) {\scriptsize 13};
\draw[cluster_arc] (C0) -- (C6);
\draw[cluster_arc] (C0) -- (C9);
\draw[cluster_arc] (C0) -- (C10);
\draw[cluster_arc] (C1) -- (C11);
\draw[cluster_arc] (C1) -- (C4);
\draw[cluster_arc] (C3) -- (C2);
\draw[cluster_arc] (C4) -- (C2);
\draw[cluster_arc] (C5) -- (C4);
\draw[cluster_arc] (C5) -- (C7);
\draw[cluster_arc] (C6) -- (C4);
\draw[cluster_arc] (C6) -- (C7);
\draw[cluster_arc] (C7) -- (C2);
\draw[cluster_arc] (C8) -- (C4);
\draw[disagreement_arc] (0) edge[bend left=30] (1);
\draw[disagreement_arc] (0) edge[bend right=15] (9);
\draw[disagreement_arc] (2) edge[bend right=36] (0);
\draw[disagreement_arc] (2) edge[bend right=20] (1);
\draw[disagreement_arc] (2) edge[bend left=20] (3);
\draw[disagreement_arc] (2) edge[bend right=15] (4);
\draw[disagreement_arc] (2) edge[bend right=15] (5);
\draw[disagreement_arc] (2) -- (6);
\draw[disagreement_arc] (2) edge[bend left=20] (7);
\draw[disagreement_arc] (2) edge[bend right=30] (8);
\draw[disagreement_arc] (2) -- (9);
\draw[disagreement_arc] (4) edge[bend right=30] (0);
\draw[disagreement_arc] (4) edge[bend right=20] (1);
\draw[disagreement_arc] (4) edge[bend right=20] (5);
\draw[disagreement_arc] (4) edge[bend right=20] (6);
\draw[disagreement_arc] (4) -- (8);
\draw[disagreement_arc] (4) -- (9);
\draw[disagreement_arc] (4) edge[bend right=12] (10);
\draw[disagreement_arc] (4) edge[bend right=5] (11);
\draw[disagreement_arc] (7) -- (12);
\draw[disagreement_arc] (8) -- (0);
\draw[disagreement_arc] (9) -- (0);
\draw[disagreement_arc] (10) edge[bend right=20] (0);
\draw[disagreement_arc] (10) edge[bend left=12] (4);
\end{tikzpicture} &
        \begin{tikzpicture}
\def\r{0.15}
\node[cluster, minimum size=8.000*\r cm] (C0) at (20*\r, 6*\r) {};
\node[cluster, minimum size=3.000*\r cm] (C1) at (24*\r, 0*\r) {};
\node[cluster, minimum size=3.000*\r cm] (C2) at (16*\r, 0*\r) {};
\node[cluster, minimum size=5.500*\r cm] (C3) at (12.5*\r, 6*\r) {};
\node[cluster, minimum size=5.500*\r cm] (C4) at (27.5*\r, 6*\r) {};
\node[cluster, minimum size=3.000*\r cm] (C5) at (16*\r, 12*\r) {};
\node[cluster, minimum size=3.000*\r cm] (C6) at (24*\r, 12*\r) {};
\node[vertex, fill=white, shift=(0.000:2.500*\r cm), minimum size=2.0*\r cm] (1) at (C0) {\scriptsize 1};
\node[vertex, fill=white, shift=(60.000:2.500*\r cm), minimum size=2.0*\r cm] (0) at (C0) {\scriptsize 0};
\node[vertex, fill=white, shift=(120.000:2.500*\r cm), minimum size=2.0*\r cm] (4) at (C0) {\scriptsize 4};
\node[vertex, fill=white, shift=(180.000:2.500*\r cm), minimum size=2.0*\r cm] (8) at (C0) {\scriptsize 8};
\node[vertex, fill=white, shift=(240.000:2.500*\r cm), minimum size=2.0*\r cm] (9) at (C0) {\scriptsize 9};
\node[vertex, fill=white, shift=(300.000:2.500*\r cm), minimum size=2.0*\r cm] (2) at (C0) {\scriptsize 2};
\node[vertex, fill=white, shift=(0.000:0.000*\r cm), minimum size=2.0*\r cm] (3) at (C1) {\scriptsize 3};
\node[vertex, fill=white, shift=(0.000:0.000*\r cm), minimum size=2.0*\r cm] (5) at (C2) {\scriptsize 5};
\node[vertex, fill=white, shift=(0.000:1.250*\r cm), minimum size=2.0*\r cm] (6) at (C3) {\scriptsize 6};
\node[vertex, fill=white, shift=(180.000:1.250*\r cm), minimum size=2.0*\r cm] (7) at (C3) {\scriptsize 7};
\node[vertex, fill=white, shift=(0.000:1.250*\r cm), minimum size=2.0*\r cm] (10) at (C4) {\scriptsize 10};
\node[vertex, fill=white, shift=(180.000:1.250*\r cm), minimum size=2.0*\r cm] (11) at (C4) {\scriptsize 11};
\node[vertex, fill=white, shift=(0.000:0.000*\r cm), minimum size=2.0*\r cm] (12) at (C5) {\scriptsize 12};
\node[vertex, fill=white, shift=(0.000:0.000*\r cm), minimum size=2.0*\r cm] (13) at (C6) {\scriptsize 13};
\draw[disagreement_arc] (0) -- (8);
\draw[disagreement_arc] (0) edge[bend left=50] (6);
\draw[disagreement_arc] (0) edge[bend right=50] (7);
\draw[disagreement_arc] (0) edge[bend left=20] (10);
\draw[disagreement_arc] (0) -- (11);
\draw[disagreement_arc] (0) edge[bend right=20] (12);
\draw[disagreement_arc] (1) -- (0);
\draw[disagreement_arc] (1) edge[bend right=20] (8);
\draw[disagreement_arc] (1) edge[bend right=20] (9);
\draw[disagreement_arc] (1) -- (13);
\draw[disagreement_arc] (2) edge[bend right=20] (3);
\draw[disagreement_arc] (2) edge[bend right=10] (5);
\draw[disagreement_arc] (2) edge[bend right=50] (6);
\draw[disagreement_arc] (2) edge[bend left=40] (7);
\draw[disagreement_arc] (3) edge[bend right=20] (2);
\draw[disagreement_arc] (4) edge[bend right=40] (5);
\draw[disagreement_arc] (4) edge[bend right=20] (6);
\draw[disagreement_arc] (4) edge[bend left=50] (10);
\draw[disagreement_arc] (4) edge[bend right=49] (11);
\draw[disagreement_arc] (5) edge[bend right=20] (2);
\draw[disagreement_arc] (5) edge[bend right=50] (4);
\draw[disagreement_arc] (5) edge[bend left=20] (7);
\draw[disagreement_arc] (6) edge[bend right=50] (2);
\draw[disagreement_arc] (6) -- (4);
\draw[disagreement_arc] (7) edge[bend right=50] (2);
\draw[disagreement_arc] (7) -- (6);
\draw[disagreement_arc] (7) edge[bend left=20] (12);
\draw[disagreement_arc] (8) -- (1);
\draw[disagreement_arc] (9) -- (1);
\draw[disagreement_arc] (10) edge[bend right=30] (0);
\draw[disagreement_arc] (10) edge[bend right=40] (4);
\end{tikzpicture} &
        \begin{tikzpicture}
\def\r{0.15}
\node[cluster, minimum size=3.000*\r cm] (C0) at (7*\r, 10*\r) {};
\node[cluster, minimum size=3.000*\r cm] (C1) at (12*\r, 5*\r) {};
\node[cluster, minimum size=3.000*\r cm] (C2) at (16*\r, 14*\r) {};
\node[cluster, minimum size=3.000*\r cm] (C3) at (20*\r, 10*\r) {};
\node[cluster, minimum size=3.000*\r cm] (C4) at (12*\r, 0*\r) {};
\node[cluster, minimum size=3.000*\r cm] (C5) at (20*\r, 5*\r) {};
\node[cluster, minimum size=3.000*\r cm] (C6) at (16*\r, 5*\r) {};
\node[cluster, minimum size=3.000*\r cm] (C7) at (19*\r, 0*\r) {};
\node[cluster, minimum size=3.000*\r cm] (C8) at (12*\r, 10*\r) {};
\node[cluster, minimum size=3.000*\r cm] (C9) at (8*\r, 5*\r) {};
\node[cluster, minimum size=3.000*\r cm] (C10) at (4*\r, 5*\r) {};
\node[cluster, minimum size=3.000*\r cm] (C11) at (4*\r, 0*\r) {};
\node[cluster, minimum size=3.000*\r cm] (C12) at (0*\r, 5*\r) {};
\node[cluster, minimum size=3.000*\r cm] (C13) at (0*\r, 10*\r) {};
\node[vertex, fill=white, shift=(0.000:0.000*\r cm), minimum size=2.0*\r cm] (0) at (C0) {\scriptsize 0};
\node[vertex, fill=white, shift=(0.000:0.000*\r cm), minimum size=2.0*\r cm] (1) at (C1) {\scriptsize 1};
\node[vertex, fill=white, shift=(0.000:0.000*\r cm), minimum size=2.0*\r cm] (2) at (C2) {\scriptsize 2};
\node[vertex, fill=white, shift=(0.000:0.000*\r cm), minimum size=2.0*\r cm] (3) at (C3) {\scriptsize 3};
\node[vertex, fill=white, shift=(0.000:0.000*\r cm), minimum size=2.0*\r cm] (4) at (C4) {\scriptsize 4};
\node[vertex, fill=white, shift=(0.000:0.000*\r cm), minimum size=2.0*\r cm] (5) at (C5) {\scriptsize 5};
\node[vertex, fill=white, shift=(0.000:0.000*\r cm), minimum size=2.0*\r cm] (6) at (C6) {\scriptsize 6};
\node[vertex, fill=white, shift=(0.000:0.000*\r cm), minimum size=2.0*\r cm] (7) at (C7) {\scriptsize 7};
\node[vertex, fill=white, shift=(0.000:0.000*\r cm), minimum size=2.0*\r cm] (8) at (C8) {\scriptsize 8};
\node[vertex, fill=white, shift=(0.000:0.000*\r cm), minimum size=2.0*\r cm] (9) at (C9) {\scriptsize 9};
\node[vertex, fill=white, shift=(0.000:0.000*\r cm), minimum size=2.0*\r cm] (10) at (C10) {\scriptsize 10};
\node[vertex, fill=white, shift=(0.000:0.000*\r cm), minimum size=2.0*\r cm] (11) at (C11) {\scriptsize 11};
\node[vertex, fill=white, shift=(0.000:0.000*\r cm), minimum size=2.0*\r cm] (12) at (C12) {\scriptsize 12};
\node[vertex, fill=white, shift=(0.000:0.000*\r cm), minimum size=2.0*\r cm] (13) at (C13) {\scriptsize 13};
\draw[cluster_arc] (C0) -- (C1);
\draw[cluster_arc] (C0) -- (C6);
\draw[cluster_arc] (C0) -- (C9);
\draw[cluster_arc] (C0) -- (C10);
\draw[cluster_arc] (C0) -- (C12);
\draw[cluster_arc] (C1) -- (C4);
\draw[cluster_arc] (C2) -- (C1);
\draw[cluster_arc] (C2) -- (C3);
\draw[cluster_arc] (C2) -- (C5);
\draw[cluster_arc] (C2) -- (C6);
\draw[cluster_arc] (C2) -- (C8);
\draw[cluster_arc] (C5) -- (C4);
\draw[cluster_arc] (C5) -- (C7);
\draw[cluster_arc] (C6) -- (C4);
\draw[cluster_arc] (C6) -- (C7);
\draw[cluster_arc] (C8) -- (C9);
\draw[cluster_arc] (C9) -- (C4);
\draw[cluster_arc] (C10) -- (C11);
\draw[cluster_arc] (C10) -- (C4);
\draw[disagreement_arc] (0) edge[bend left=20] (2);
\draw[disagreement_arc] (1) edge[bend right=20] (2);
\draw[disagreement_arc] (1) edge[bend right=10] (13);
\draw[disagreement_arc] (2) -- (0);
\draw[disagreement_arc] (3) edge[bend right=20] (2);
\draw[disagreement_arc] (4) edge[bend right=5] (0);
\draw[disagreement_arc] (4) edge[bend left=20] (1);
\draw[disagreement_arc] (4) edge[bend right=10] (2);
\draw[disagreement_arc] (4) edge[bend right=20] (5);
\draw[disagreement_arc] (4) edge[bend right=20] (6);
\draw[disagreement_arc] (4) edge[bend right=30] (8);
\draw[disagreement_arc] (4) edge[bend left=20] (9);
\draw[disagreement_arc] (4) edge[bend left=20] (10);
\draw[disagreement_arc] (4) -- (11);
\draw[disagreement_arc] (5) edge[bend right=5] (2);
\draw[disagreement_arc] (6) edge[bend right=20] (2);
\draw[disagreement_arc] (7) -- (2);
\draw[disagreement_arc] (7) -- (12);
\draw[disagreement_arc] (8) -- (0);
\draw[disagreement_arc] (8) edge[bend left=20] (2);
\draw[disagreement_arc] (9) edge[bend left=20] (0);
\draw[disagreement_arc] (9) edge[bend right=25] (2);
\draw[disagreement_arc] (9) edge[bend left=20] (8);
\draw[disagreement_arc] (10) edge[bend left=20] (0);
\draw[disagreement_arc] (11) edge[bend left=20] (10);
\end{tikzpicture} 
        \\
        \begin{tikzpicture}[yscale=0.55]
    \node[vertex] (0) at (2.1, 5.00) {\scriptsize 0};
    \node[vertex] (1) at (4.00, 3.62) {\scriptsize 1};
    \node[vertex] (3) at (2.0, 1) {\scriptsize 3};
    \node[vertex] (4) at (1.8, 4.31) {\scriptsize 4};
    \node[vertex] (5) at (3.8, 2) {\scriptsize 5};
    \node[vertex] (6) at (1, 3.3) {\scriptsize 6};
    \node[vertex] (8) at (3, 1) {\scriptsize 8};
    \node[vertex] (9) at (3.1, 3.1) {\scriptsize 9};
    \node[vertex] (2) at (1.9, 2.82) {\scriptsize 2};
    \node[vertex] (7) at (3.7, 1) {\scriptsize 7};
    \draw[-latex] (0) edge[bend left=10] (1);
    \draw[-latex] (0) edge[bend left=10] (3);
    \draw[-latex] (0) edge[bend left=10] (4);
    \draw[-latex] (0) edge[bend left=10] (5);
    \draw[-latex] (0) edge[bend right=10] (6);
    \draw[-latex] (0) edge[bend left=10] (8);
    \draw[-latex] (1) edge[bend left=10] (3);
    \draw[-latex] (1) edge[bend left=10] (4);
    \draw[-latex] (1) edge[bend left=10] (6);
    \draw[-latex] (1) edge[bend left=10] (9);
    \draw[-latex] (3) edge[bend left=10] (1);
    \draw[-latex] (3) edge[bend left=10] (4);
    \draw[-latex] (3) edge[bend left=10] (6);
    \draw[-latex] (3) edge[bend left=10] (9);
    \draw[-latex] (4) edge[bend left=10] (1);
    \draw[-latex] (4) edge[bend left=10] (2);
    \draw[-latex] (4) edge[bend left=10] (3);
    \draw[-latex] (4) edge[bend left=10] (6);
    \draw[-latex] (4) edge[bend left=10] (9);
    \draw[-latex] (5) edge[bend left=15] (0);
    \draw[-latex] (5) edge[bend right=10] (1);
    \draw[-latex] (5) edge[bend left=10] (3);
    \draw[-latex] (5) edge[bend left=10] (4);
    \draw[-latex] (5) edge[bend left=20] (6);
    \draw[-latex] (5) edge[bend left=10] (7);
    \draw[-latex] (5) edge[bend left=10] (8);
    \draw[-latex] (6) edge[bend left=10] (1);
    \draw[-latex] (6) edge[bend left=10] (3);
    \draw[-latex] (6) edge[bend left=10] (4);
    \draw[-latex] (6) edge[bend left=10] (9);
    \draw[-latex] (8) edge[bend left=5] (4);
    \draw[-latex] (9) edge[bend left=10] (1);
    \draw[-latex] (9) edge[bend left=10] (3);
    \draw[-latex] (9) edge[bend left=10] (4);
    \draw[-latex] (9) edge[bend left=10] (6);
    \draw[-latex] (2) edge[bend left=10] (1);
    \draw[-latex] (2) edge[bend left=5] (3);
    \draw[-latex] (2) edge[bend left=10] (4);
    \draw[-latex] (2) edge[bend left=10] (6);
    \draw[-latex] (2) edge[bend left=10] (9);
    \end{tikzpicture} &
        \begin{tikzpicture}
    \def\r{0.15}
    \node[cluster, minimum size=5.500*\r cm] (C0) at (11.000*\r, 10.000*\r) {};
    \node[cluster, minimum size=7.253*\r cm] (C1) at (3.000*\r, 2.000*\r) {};
    \node[cluster, minimum size=3.000*\r cm] (C2) at (3.000*\r, 10.000*\r) {};
    \node[cluster, minimum size=3.000*\r cm] (C3) at (17.000*\r, 10.000*\r) {};
    \node[cluster, minimum size=3.000*\r cm] (C4) at (11.000*\r, 2.000*\r) {};
    \node[vertex, fill=white, shift=(0.000:1.250*\r cm), minimum size=2.0*\r cm] (5) at (C0) {\scriptsize 5};
    \node[vertex, fill=white, shift=(180.000:1.250*\r cm), minimum size=2.0*\r cm] (0) at (C0) {\scriptsize 0};
    \node[vertex, fill=white, shift=(0.000:2.127*\r cm), minimum size=2.0*\r cm] (9) at (C1) {\scriptsize 9};
    \node[vertex, fill=white, shift=(72.000:2.127*\r cm), minimum size=2.0*\r cm] (4) at (C1) {\scriptsize 4};
    \node[vertex, fill=white, shift=(144.000:2.127*\r cm), minimum size=2.0*\r cm] (1) at (C1) {\scriptsize 1};
    \node[vertex, fill=white, shift=(216.000:2.127*\r cm), minimum size=2.0*\r cm] (3) at (C1) {\scriptsize 3};
    \node[vertex, fill=white, shift=(288.000:2.127*\r cm), minimum size=2.0*\r cm] (6) at (C1) {\scriptsize 6};
    \node[vertex, fill=white, shift=(0.000:0.000*\r cm), minimum size=2.0*\r cm] (2) at (C2) {\scriptsize 2};
    \node[vertex, fill=white, shift=(0.000:0.000*\r cm), minimum size=2.0*\r cm] (7) at (C3) {\scriptsize 7};
    \node[vertex, fill=white, shift=(0.000:0.000*\r cm), minimum size=2.0*\r cm] (8) at (C4) {\scriptsize 8};
    \draw[cluster_arc] (C0) -- (C1);
    \draw[cluster_arc] (C0) -- (C4);
    \draw[cluster_arc] (C2) -- (C1);
    \draw[disagreement_arc] (0) edge[bend left=10] (9);
    \draw[disagreement_arc] (4) edge[bend right=10] (2);
    \draw[disagreement_arc] (5) edge[bend left=20] (9);
    \draw[disagreement_arc] (5) edge[bend left=10] (7);
    \draw[disagreement_arc] (8) edge[bend right=10] (4);
    \end{tikzpicture} &
        \begin{tikzpicture}
    \def\r{0.15}
    \node[cluster, minimum size=5.500*\r cm] (C0) at (7.958*\r, 0*\r) {};
    \node[cluster, minimum size=8.000*\r cm] (C1) at (0*\r, 0*\r) {};
    \node[cluster, minimum size=3.000*\r cm] (C2) at (5*\r, -4*\r) {};
    \node[cluster, minimum size=3.000*\r cm] (C3) at (5*\r, 4*\r) {};
    \node[vertex, fill=white, shift=(180:1.250*\r cm), minimum size=2.0*\r cm] (0) at (C0) {\scriptsize 0};
    \node[vertex, fill=white, shift=(0:1.250*\r cm), minimum size=2.0*\r cm] (5) at (C0) {\scriptsize 5};
    \node[vertex, fill=white, shift=(0.000:2.500*\r cm), minimum size=2.0*\r cm] (1) at (C1) {\scriptsize 1};
    \node[vertex, fill=white, shift=(60.000:2.500*\r cm), minimum size=2.0*\r cm] (4) at (C1) {\scriptsize 4};
    \node[vertex, fill=white, shift=(120.000:2.500*\r cm), minimum size=2.0*\r cm] (2) at (C1) {\scriptsize 2};
    \node[vertex, fill=white, shift=(180.000:2.500*\r cm), minimum size=2.0*\r cm] (9) at (C1) {\scriptsize 9};
    \node[vertex, fill=white, shift=(240.000:2.500*\r cm), minimum size=2.0*\r cm] (3) at (C1) {\scriptsize 3};
    \node[vertex, fill=white, shift=(300.000:2.500*\r cm), minimum size=2.0*\r cm] (6) at (C1) {\scriptsize 6};
    \node[vertex, fill=white, shift=(0.000:0.000*\r cm), minimum size=2.0*\r cm] (7) at (C2) {\scriptsize 7};
    \node[vertex, fill=white, shift=(0.000:0.000*\r cm), minimum size=2.0*\r cm] (8) at (C3) {\scriptsize 8};
    \draw[disagreement_arc] (0) -- (1);
    \draw[disagreement_arc] (0) edge[in=-35,out=230] (3);
    \draw[disagreement_arc] (0) -- (4);
    \draw[disagreement_arc] (0) -- (6);
    \draw[disagreement_arc] (0) -- (8);
    \draw[disagreement_arc] (1) -- (2);
    \draw[disagreement_arc] (3) -- (2);
    \draw[disagreement_arc] (5) edge[bend left=30] (1);
    \draw[disagreement_arc] (5) edge[bend right=45] (3);
    \draw[disagreement_arc] (5) edge[bend right=20] (4);
    \draw[disagreement_arc] (5) edge[bend left=20] (6);
    \draw[disagreement_arc] (5) -- (7);
    \draw[disagreement_arc] (5) -- (8);
    \draw[disagreement_arc] (6) -- (2);
    \draw[disagreement_arc] (8) -- (4);
    \draw[disagreement_arc] (9) -- (2);
    \end{tikzpicture} &
        \begin{tikzpicture}
    \def\r{0.15}
    \node[cluster, minimum size=3.000*\r cm] (C0) at (13.5*\r, 10*\r) {};
    \node[cluster, minimum size=3.000*\r cm] (C1) at (9*\r, 7.5*\r) {};
    \node[cluster, minimum size=3.000*\r cm] (C2) at (0*\r, 12.5*\r) {};
    \node[cluster, minimum size=3.000*\r cm] (C3) at (4.5*\r, 5*\r) {};
    \node[cluster, minimum size=3.000*\r cm] (C4) at (13.5*\r, 0*\r) {};
    \node[cluster, minimum size=3.000*\r cm] (C5) at (18*\r, 12.5*\r) {};
    \node[cluster, minimum size=3.000*\r cm] (C6) at (9*\r, 2.5*\r) {};
    \node[cluster, minimum size=3.000*\r cm] (C7) at (22.5*\r, 10*\r) {};
    \node[cluster, minimum size=3.000*\r cm] (C8) at (18*\r, 5*\r) {};
    \node[cluster, minimum size=3.000*\r cm] (C9) at (4.5*\r, 10*\r) {};
    \node[vertex, fill=white, shift=(0.000:0.000*\r cm), minimum size=2.0*\r cm] (0) at (C0) {\scriptsize 0};
    \node[vertex, fill=white, shift=(0.000:0.000*\r cm), minimum size=2.0*\r cm] (1) at (C1) {\scriptsize 1};
    \node[vertex, fill=white, shift=(0.000:0.000*\r cm), minimum size=2.0*\r cm] (2) at (C2) {\scriptsize 2};
    \node[vertex, fill=white, shift=(0.000:0.000*\r cm), minimum size=2.0*\r cm] (3) at (C3) {\scriptsize 3};
    \node[vertex, fill=white, shift=(0.000:0.000*\r cm), minimum size=2.0*\r cm] (4) at (C4) {\scriptsize 4};
    \node[vertex, fill=white, shift=(0.000:0.000*\r cm), minimum size=2.0*\r cm] (5) at (C5) {\scriptsize 5};
    \node[vertex, fill=white, shift=(0.000:0.000*\r cm), minimum size=2.0*\r cm] (6) at (C6) {\scriptsize 6};
    \node[vertex, fill=white, shift=(0.000:0.000*\r cm), minimum size=2.0*\r cm] (7) at (C7) {\scriptsize 7};
    \node[vertex, fill=white, shift=(0.000:0.000*\r cm), minimum size=2.0*\r cm] (8) at (C8) {\scriptsize 8};
    \node[vertex, fill=white, shift=(0.000:0.000*\r cm), minimum size=2.0*\r cm] (9) at (C9) {\scriptsize 9};
    \draw[cluster_arc] (C0) -- (C1);
    \draw[cluster_arc] (C0) -- (C8);
    \draw[cluster_arc] (C1) -- (C3);
    \draw[cluster_arc] (C2) -- (C9);
    \draw[cluster_arc] (C3) -- (C6);
    \draw[cluster_arc] (C5) -- (C0);
    \draw[cluster_arc] (C5) -- (C7);
    \draw[cluster_arc] (C6) -- (C4);
    \draw[cluster_arc] (C8) -- (C4);
    \draw[cluster_arc] (C9) -- (C1);
    \draw[disagreement_arc] (0) edge[bend left=20] (5);
    \draw[disagreement_arc] (1) edge[bend right=20] (9);
    \draw[disagreement_arc] (3) edge[bend left=20] (1);
    \draw[disagreement_arc] (3) -- (9);
    \draw[disagreement_arc] (4) -- (1);
    \draw[disagreement_arc] (4) edge[bend left=50] (2);
    \draw[disagreement_arc] (4) edge[bend left=30] (3);
    \draw[disagreement_arc] (4) edge[bend left=20] (6);
    \draw[disagreement_arc] (4) -- (9);
    \draw[disagreement_arc] (6) -- (1);
    \draw[disagreement_arc] (6) edge[bend left=20] (3);
    \draw[disagreement_arc] (6) -- (9);
    \end{tikzpicture} 
        \\
        \begin{tikzpicture}[xscale=0.7, yscale=1.9]
\pgfdeclarelayer{bg}
\pgfsetlayers{bg,main}
\node[vertex, fill=white, fill opacity=0.8, text opacity=1] (0) at (3.09, 1.3) {\scriptsize 0};
\node[vertex, fill=white, fill opacity=0.8, text opacity=1] (1) at (0.66, 0.04) {\scriptsize 1};
\node[vertex, fill=white, fill opacity=0.8, text opacity=1] (3) at (1.14, 1.3) {\scriptsize 3};
\node[vertex, fill=white, fill opacity=0.8, text opacity=1] (4) at (4.89, 1.3) {\scriptsize 4};
\node[vertex, fill=white, fill opacity=0.8, text opacity=1] (5) at (2, 1) {\scriptsize 5};
\node[vertex, fill=white, fill opacity=0.8, text opacity=1] (6) at (4.56, 0.21) {\scriptsize 6};
\node[vertex, fill=white, fill opacity=0.8, text opacity=1] (7) at (5.00, 0.21) {\scriptsize 7};
\node[vertex, fill=white, fill opacity=0.8, text opacity=1] (9) at (1.41, 0.4) {\scriptsize 9};
\node[vertex, fill=white, fill opacity=0.8, text opacity=1] (10) at (2.44, 0.21) {\scriptsize 10};
\node[vertex, fill=white, fill opacity=0.8, text opacity=1] (11) at (2.92, 0.04) {\scriptsize 11};
\node[vertex, fill=white, fill opacity=0.8, text opacity=1] (12) at (0.66, 0.38) {\scriptsize 12};
\node[vertex, fill=white, fill opacity=0.8, text opacity=1] (14) at (2.78, 0.6) {\scriptsize 14};
\node[vertex, fill=white, fill opacity=0.8, text opacity=1] (17) at (1, 0.8) {\scriptsize 17};
\node[vertex, fill=white, fill opacity=0.8, text opacity=1] (2) at (4.4, 0) {\scriptsize 2};
\node[vertex, fill=white, fill opacity=0.8, text opacity=1] (8) at (4, 1.25) {\scriptsize 8};
\node[vertex, fill=white, fill opacity=0.8, text opacity=1] (13) at (2.92, 0.9) {\scriptsize 13};
\node[vertex, fill=white, fill opacity=0.8, text opacity=1] (15) at (4.67, 0.7) {\scriptsize 15};
\node[vertex, fill=white, fill opacity=0.8, text opacity=1] (16) at (1.59, 0.04) {\scriptsize 16};
\node[vertex, fill=white, fill opacity=0.8, text opacity=1] (18) at (3.40, 0.21) {\scriptsize 18};
\begin{pgfonlayer}{bg}
\draw[-latex] (0) edge[bend left=10] (1);
\draw[-latex] (0) edge[bend left=10] (3);
\draw[-latex] (0) edge[bend left=10] (4);
\draw[-latex] (0) edge[bend left=10] (5);
\draw[-latex] (0) edge[bend left=10] (6);
\draw[-latex] (0) edge[bend left=10] (7);
\draw[-latex] (0) edge[bend left=10] (9);
\draw[-latex] (0) edge[bend left=10] (10);
\draw[-latex] (0) edge[bend left=10] (11);
\draw[-latex] (0) edge[bend left=10] (12);
\draw[-latex] (0) edge[bend left=10] (14);
\draw[-latex] (0) edge[bend left=10] (17);
\draw[-latex] (4) edge[bend left=10] (1);
\draw[-latex] (4) edge[bend left=10] (3);
\draw[-latex] (4) edge[bend left=10] (5);
\draw[-latex] (4) edge[bend left=10] (7);
\draw[-latex] (4) edge[bend left=10] (9);
\draw[-latex] (4) edge[bend left=10] (12);
\draw[-latex] (4) edge[bend left=10] (13);
\draw[-latex] (4) edge[bend left=10] (14);
\draw[-latex] (4) edge[bend left=10] (15);
\draw[-latex] (4) edge[bend left=10] (18);
\draw[-latex] (5) edge[bend left=10] (1);
\draw[-latex] (5) edge[bend left=10] (2);
\draw[-latex] (5) edge[bend left=10] (3);
\draw[-latex] (5) edge[bend left=10] (4);
\draw[-latex] (5) edge[bend left=10] (6);
\draw[-latex] (5) edge[bend left=10] (7);
\draw[-latex] (5) edge[bend left=10] (8);
\draw[-latex] (5) edge[bend left=10] (9);
\draw[-latex] (5) edge[bend left=10] (10);
\draw[-latex] (5) edge[bend left=10] (11);
\draw[-latex] (5) edge[bend left=10] (12);
\draw[-latex] (5) edge[bend left=10] (13);
\draw[-latex] (5) edge[bend left=10] (14);
\draw[-latex] (5) edge[bend left=10] (15);
\draw[-latex] (5) edge[bend left=10] (16);
\draw[-latex] (5) edge[bend left=10] (17);
\draw[-latex] (5) edge[bend left=10] (18);
\draw[-latex] (6) edge[bend left=10] (1);
\draw[-latex] (6) edge[bend left=10] (2);
\draw[-latex] (6) edge[bend left=10] (3);
\draw[-latex] (6) edge[bend left=10] (4);
\draw[-latex] (6) edge[bend left=10] (5);
\draw[-latex] (6) edge[bend left=10] (7);
\draw[-latex] (6) edge[bend left=10] (8);
\draw[-latex] (6) edge[bend left=10] (9);
\draw[-latex] (6) edge[bend left=10] (10);
\draw[-latex] (6) edge[bend left=10] (11);
\draw[-latex] (6) edge[bend left=10] (12);
\draw[-latex] (6) edge[bend left=10] (14);
\draw[-latex] (6) edge[bend left=10] (15);
\draw[-latex] (6) edge[bend left=10] (16);
\draw[-latex] (6) edge[bend left=10] (18);
\draw[-latex] (10) edge[bend left=10] (1);
\draw[-latex] (10) edge[bend left=10] (2);
\draw[-latex] (10) edge[bend left=10] (3);
\draw[-latex] (10) edge[bend left=10] (4);
\draw[-latex] (10) edge[bend left=10] (5);
\draw[-latex] (10) edge[bend left=10] (6);
\draw[-latex] (10) edge[bend left=10] (7);
\draw[-latex] (10) edge[bend left=10] (8);
\draw[-latex] (10) edge[bend left=10] (9);
\draw[-latex] (10) edge[bend left=10] (11);
\draw[-latex] (10) edge[bend left=10] (12);
\draw[-latex] (10) edge[bend left=10] (13);
\draw[-latex] (10) edge[bend left=10] (14);
\draw[-latex] (10) edge[bend left=10] (15);
\draw[-latex] (10) edge[bend left=10] (16);
\draw[-latex] (10) edge[bend left=10] (18);
\draw[-latex] (12) edge[bend left=10] (1);
\draw[-latex] (12) edge[bend left=10] (3);
\draw[-latex] (12) edge[bend left=10] (5);
\draw[-latex] (12) edge[bend left=10] (6);
\draw[-latex] (12) edge[bend left=10] (10);
\draw[-latex] (12) edge[bend left=10] (14);
\draw[-latex] (12) edge[bend left=10] (15);
\draw[-latex] (2) edge[bend left=10] (1);
\draw[-latex] (2) edge[bend left=10] (3);
\draw[-latex] (2) edge[bend left=10] (4);
\draw[-latex] (2) edge[bend left=10] (5);
\draw[-latex] (2) edge[bend left=10] (6);
\draw[-latex] (2) edge[bend left=10] (7);
\draw[-latex] (2) edge[bend left=10] (8);
\draw[-latex] (2) edge[bend left=10] (9);
\draw[-latex] (2) edge[bend left=10] (10);
\draw[-latex] (2) edge[bend left=10] (11);
\draw[-latex] (2) edge[bend left=10] (12);
\draw[-latex] (2) edge[bend left=10] (13);
\draw[-latex] (2) edge[bend left=10] (14);
\draw[-latex] (2) edge[bend left=10] (15);
\draw[-latex] (2) edge[bend left=10] (16);
\draw[-latex] (2) edge[bend left=10] (17);
\draw[-latex] (2) edge[bend left=10] (18);
\draw[-latex] (8) edge[bend left=10] (3);
\draw[-latex] (8) edge[bend left=10] (5);
\draw[-latex] (8) edge[bend left=10] (6);
\draw[-latex] (8) edge[bend left=10] (7);
\draw[-latex] (8) edge[bend left=10] (9);
\draw[-latex] (8) edge[bend left=10] (10);
\draw[-latex] (8) edge[bend left=10] (11);
\draw[-latex] (8) edge[bend left=10] (12);
\draw[-latex] (8) edge[bend left=10] (13);
\draw[-latex] (8) edge[bend left=10] (14);
\draw[-latex] (8) edge[bend left=10] (15);
\draw[-latex] (8) edge[bend left=10] (16);
\draw[-latex] (8) edge[bend left=10] (18);
\end{pgfonlayer}
\end{tikzpicture} &
        \begin{tikzpicture}[xscale=0.95]
\def\r{0.15}
\node[cluster, minimum size=3.000*\r cm] (C0) at (28*\r, 15*\r) {};
\node[cluster, minimum size=3.000*\r cm] (C1) at (32*\r, 6*\r) {};
\node[cluster, minimum size=7.253*\r cm] (C2) at (33*\r, 20*\r) {};
\node[cluster, minimum size=3.000*\r cm] (C3) at (28*\r, 6*\r) {};
\node[cluster, minimum size=3.000*\r cm] (C4) at (22*\r, 15*\r) {};
\node[cluster, minimum size=3.000*\r cm] (C5) at (23*\r, 10*\r) {};
\node[cluster, minimum size=3.000*\r cm] (C6) at (26.5*\r, 10*\r) {};
\node[cluster, minimum size=3.000*\r cm] (C7) at (37*\r, 10*\r) {};
\node[cluster, minimum size=3.000*\r cm] (C8) at (30*\r, 10*\r) {};
\node[cluster, minimum size=3.000*\r cm] (C9) at (12.5*\r, 10*\r) {};
\node[cluster, minimum size=3.000*\r cm] (C10) at (24*\r, 6*\r) {};
\node[cluster, minimum size=3.000*\r cm] (C11) at (19.5*\r, 10*\r) {};
\node[cluster, minimum size=3.000*\r cm] (C12) at (37*\r, 14.5*\r) {};
\node[cluster, minimum size=3.000*\r cm] (C13) at (33.5*\r, 10*\r) {};
\node[cluster, minimum size=3.000*\r cm] (C14) at (16*\r, 10*\r) {};
\node[vertex, fill=white, shift=(0.000:0.000*\r cm), minimum size=2.0*\r cm] (0) at (C0) {\scriptsize 0};
\node[vertex, fill=white, shift=(0.000:0.000*\r cm), minimum size=2.0*\r cm] (1) at (C1) {\scriptsize 1};
\node[vertex, fill=white, shift=(0.000:2.127*\r cm), minimum size=2.0*\r cm] (2) at (C2) {\scriptsize 2};
\node[vertex, fill=white, shift=(72.000:2.127*\r cm), minimum size=2.0*\r cm] (5) at (C2) {\scriptsize 5};
\node[vertex, fill=white, shift=(144.000:2.127*\r cm), minimum size=2.0*\r cm] (6) at (C2) {\scriptsize 6};
\node[vertex, fill=white, shift=(216.000:2.127*\r cm), minimum size=2.0*\r cm] (8) at (C2) {\scriptsize 8};
\node[vertex, fill=white, shift=(288.000:2.127*\r cm), minimum size=2.0*\r cm] (10) at (C2) {\scriptsize 10};
\node[vertex, fill=white, shift=(0.000:0.000*\r cm), minimum size=2.0*\r cm] (3) at (C3) {\scriptsize 3};
\node[vertex, fill=white, shift=(0.000:0.000*\r cm), minimum size=2.0*\r cm] (4) at (C4) {\scriptsize 4};
\node[vertex, fill=white, shift=(0.000:0.000*\r cm), minimum size=2.0*\r cm] (7) at (C5) {\scriptsize 7};
\node[vertex, fill=white, shift=(0.000:0.000*\r cm), minimum size=2.0*\r cm] (9) at (C6) {\scriptsize 9};
\node[vertex, fill=white, shift=(0.000:0.000*\r cm), minimum size=2.0*\r cm] (11) at (C7) {\scriptsize 11};
\node[vertex, fill=white, shift=(0.000:0.000*\r cm), minimum size=2.0*\r cm] (12) at (C8) {\scriptsize 12};
\node[vertex, fill=white, shift=(0.000:0.000*\r cm), minimum size=2.0*\r cm] (13) at (C9) {\scriptsize 13};
\node[vertex, fill=white, shift=(0.000:0.000*\r cm), minimum size=2.0*\r cm] (14) at (C10) {\scriptsize 14};
\node[vertex, fill=white, shift=(0.000:0.000*\r cm), minimum size=2.0*\r cm] (15) at (C11) {\scriptsize 15};
\node[vertex, fill=white, shift=(0.000:0.000*\r cm), minimum size=2.0*\r cm] (16) at (C12) {\scriptsize 16};
\node[vertex, fill=white, shift=(0.000:0.000*\r cm), minimum size=2.0*\r cm] (17) at (C13) {\scriptsize 17};
\node[vertex, fill=white, shift=(0.000:0.000*\r cm), minimum size=2.0*\r cm] (18) at (C14) {\scriptsize 18};
\draw[cluster_arc] (C0) -- (C5);
\draw[cluster_arc] (C0) -- (C6);
\draw[cluster_arc] (C0) -- (C7);
\draw[cluster_arc] (C0) -- (C8);
\draw[cluster_arc] (C0) -- (C13);
\draw[cluster_arc] (C2) -- (C4);
\draw[cluster_arc] (C2) -- (C7);
\draw[cluster_arc] (C2) -- (C12);
\draw[cluster_arc] (C4) -- (C5);
\draw[cluster_arc] (C4) -- (C6);
\draw[cluster_arc] (C4) -- (C8);
\draw[cluster_arc] (C4) -- (C9);
\draw[cluster_arc] (C4) -- (C11);
\draw[cluster_arc] (C4) -- (C14);
\draw[cluster_arc] (C8) -- (C1);
\draw[cluster_arc] (C8) -- (C10);
\draw[cluster_arc] (C8) -- (C3);
\draw[disagreement_arc] (0) edge[bend right=25] (5);
\draw[disagreement_arc] (0) edge[bend left=20] (6);
\draw[disagreement_arc] (0) edge[bend right=20] (10);
\draw[disagreement_arc] (0) -- (4);
\draw[disagreement_arc] (2) edge[bend left=10] (17);
\draw[disagreement_arc] (4) edge[out=60,in=165] (5);
\draw[disagreement_arc] (5) edge[out=-15,in=70] (17);
\draw[disagreement_arc] (6) edge[bend right=20] (13);
\draw[disagreement_arc] (8) -- (1);
\draw[disagreement_arc] (8) -- (2);
\draw[disagreement_arc] (8) edge[bend right=20] (4);
\draw[disagreement_arc] (12) -- (5);
\draw[disagreement_arc] (12) edge[out=90,in=240] (6);
\draw[disagreement_arc] (12) -- (10);
\draw[disagreement_arc] (12) edge[bend left=30] (15);
\end{tikzpicture} &
        \begin{tikzpicture}
\def\r{0.15}
\node[cluster, minimum size=8.000*\r cm] (C2) at (0*\r, 0*\r) {};
\node[cluster, minimum size=3.000*\r cm] (C0) at (0:8*\r) {};
\node[cluster, minimum size=3.000*\r cm] (C1) at (1/13*360:8*\r) {};
\node[cluster, minimum size=3.000*\r cm] (C3) at (2/13*360:8*\r) {};
\node[cluster, minimum size=3.000*\r cm] (C4) at (3/13*360:8*\r) {};
\node[cluster, minimum size=3.000*\r cm] (C5) at (4/13*360:8*\r) {};
\node[cluster, minimum size=3.000*\r cm] (C6) at (5/13*360:8*\r) {};
\node[cluster, minimum size=3.000*\r cm] (C7) at (6/13*360:8*\r) {};
\node[cluster, minimum size=3.000*\r cm] (C8) at (7/13*360:8*\r) {};
\node[cluster, minimum size=3.000*\r cm] (C9) at (8/13*360:8*\r) {};
\node[cluster, minimum size=3.000*\r cm] (C10) at (9/13*360:8*\r) {};
\node[cluster, minimum size=3.000*\r cm] (C11) at (10/13*360:8*\r) {};
\node[cluster, minimum size=3.000*\r cm] (C12) at (11/13*360:8*\r) {};
\node[cluster, minimum size=3.000*\r cm] (C13) at (12/13*360:8*\r) {};
\node[vertex, fill=white, shift=(0.000:0.000*\r cm), minimum size=2.0*\r cm] (0) at (C0) {\scriptsize 0};
\node[vertex, fill=white, shift=(0.000:0.000*\r cm), minimum size=2.0*\r cm] (1) at (C1) {\scriptsize 1};
\node[vertex, fill=white, shift=(0.000:2.500*\r cm), minimum size=2.0*\r cm] (2) at (C2) {\scriptsize 2};
\node[vertex, fill=white, shift=(60.000:2.500*\r cm), minimum size=2.0*\r cm] (5) at (C2) {\scriptsize 5};
\node[vertex, fill=white, shift=(120.000:2.500*\r cm), minimum size=2.0*\r cm] (6) at (C2) {\scriptsize 6};
\node[vertex, fill=white, shift=(180.000:2.500*\r cm), minimum size=2.0*\r cm] (8) at (C2) {\scriptsize 8};
\node[vertex, fill=white, shift=(240.000:2.500*\r cm), minimum size=2.0*\r cm] (10) at (C2) {\scriptsize 10};
\node[vertex, fill=white, shift=(300.000:2.500*\r cm), minimum size=2.0*\r cm] (12) at (C2) {\scriptsize 12};
\node[vertex, fill=white, shift=(0.000:0.000*\r cm), minimum size=2.0*\r cm] (3) at (C3) {\scriptsize 3};
\node[vertex, fill=white, shift=(0.000:0.000*\r cm), minimum size=2.0*\r cm] (4) at (C4) {\scriptsize 4};
\node[vertex, fill=white, shift=(0.000:0.000*\r cm), minimum size=2.0*\r cm] (7) at (C5) {\scriptsize 7};
\node[vertex, fill=white, shift=(0.000:0.000*\r cm), minimum size=2.0*\r cm] (9) at (C6) {\scriptsize 9};
\node[vertex, fill=white, shift=(0.000:0.000*\r cm), minimum size=2.0*\r cm] (11) at (C7) {\scriptsize 11};
\node[vertex, fill=white, shift=(0.000:0.000*\r cm), minimum size=2.0*\r cm] (13) at (C8) {\scriptsize 13};
\node[vertex, fill=white, shift=(0.000:0.000*\r cm), minimum size=2.0*\r cm] (14) at (C9) {\scriptsize 14};
\node[vertex, fill=white, shift=(0.000:0.000*\r cm), minimum size=2.0*\r cm] (15) at (C10) {\scriptsize 15};
\node[vertex, fill=white, shift=(0.000:0.000*\r cm), minimum size=2.0*\r cm] (16) at (C11) {\scriptsize 16};
\node[vertex, fill=white, shift=(0.000:0.000*\r cm), minimum size=2.0*\r cm] (17) at (C12) {\scriptsize 17};
\node[vertex, fill=white, shift=(0.000:0.000*\r cm), minimum size=2.0*\r cm] (18) at (C13) {\scriptsize 18};
\draw[disagreement_arc] (0) -- (1);
\draw[disagreement_arc] (0) -- (5);
\draw[disagreement_arc] (0) -- (6);
\draw[disagreement_arc] (0) -- (10);
\draw[disagreement_arc] (0) -- (12);
\draw[disagreement_arc] (0) -- (3);
\draw[disagreement_arc] (0) -- (4);
\draw[disagreement_arc] (0) -- (7);
\draw[disagreement_arc] (0) -- (9);
\draw[disagreement_arc] (0) -- (11);
\draw[disagreement_arc] (0) -- (14);
\draw[disagreement_arc] (0) -- (17);
\draw[disagreement_arc] (2) -- (1);
\draw[disagreement_arc] (2) -- (3);
\draw[disagreement_arc] (2) -- (4);
\draw[disagreement_arc] (2) -- (7);
\draw[disagreement_arc] (2) -- (9);
\draw[disagreement_arc] (2) -- (11);
\draw[disagreement_arc] (2) -- (13);
\draw[disagreement_arc] (2) -- (14);
\draw[disagreement_arc] (2) -- (15);
\draw[disagreement_arc] (2) -- (16);
\draw[disagreement_arc] (2) -- (17);
\draw[disagreement_arc] (2) -- (18);
\draw[disagreement_arc] (4) -- (1);
\draw[disagreement_arc] (4) -- (5);
\draw[disagreement_arc] (4) -- (12);
\draw[disagreement_arc] (4) -- (3);
\draw[disagreement_arc] (4) -- (7);
\draw[disagreement_arc] (4) -- (9);
\draw[disagreement_arc] (4) -- (13);
\draw[disagreement_arc] (4) -- (14);
\draw[disagreement_arc] (4) -- (15);
\draw[disagreement_arc] (4) -- (18);
\draw[disagreement_arc] (5) -- (1);
\draw[disagreement_arc] (5) -- (3);
\draw[disagreement_arc] (5) -- (4);
\draw[disagreement_arc] (5) -- (7);
\draw[disagreement_arc] (5) -- (9);
\draw[disagreement_arc] (5) -- (11);
\draw[disagreement_arc] (5) -- (13);
\draw[disagreement_arc] (5) -- (14);
\draw[disagreement_arc] (5) -- (15);
\draw[disagreement_arc] (5) -- (16);
\draw[disagreement_arc] (5) -- (17);
\draw[disagreement_arc] (5) -- (18);
\draw[disagreement_arc] (6) -- (1);
\draw[disagreement_arc] (6) -- (3);
\draw[disagreement_arc] (6) -- (4);
\draw[disagreement_arc] (6) -- (7);
\draw[disagreement_arc] (6) -- (9);
\draw[disagreement_arc] (6) -- (11);
\draw[disagreement_arc] (6) -- (14);
\draw[disagreement_arc] (6) -- (15);
\draw[disagreement_arc] (6) -- (16);
\draw[disagreement_arc] (6) -- (18);
\draw[disagreement_arc] (8) -- (2);
\draw[disagreement_arc] (8) -- (3);
\draw[disagreement_arc] (8) -- (7);
\draw[disagreement_arc] (8) -- (9);
\draw[disagreement_arc] (8) -- (11);
\draw[disagreement_arc] (8) -- (13);
\draw[disagreement_arc] (8) -- (14);
\draw[disagreement_arc] (8) -- (15);
\draw[disagreement_arc] (8) -- (16);
\draw[disagreement_arc] (8) -- (18);
\draw[disagreement_arc] (10) -- (1);
\draw[disagreement_arc] (10) -- (3);
\draw[disagreement_arc] (10) -- (4);
\draw[disagreement_arc] (10) -- (7);
\draw[disagreement_arc] (10) -- (9);
\draw[disagreement_arc] (10) -- (11);
\draw[disagreement_arc] (10) -- (13);
\draw[disagreement_arc] (10) -- (14);
\draw[disagreement_arc] (10) -- (15);
\draw[disagreement_arc] (10) -- (16);
\draw[disagreement_arc] (10) -- (18);
\draw[disagreement_arc] (12) -- (1);
\draw[disagreement_arc] (12) -- (2);
\draw[disagreement_arc] (12) -- (8);
\draw[disagreement_arc] (12) -- (3);
\draw[disagreement_arc] (12) -- (14);
\draw[disagreement_arc] (12) -- (15);
\end{tikzpicture} &
        \begin{tikzpicture}
\def\r{0.15}
\node[cluster, minimum size=3.000*\r cm] (C0) at (25*\r, 15*\r) {};
\node[cluster, minimum size=3.000*\r cm] (C1) at (16.5*\r, 6*\r) {};
\node[cluster, minimum size=3.000*\r cm] (C2) at (27*\r, 24*\r) {};
\node[cluster, minimum size=3.000*\r cm] (C3) at (20*\r, 6*\r) {};
\node[cluster, minimum size=3.000*\r cm] (C4) at (14*\r, 15*\r) {};
\node[cluster, minimum size=3.000*\r cm] (C5) at (22*\r, 22*\r) {};
\node[cluster, minimum size=3.000*\r cm] (C6) at (19*\r, 15*\r) {};
\node[cluster, minimum size=3.000*\r cm] (C7) at (15*\r, 10*\r) {};
\node[cluster, minimum size=3.000*\r cm] (C8) at (12*\r, 18*\r) {};
\node[cluster, minimum size=3.000*\r cm] (C9) at (18.5*\r, 10*\r) {};
\node[cluster, minimum size=3.000*\r cm] (C10) at (17*\r, 20*\r) {};
\node[cluster, minimum size=3.000*\r cm] (C11) at (29*\r, 10*\r) {};
\node[cluster, minimum size=3.000*\r cm] (C12) at (22*\r, 10*\r) {};
\node[cluster, minimum size=3.000*\r cm] (C13) at (8*\r, 10*\r) {};
\node[cluster, minimum size=3.000*\r cm] (C14) at (24*\r, 6*\r) {};
\node[cluster, minimum size=3.000*\r cm] (C15) at (27.5*\r, 6*\r) {};
\node[cluster, minimum size=3.000*\r cm] (C16) at (25.5*\r, 10*\r) {};
\node[cluster, minimum size=3.000*\r cm] (C17) at (32.5*\r, 10*\r) {};
\node[cluster, minimum size=3.000*\r cm] (C18) at (11.5*\r, 10*\r) {};
\node[vertex, fill=white, shift=(0.000:0.000*\r cm), minimum size=2.0*\r cm] (0) at (C0) {\scriptsize 0};
\node[vertex, fill=white, shift=(0.000:0.000*\r cm), minimum size=2.0*\r cm] (1) at (C1) {\scriptsize 1};
\node[vertex, fill=white, shift=(0.000:0.000*\r cm), minimum size=2.0*\r cm] (2) at (C2) {\scriptsize 2};
\node[vertex, fill=white, shift=(0.000:0.000*\r cm), minimum size=2.0*\r cm] (3) at (C3) {\scriptsize 3};
\node[vertex, fill=white, shift=(0.000:0.000*\r cm), minimum size=2.0*\r cm] (4) at (C4) {\scriptsize 4};
\node[vertex, fill=white, shift=(0.000:0.000*\r cm), minimum size=2.0*\r cm] (5) at (C5) {\scriptsize 5};
\node[vertex, fill=white, shift=(0.000:0.000*\r cm), minimum size=2.0*\r cm] (6) at (C6) {\scriptsize 6};
\node[vertex, fill=white, shift=(0.000:0.000*\r cm), minimum size=2.0*\r cm] (7) at (C7) {\scriptsize 7};
\node[vertex, fill=white, shift=(0.000:0.000*\r cm), minimum size=2.0*\r cm] (8) at (C8) {\scriptsize 8};
\node[vertex, fill=white, shift=(0.000:0.000*\r cm), minimum size=2.0*\r cm] (9) at (C9) {\scriptsize 9};
\node[vertex, fill=white, shift=(0.000:0.000*\r cm), minimum size=2.0*\r cm] (10) at (C10) {\scriptsize 10};
\node[vertex, fill=white, shift=(0.000:0.000*\r cm), minimum size=2.0*\r cm] (11) at (C11) {\scriptsize 11};
\node[vertex, fill=white, shift=(0.000:0.000*\r cm), minimum size=2.0*\r cm] (12) at (C12) {\scriptsize 12};
\node[vertex, fill=white, shift=(0.000:0.000*\r cm), minimum size=2.0*\r cm] (13) at (C13) {\scriptsize 13};
\node[vertex, fill=white, shift=(0.000:0.000*\r cm), minimum size=2.0*\r cm] (14) at (C14) {\scriptsize 14};
\node[vertex, fill=white, shift=(0.000:0.000*\r cm), minimum size=2.0*\r cm] (15) at (C15) {\scriptsize 15};
\node[vertex, fill=white, shift=(0.000:0.000*\r cm), minimum size=2.0*\r cm] (16) at (C16) {\scriptsize 16};
\node[vertex, fill=white, shift=(0.000:0.000*\r cm), minimum size=2.0*\r cm] (17) at (C17) {\scriptsize 17};
\node[vertex, fill=white, shift=(0.000:0.000*\r cm), minimum size=2.0*\r cm] (18) at (C18) {\scriptsize 18};
\draw[cluster_arc] (C0) -- (C7);
\draw[cluster_arc] (C0) -- (C9);
\draw[cluster_arc] (C0) -- (C11);
\draw[cluster_arc] (C0) -- (C12);
\draw[cluster_arc] (C0) -- (C17);
\draw[cluster_arc] (C2) -- (C5);
\draw[cluster_arc] (C4) -- (C7);
\draw[cluster_arc] (C4) -- (C9);
\draw[cluster_arc] (C4) -- (C12);
\draw[cluster_arc] (C4) -- (C13);
\draw[cluster_arc] (C4) -- (C18);
\draw[cluster_arc] (C5) -- (C10);
\draw[cluster_arc] (C5) -- (C17);
\draw[cluster_arc] (C6) -- (C7);
\draw[cluster_arc] (C6) -- (C9);
\draw[cluster_arc] (C6) -- (C11);
\draw[cluster_arc] (C6) -- (C12);
\draw[cluster_arc] (C6) -- (C16);
\draw[cluster_arc] (C6) -- (C18);
\draw[cluster_arc] (C8) -- (C6);
\draw[cluster_arc] (C8) -- (C13);
\draw[cluster_arc] (C10) -- (C4);
\draw[cluster_arc] (C10) -- (C8);
\draw[cluster_arc] (C12) -- (C1);
\draw[cluster_arc] (C12) -- (C3);
\draw[cluster_arc] (C12) -- (C14);
\draw[cluster_arc] (C12) -- (C15);
\draw[disagreement_arc] (0) edge[bend right=30] (4);
\draw[disagreement_arc] (0) -- (5);
\draw[disagreement_arc] (0) -- (6);
\draw[disagreement_arc] (0) -- (10);
\draw[disagreement_arc] (0) edge[bend left=16] (15);
\draw[disagreement_arc] (4) edge[bend right=10] (5);
\draw[disagreement_arc] (5) edge[bend right=20] (2);
\draw[disagreement_arc] (6) -- (2);
\draw[disagreement_arc] (6) -- (4);
\draw[disagreement_arc] (6) -- (5);
\draw[disagreement_arc] (6) edge[bend right=20] (8);
\draw[disagreement_arc] (6) -- (10);
\draw[disagreement_arc] (8) edge[out=275,in=139] (1);
\draw[disagreement_arc] (8) edge[bend left=30] (5);
\draw[disagreement_arc] (8) edge[bend right=20] (10);
\draw[disagreement_arc] (10) edge[bend left=30] (2);
\draw[disagreement_arc] (10) edge[bend right=20] (5);
\draw[disagreement_arc] (12) -- (5);
\draw[disagreement_arc] (12) edge[bend left=20] (6);
\draw[disagreement_arc] (12) edge[bend right=20] (10);
\end{tikzpicture} \\
    \end{tabular}
    \caption{Depicted above on the left are the ego networks from  \Cref{tab:clustering-vs-ordering}. 
    Next to these networks, from left to right, are an optimal preorder, optimal clustering and optimal partial order.
    Disagreements with the social network are highlighted by red arcs.
    It can be observed that the preorder has less disagreement compared to the clustering and partial order.
    For the network at the top, for instance, the preorder has only two disagreeing arcs while the clustering has 14 disagreeing arcs and the partial order has 20 disagreeing arcs.}
    \label{fig:clustering-vs-ordering}
\end{figure}
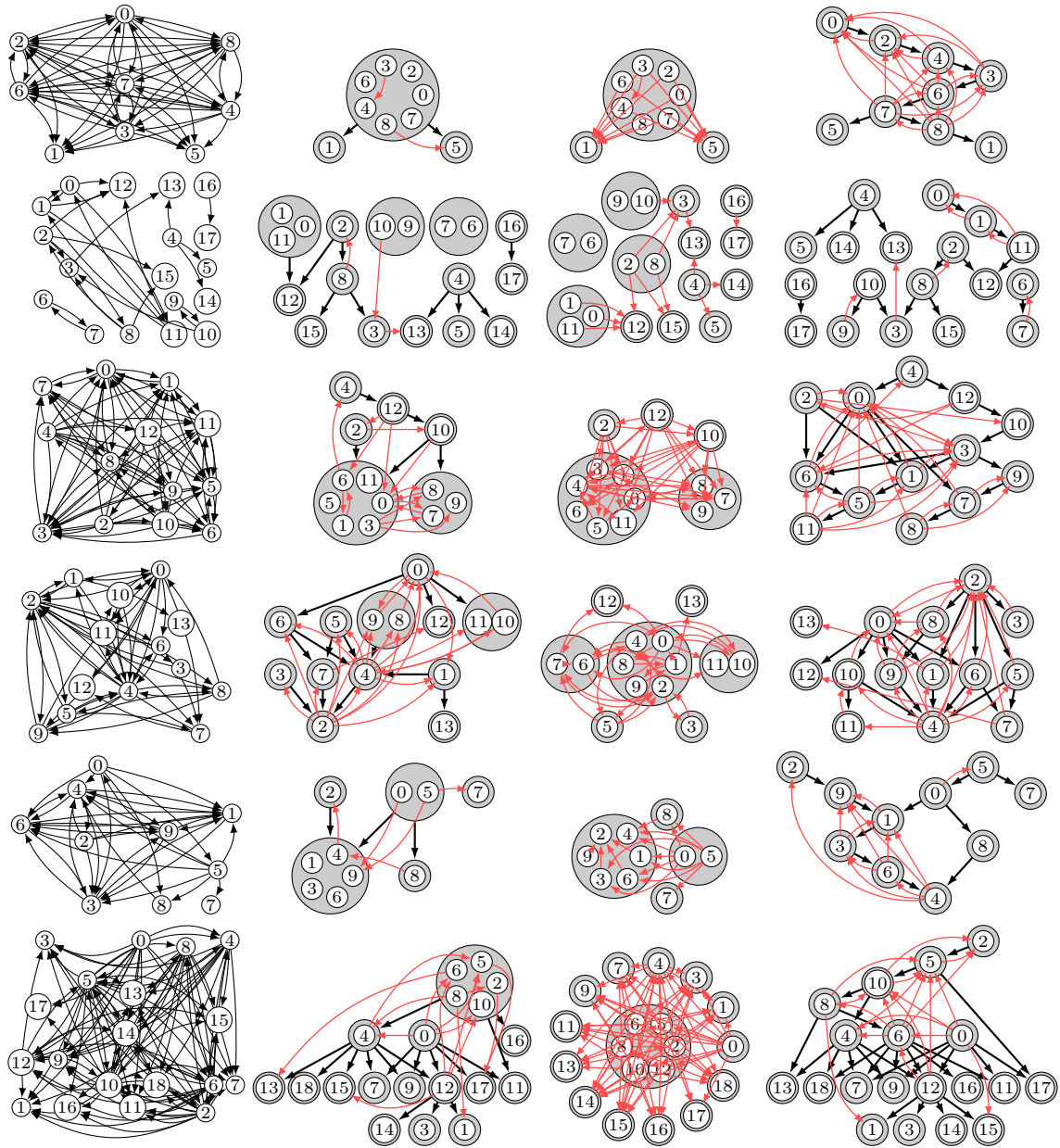

\begin{table}[b]
    \caption{Exemplary results for the ego networks with IDs 734493, 15053535, 104324908, 126067398, 215824411, and 101560853443212199687.
    Reported are the number of nodes ($|V|$) and edges ($|E|$), the optimal values of the preordering ($\lesssim$), clustering ($\sim$) and partial ordering ($\leq$) problems, and the transitivity index $T$.}
    \label{tab:clustering-vs-ordering}
    \begin{center}
	\small   
    \begin{tabular}{lrrrrrrr}
    \toprule
    Platform & $|V|$ & $|E|$ & $\lesssim$ & $\sim$ & $\leq$ & T \\
    \midrule
    Twitter &  9 &  54 & 52 & 40 & 34 & 0.963 \\
    Twitter & 18 &  26 & 23 & 12 & 19 & 0.885 \\
    Twitter & 13 &  77 & 62 & 34 & 51 & 0.805 \\
    Twitter & 14 &  51 & 27 & 20 & 26 & 0.529 \\
    Twitter & 10 &  40 & 35 & 24 & 28 & 0.875 \\
    Google+ & 19 & 107 & 92 & 24 & 87 & 0.885 \\
    \bottomrule
    \end{tabular}
    \end{center}
    \vspace{-0.1in}
\end{table}

Objective values and runtimes for the $434$ out of $973$ instances that have been solved within a time limit of $500\,$s are visualized in \Cref{fig:clustering-vs-ordering-quantitative}.
The objective values are highest for the preordering problem and lowest for the clustering problem.
The values obtained by the successive approach are significantly smaller than those obtained by solving the preordering problem directly.
This can be understood as an advantage of solving a joint clustering and partial ordering problem over solving these problems successively.
On average, the runtime of the clustering problem is the lowest, while that of the partial ordering problem is the highest.
The runtime of successive clustering and partial ordering is only marginally greater than that of just clustering because the instance of the partial ordering problem on clusters is much smaller than the instance of the partial ordering problem on individual accounts.

\subsubsection{Optimal solutions and bounds}

Within the time limit of $500\,$s, the ILP algorithm finds optimal solutions to the instances of the preordering problem for $499$ out of $973$ Twitter ego networks.
For these instances, the average transitivity index $T$ is $0.580$.
For $186$ of these instances, the LP bound according to \Cref{sec:lp-algorithms} is tight.
For the $313$ instances for which the LP bound is not tight, the gap between the LP bound and the optimal solution is reduced by $30.4\%$ on average by the odd closed walk inequalities; cf.~\Cref{fig:gap} (left).

We compute feasible solutions that are not guaranteed to be optimal with the heuristic algorithms \algacronym{GDC}, \algacronym{GDC+GAI}, \algacronym{GDC+GM}, \algacronym{GAF}, \algacronym{GAF+GAI}, \algacronym{GAF+GM}.
The values of these feasible solutions are lower bounds on the transitivity index.
Complementary to this, \algacronym{LP} computes upper bounds on the transitivity index.
Differences between these lower and upper bounds are shown in \Cref{fig:gap} (right).
On average, \algacronym{GAF+GM} comes closest to the LP bound with a mean difference of $0.0318$.
Still, for example, on 121 instances, the solution computed by \algacronym{GDC+GAI} is strictly better than that of \algacronym{GAF+GM}.

On those $474$ instances for which the ILP could not be solved to optimality, the median gap after the time limit of $500\,$s is reached is $0.146$. 
In contrast to this, the median gap between the best lower bound and the upper bound computed with \algacronym{OCW} is $0.068$.
While it is \np-hard to compute the transitivity index exactly, this result shows that it can be estimated closely in polynomial time for these instances.
Moreover, this shows that the discussed heuristic algorithms together with the bounds obtained by \algacronym{OCW} can result in much better solutions than those found by a canonical ILP solver within the time limit.

\begin{figure}
\centering   
\begin{tikzpicture}[baseline=0]
\small
\begin{axis}[
	xlabel=Fraction of LP gap closed,
	ylabel=Instances,
	width=0.45\columnwidth,
	height=0.3\columnwidth,
]
\pgfplotstableread{data/closed-gap.csv}\mydata;
\pgfplotstablecreatecol[create col/expr={\thisrowno{0}/100}]{a}\mydata
\addplot[hist] table [y index=1] {\mydata};
\draw[very thick] (0.304, 0) -- (0.304, 107);
\end{axis}
\end{tikzpicture}
\hspace{5ex}
\begin{tikzpicture}[baseline=0]
\small
\begin{axis}[
	ylabel={$T$ gap},
	xtick={1, 2, 3, 4, 5, 6},
	xticklabels={\algacronym{GDC}, \algacronym{GDC+GAI}, \algacronym{GDC+GM}, \algacronym{GAF}, \algacronym{GAF+GAI}, \algacronym{GAF+GM}},
	typeset ticklabels with strut,
	xticklabel style={rotate=30, anchor=north east, font=\footnotesize},
	boxplot/draw direction=y,
	width=0.5\columnwidth,
	height=0.3\columnwidth,
	mark size=0.15ex,
]
\pgfplotstableread[col sep=comma]{data/twitter.csv}\mydata;
\addplot[boxplot] table[y expr=(\thisrow{LP} - \thisrow{GDC}) / \thisrow{|E|}] {\mydata};
\addplot[boxplot] table[y expr=(\thisrow{LP} - \thisrow{GDC+GAI}) /  \thisrow{|E|}] {\mydata};
\addplot[boxplot] table[y expr=(\thisrow{LP} - \thisrow{GDC+GM}) /  \thisrow{|E|}] {\mydata};
\addplot[boxplot] table[y expr=(\thisrow{LP} - \thisrow{GAF}) /  \thisrow{|E|}] {\mydata};
\addplot[boxplot] table[y expr=(\thisrow{LP} - \thisrow{GAF+GAI}) /  \thisrow{|E|}] {\mydata};
\addplot[boxplot] table[y expr=(\thisrow{LP} - \thisrow{GAF+GM}) /  \thisrow{|E|}] {\mydata};
\end{axis}
\end{tikzpicture}
\caption{Shown on the left is the fraction of the LP gap that is closed by including odd closed walk inequalities.
On average, $30.4\%$ of the gap is closed.
This evaluation is restricted to those $313$ instances from the Twitter dataset for which the optimal solution is computed within a $500\,$s time limit and the canonical LP bound is not tight.
Shown on the right are the differences between the lower bounds computed by the heuristic algorithms and the LP bound on all $973$ instances.
The median differences are $0.1081$, $0.0544$, $0.0386$, $0.0379$, $0.0349$, $0.0318$ for the six algorithms, respectively.}
\label{fig:gap}
\end{figure}
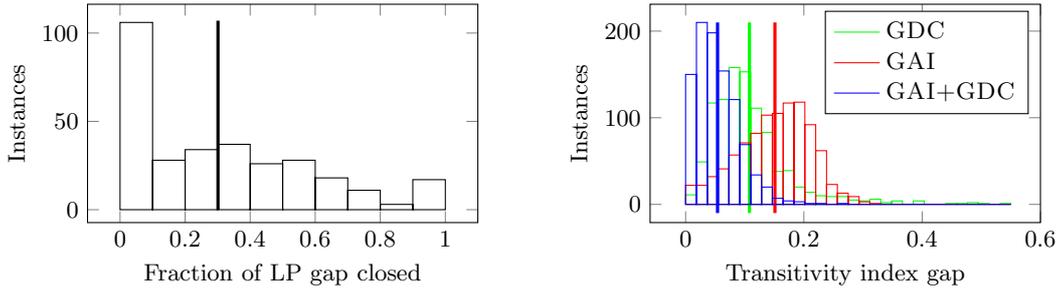

\subsubsection{Efficiency of heursitic algorithms}

We analyze the empirical runtime of the heuristic algorithms on the much larger Google+ ego networks.
The largest network contains $|V| =\;$4,938 nodes, which results in an instance of the preordering problem of size $|V_2| =\;$24,378,906.
The runtimes are shown in \Cref{fig:local-search}.
The fastest algorithm is \algacronym{GDC}.
This is expected as its runtime is linear in the input size (cf. \Cref{sec:di-cut-approx}).
Even for the largest instance, it terminates in less than one second.
The \algacronym{GDC+GAI} algorithm, which runs \algacronym{GDC} first and then \algacronym{GAI}, also terminates within the time limit for all instances. 
In practice, \algacronym{GAI} is much faster than \algacronym{GF} and \algacronym{GAF} because it may insert multiple arcs per iteration leading to fewer iterations overall.
While the algorithm \algacronym{GAF+GM}, on average, computes slightly better solutions than \algacronym{GDC+GAI}, algorithm \algacronym{GDC+GAI} is significantly faster on large instances.
The average lower bound on the transitivity index computed by \algacronym{GDC+GAI} is $0.61$, i.e., the Google+ ego networks exhibit a slightly higher transitivity compared to the Twitter ego networks.

\begin{figure}
    \centering
    \begin{tikzpicture}
\small
\begin{axis}[
	xlabel=$|V_2|$,
	ylabel={Runtime [s]},
	xmode=log,
	ymode=log,
	log basis y = 10,
	log basis x = 10,
	width=0.5\columnwidth,
	height=0.4\columnwidth,
	legend pos=north west,	
	legend cell align={left},
	legend pos=outer north east,
	mark size=0.15ex,
	legend style={font=\scriptsize}
]
\pgfplotstableread[col sep=comma]{data/gplus.csv}\mydata;
\addplot [color=cyan, only marks, mark=*, restrict y to domain=-5:2, restrict x to domain=3.3:8]
	table [x expr=\thisrow{|V|}*(\thisrow{|V|}-1)/2, y=GDC T] {\mydata};
\addplot [color=green, only marks, mark=*, restrict y to domain=-5:2, restrict x to domain=3.3:8]
	table [x expr=\thisrow{|V|}*(\thisrow{|V|}-1)/2, y=GDC+GAI T] {\mydata};
\addplot [color=orange, only marks, mark=*, restrict y to domain=-5:2, restrict x to domain=3.3:8]
	table [x expr=\thisrow{|V|}*(\thisrow{|V|}-1)/2, y=GDC+GM T] {\mydata};
\addplot [color=red, only marks, mark=*, restrict y to domain=-5:2, restrict x to domain=3.3:8]
	table [x expr=\thisrow{|V|}*(\thisrow{|V|}-1)/2, y=GAF T] {\mydata};
	\addplot [color=blue, only marks, mark=*, restrict y to domain=-5:2, restrict x to domain=3.3:8]
	table [x expr=\thisrow{|V|}*(\thisrow{|V|}-1)/2, y=GAF+GAI T] {\mydata};
	\addplot [color=violet, only marks, mark=*, restrict y to domain=-5:2, restrict x to domain=3.3:8]
	table [x expr=\thisrow{|V|}*(\thisrow{|V|}-1)/2, y=GAF+GM T] {\mydata};
\legend{\algacronym{GDC}, \algacronym{GDC+GAI}, \algacronym{GDC+GM}, \algacronym{GAF}, \algacronym{GAF+GAI}, \algacronym{GAF+GM}}
\end{axis}
\end{tikzpicture}
    \caption{Runtimes of six heuristic algorithms as a function of input size $|V_2|$.
    Results are shown for runs terminating within $100$s.}
    \label{fig:local-search}
\end{figure}

\subsection{Twitter interaction network of the 117th US Congress}

In this section we perform a qualitative analysis on a Twitter network of members of the 117th Congress of the United States.
This network has been published by \citet{fink2023centrality} who have measured for each pair of Twitter accounts of Members of Congress the likelihood of one account reacting to a contribution of the other account.
The thusly obtained digraph consists of 475 nodes and 13,289 directed edges with non-zero likelihood.
In contrast to the ego networks from the previous section, this network is not anonymized.
This allows us to qualitatively interpret the results.
We construct an instance of the preordering problem by defining arc values $c$ to be the reaction likelihood minus $0.01$ (a subjective choice).
With this, the arc values reward arcs from $i$ to $j$ to be part of the preorder if $i$ reacts on more than $0.01$ of the contributions of $j$ and penalizes it otherwise.

For this instance of the preordering problem, the optimal solution value is $\maxpo(c) = 11.875$ and an optimal solution is depicted in \Cref{fig:congress-preorder}.
It is found by our ILP algorithm in $4.9\,$s.
The upper bound is $B(c) = 15.017$, which results in a transitivity index $T(c) = 0.791$.
In visualization of the preorder in \Cref{fig:congress-preorder}, nodes are arranged in layers and all arcs are directed from left to right. 
The further a node is to the left, the more the corresponding account reacts to others. 
The further a node is to the right, the more reactions the corresponding account gets from others.
One can see that accounts react more to accounts from members of the same political party.
There are two layers with particularly many nodes. 
This is expected as dicuts are transitive (cf. \Cref{sec:di-cut-approx}).
However, there are many nodes not contained in the two large layers (i.e., the optimal solution is not close to a dicut).
This is an indication that the interactions exhibit non-trivial transitivity.
The node with the most incoming arcs (i.e., the account that gets the most reactions) is that of Kevin McCarthy (labeled with \textit{a} in \Cref{fig:congress-preorder}), the Republican minority leader in the House of Representatives of the 117th US Congress.
There is a cluster of three accounts that is isolated from the rest (labeled with \textit{b} in \Cref{fig:congress-preorder}). 
These accounts belong to Tom Carper, Chris Coons and Lisa Blunt Rochester, who are the three Members of Congress from Delaware.
The node labeled with \textit{c} in \Cref{fig:congress-preorder} corresponds to the account of Bernie Sanders, who is a party independent member and thus expected to have fewer incoming arcs.

If we require the preorder to be symmetric (i.e., an equivalence relation/clustering), the optimal value is $5.256$.
This value being much smaller than $\maxpo(c) = 11.875$ suggests that preorders are a better fit to this network than equivalence relations.

\begin{figure}
    \centering
    \input{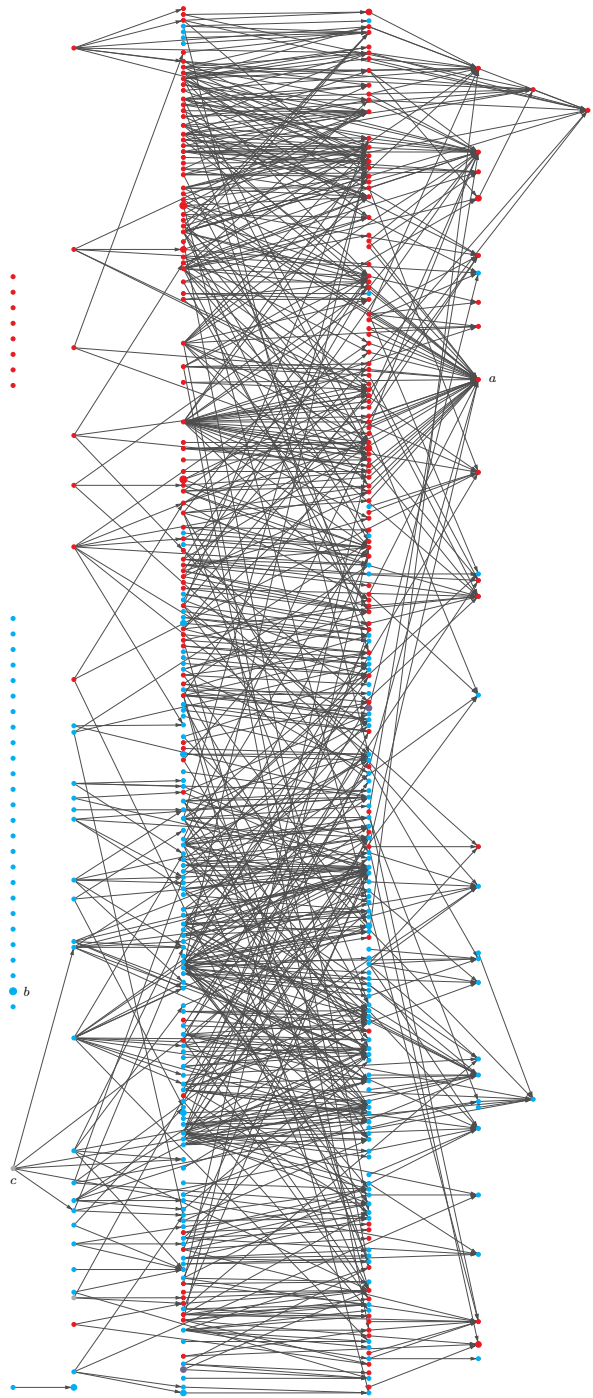}
    \caption{Preorder of Twitter accounts of members of the 117th US Congress.
    Each node corresponds to a cluster of accounts and the size of the node is proportional to the size of the cluster (most clusters contain just a single account.). 
    The color of the node corresponds to the party membership (Democrats: blue, Republicans: red, Other: gray), where clusters with mixed membership have a mixed color.
    The depicted edges are those of the transitive reduction of the partial order on the clusterings.
    The layout is computed using the dot algorithm implemented in Graphviz \citep{ellson2004graphviz}.}
    \label{fig:congress-preorder}
\end{figure}

\section{Conclusion}
\label{sec:conclusion}

The \np-hard preordering problem is both a joint relaxation and a hybrid of the clique partitioning problem, from which the symmetry constraint is dropped, and the partial ordering problem, from which the antisymmetry constraint is dropped.
A 4-approximation, computable in linear time, is given by the maximum dicut of the subgraph of all arcs with a positive value. 
Complementary to a local search algorithm by \citet{bocker2009optimal} that we call \emph{greedy arc fixation}, heuristics that we call \emph{greedy arc insertion} and \emph{greedy moving} are defined analogously to algorithms known for clustering.
The ILP formulation \eqref{eq:objective}--\eqref{eq:triangle} of the preordering problem is tightened by odd closed walk inequalities that define facets of the preordering polytope.

In an application of the preordering problem and the algorithms described above to the task of jointly clustering and ordering the accounts of the social networks published by \citet{fink2023centrality} and \citet{leskovec2012learning}, we observe qualitatively that preorders output by the algorithms differ from equivalence relations found by clustering, partial orders found by partial ordering, and preorders found by successive clustering and partial ordering.
In addition, we observe quantitatively that high-quality solutions are found in practice by greedy arc fixation despite it not providing any approximation guarantee. 
Alternatively, by first constructing a 4-approximation in the form of a dicut and then performing a local search by greedy arc insertion initialized with the dicut also finds high-quality solutions.
Comparing upper and lower bounds, we observe that a substantial fraction of the LP gap is closed by adding odd closed walk inequalities.

\bibliographystyle{plainnat}
\bibliography{references}

\appendix
\section{Appendix}
\subsection{Discussion of reflexivity} \label{sec:discussion-reflexivity}

The feasible solutions to the ILP formulation \eqref{eq:objective}--\eqref{eq:triangle} of the preordering problem are precisely the characteristic vectors of the \emph{reflexive reductions} of the preorders on $V$.
There are two main reasons for our preference of this formulation over a version that would use the characteristic vectors of the preorders directly.

Firstly, we are interested in approximating solutions to the preordering problem.
The quality of an approximation algorithm is usually defined as the maximum \emph{multiplicative} error this algorithm commits over all instances of the problem.
By assigning values also to the reflexive pairs and requiring a solution $A \subseteq V^{2}$ to be reflexive and transitive, the total value of every solution includes all values of transitive pairs.
In particular, changing the values of the reflexive pairs merely shifts the values of all solutions by the same amount.
Shifting the values of all solutions by a constant amount does not change the set of optimal solutions.
However, it affects the ratio of values of solutions and thus how well one solution approximates another solution.
Restricting our attention to the reflexive reductions of preorders is further justified by a lemma proven by \citet{wakabayashi1998complexity} stating that the decision version of the problem is \np-complete for the reflexive version if and only if it is \np-complete for the irreflexive version.

Secondly, the convex hull of the characteristic vectors of transitive and reflexive relations is a polytope contained in a $|V_2|$-dimensional subspace of $\mathbb{R}^{V^2}$ (namely the subspace defined by the equalities $x_{ii} = 1$ for $i \in V$).
By restricting the polytope to the space $\mathbb{R}^{V_2}$ (i.e., the convex hull of the feasible solutions to the ILP \eqref{eq:objective}--\eqref{eq:triangle}), this complication is avoided.

\subsection{Derandomization of the max-dicut algorithm} \label{sec:de-randomization}

It is well known that the randomized $4$-approximation algorithm for max-dicut can be derandomized \citep{halperin2001combinatorial,bar2012online}.
For completeness, we reproduce the derandomization due to Theorem 3 of \citet{bar2012online} below.
Additionally, we show that this algorithm can be implemented to run in $\order(|V|^2)$.

Let $G=(V, E)$ be a directed graph with edge weights $w:E \to \mathbb{R}_{\geq 0}$.
In the following, let $S, \bar{S} \subseteq V$ with $S \cap \bar{S} = \emptyset$.
They define the partial dicut $\{ij \in E \mid i \in S, j \in \bar{S}\}$.
We denote the expected value of the dicut that is obtained by randomly assigning elements in $V \setminus (S \cup \bar{S})$ to either $S$ or $\bar{S}$ by $E_{S\bar{S}}$.
It holds that
\begin{align*}
    E_{S\bar{S}} = 
        &   \sum_{\substack{ij \in E: \\ i \in S, j \in \bar{S}}} w_{ij} 
          + \sum_{\substack{ij \in E: i \in S, \\ j \in V \setminus (S \cup \bar{S})}} \frac{w_{ij}}{2} \\
        & + \sum_{\substack{ij \in E: j \in \bar{S}, \\ i \in V \setminus (S \cup \bar{S})}} \frac{w_{ij}}{2} 
          + \sum_{\substack{ij \in E: \\ i,j \in V \setminus (S \cup \bar{S})}} \frac{w_{ij}}{4} \enspace . 
\end{align*}
For $k \in V \setminus (S \cup \bar{S})$, let $g_k$ denote the change in the expected value after assigning $k$ to $S$.
A simple calculation yields
\begin{align*}
    g_k = & E_{S\cup\{k\}\bar{S}} - E_{S\bar{S}} = 
        \sum_{\substack{j \in \bar{S}: \\ kj \in E}} \frac{w_{kj}}{2}
        - \sum_{\substack{i \in S: \\ ik \in E}} \frac{w_{ik}}{2} \\
        & + \sum_{\substack{j \in V \setminus (S \cup \{k\} \cup \bar{S}): \\ kj \in E}} \frac{w_{kj}}{4}
        - \sum_{\substack{i \in V \setminus (S \cup \{k\} \cup \bar{S}): \\ ik \in E}} \frac{w_{ik}}{4}
\end{align*}
and 
\begin{align*}
    E_{S\bar{S}\cup\{k\}} &- E_{S\bar{S}} = - g_k \enspace .
\end{align*}
If $g_k$ is non-negative, assigning $k$ to $S$ does not decrease the expected value; otherwise, assigning $k$ to $\bar{S}$ does not decrease the expected value.

When assigning all elements randomly, the expected value is $E_{\emptyset\emptyset} = \sum_{ij \in E} \frac{w_{ij}}{4}$.
By iteratively assigning elements $k \in V \setminus (S \cup \bar{S})$ to $S$ or $\bar{S}$ based on the sign of $g_k$, a dicut is obtained with value at least $E_{\emptyset\emptyset}$ and, therefore, a $4$-approximation to the max-dicut problem.

\Cref{alg:max-di-cut} implements this algorithm with two additional improvements:
Firstly, in each iteration, the element $k \in V \setminus (S \cup \bar{S})$ for which $|g_k|$ is greatest is selected.
This maximizes the expected value after assigning $k$.
Secondly, the values $g_k$ for $k \in V \setminus (S \cup \bar{S})$ are not computed from scratch in each iteration.
Instead, they are computed once for $S=\bar{S}=\emptyset$, and after each iteration, these values are updated.
With this, each iteration runs in time $\order(|V|)$ instead of $\order(|V|^2)$ that would be required to compute all $g$ values according to the formula above.

\subsection{Example instance}
\label{sec:gaf-no-approximation-no-ties}

In this section we describe an instance of the preordering problem where the greedy arc fixation algorithm (cf.~\Cref{sec:gaf}) does not achieve a $4$-approximation.
In contrast to \Cref{fig:bbk-no-approximation}, on this instance no ties occur throughout the execution of the algorithm and thus no unfavorable tie-breaking.
The arc values of such an instance are given by the matrix below.
\[
    \begin{pmatrix}
        0 & -9976 & -20009 & -10060 & -20099 & 10033 \\
        -9908 & 0 & 10025 & -19965 & 9996 & 6 \\
        10049 & -10048 & 0 & 10018 & -19971 & -20025 \\
        -19943 & -9914 & -10076 & 0 & -19984 & 10014 \\
        9950 & -91 & -19988 & 10021 & 0 & -19934 \\
        -20025 & 10086 & 9947 & -10032 & -10035 & 0
    \end{pmatrix}
\]
This matrix was generated by uniformly sampling values from $\{-2, -1, 0, 1, 2\}$ in order to obtain a difficult instance and then scaling these values by $10000$ and adding uniformly sampled noise from $\{-100, \dots, 100\}$ in order to obtain unique values.
The optimal preorder has an objective value of $50130$.
The preorder found by the greedy arc fixation algorithm has an objective value $10280$, yielding an approximation ratio of $~4.88 > 4$.
The greedy arc fixation algorithm performs the following sequence of fixations:
$x_{\{5, 4\}} = 0$, 
$x_{\{0, 4\}} = 0$, 
$x_{\{0, 1\}} = 0$, 
$x_{\{5, 2\}} = 0$, 
$x_{\{3, 1\}} = 0$, 
$x_{\{5, 1\}} = 0$, 
$x_{\{0, 2\}} = 0$, 
$x_{\{3, 4\}} = 0$, 
$x_{\{3, 2\}} = 0$, 
$x_{\{5, 0\}} = 0$, 
$x_{\{5, 3\}} = 0$, 
$x_{\{2, 1\}} = 0$, 
$x_{\{2, 5\}} = 1$, 
$x_{\{4, 2\}} = 0$, 
$x_{\{1, 3\}} = 1$, 
$x_{\{3, 0\}} = 0$, 
$x_{\{1, 5\}} = 1$, 
$x_{\{4, 1\}} = 0$, 
$x_{\{1, 0\}} = 1$, 
$x_{\{2, 4\}} = 0$, 
$x_{\{4, 5\}} = 1$, 
$x_{\{0, 3\}} = 0$, 
$x_{\{2, 0\}} = 1$, 
$x_{\{0, 5\}} = 1$, 
$x_{\{1, 2\}} = 1$, 
$x_{\{4, 3\}} = 1$, 
$x_{\{2, 3\}} = 1$, 
$x_{\{3, 5\}} = 1$, 
$x_{\{1, 4\}} = 1$, 
$x_{\{4, 0\}} = 1$.
Each fixation is the unique best fixation and no ties occur.

\end{document}